\numberwithin{equation}{section}
\title{Stochastic Gradient Descent in Hilbert Scales:\\ Smoothness, Preconditioning and Earlier Stopping}
\author{Nicole M\"ucke\thanks{MATH+ Junior Research Group 
\emph{Mathematical Foundations of Data Science}}  \\
Technical University  Berlin \\
\texttt{muecke@math.tu-berlin.de} 
\and
Enrico Reiss\\ 
University of Potsdam \\ 
\texttt{enreiss@uni-potsdam.de}
}
\date{\today}
\begin{document}

\maketitle

\begin{abstract}
Stochastic Gradient Descent (SGD) has become the method of choice for solving a broad range of machine learning problems. 
However, some of its learning properties are still not fully understood. 
We consider least squares learning in reproducing kernel Hilbert spaces (RKHSs) and extend the classical SGD analysis to a learning setting in Hilbert scales, 
including Sobolev spaces and Diffusion spaces on compact Riemannian manifolds. 
We show that even for well-specified models, violation of a traditional  benchmark smoothness assumption has a tremendous effect  
on the learning rate. 
In addition, we show that for miss-specified models, preconditioning 
in an appropriate Hilbert scale helps to reduce the number of iterations, i.e. allowing for \emph{earlier stopping}. 
\end{abstract}

\section{Introduction}


When solving non-parametric least-squares problems in an RKHS we face the problem that the unknown solution may not have the expected 
smoothness  (regularity) implied by the kernel. 
Then the question arises whether the use of such \emph{mis-specified} kernels still allows for good reconstructions yielding errors of optimal order. 
Although it is a commonly accepted fact that the regularity inherent in the solution has an impact on accuracy and convergence of learning 
algorithms, there are only poor precise mathematical investigations in the framework of learning in RKHSs using SGD. 
Mathematically, \emph{smoothness} 
can be expressed in various different ways. Classically, the concept of \emph{source conditions} proved to be useful, expressing the target function as 
element of the domain of a differential operator, see e.g. \cite{engl96}, hence it can be \emph{differentiated} and is therefore considered to have a certain 
degree of smoothness. This approach can be extended to more 
\emph{general source conditions}, where the objective function belongs to the domain of a more general unbounded operator.   
In learning theory, a similar concept is also now state of the art.  
Here, a minimizer is considered as smooth if it belongs to the range of a function of the integral operator associated to the kernel of the RKHS. 
Several results showing optimal rates of convergence for different regularization algorithms 
are available under such a condition. In \cite{optimalratesRLS},  \cite{rastogi2017optimal}, \cite{fischer2017sobolev}, 
fast optimal rates of convergence have been established for kernel ridge regression. 
The authors in \cite{bauer2007regularization}, \cite{blanchard2018optimal} consider general spectral algorithms under different assumptions and  
these results have been extended to more general Hilbert spaces than RKHSs in \cite{Lin18}, to name just a few. 
However, none of these works investigates the learning properties of SGD under more general smoothness properties of the solution, i.e. going beyond the kernel integral 
operator.  

The literature about convergence properties in non-parametric statistical learning of SGD is also vast and different flavors have been investigated. 
Starting with \cite{SmaYao06} and then further investigated in \cite{YinPon08,YaoTar14,orabona, NemJudLan09}, one-pass SGD is considered, 
that is, each data point is used only once. An 
additional twist here is to establish learning rates by averaging the iterates, which is obtained in \cite{RSS12,HK14}, dating back to ideas 
in \cite{R88}, \cite{PolJu92}. 
More recently, averaging was shown to lead to larger, possibly
constant, step-sizes, see \cite{BacMou13,DieuBa16,DieFlaBac17}, but preventing optimal rates in the high smoothness regime. 
This has been alleviated in  \cite{mucke2019beating} by considering 
tail-averaging  with mini-batching. The role of mini-batching has also been considered and shown to
potentially lead to linear parallelization speedups in \cite{Cotter11} (and references therein). Additionally, there are some results investigating 
the role of multiple passes for learning \cite{RosVil15, HarRecSin16,LinCamRos16}. The authors in \cite{LinRos17} derive optimal results for 
multipass SGD  considering also the 
effect of mini-batching. Following the approach in this latter  paper,  multipass SGD with averaging was analyzed by  \cite{pillaud2018statistical} with no minibatching.

We extend those results to learning with SGD, expressing smoothness of the target in terms of a general \emph{Hilbert scale}, being a nested sequence of 
Hilbert spaces, generated by an unbounded operator, e.g. a differential operator. Thus, those spaces are natural candidates for representing a certain degree of regularity 
of the objective function. 
Historically, regularization in Hilbert scales was introduced in the context of statistical inverse problems in Hilbert spaces in e.g. 
\cite{natterer1984error}, \cite{mathe2007error}, \cite{tautenhahn1996error}, \cite{nair2005regularization}, \cite{mair1994tikhonov} 
to improve convergence rates if the objective function 
is very smooth. In contrast, if the objective function has only poor smoothness properties it turns out that it is sufficient to regularize in a weaker norm 
to obtain optimal rates of convergence, see also \cite{egger2005preconditioning}. A first attempt to introduce regularization in Hilbert scales 
in the framework of (linear) inverse learning theory has been accomplished in \cite{rastogi2020inverse} where general spectral algorithms are investigated, 
but excluding SGD performed directly in an RKHS. 

We fill these gaps and investigate the learning properties of tail-averaged Gradient Descent and tail-averaged mini-batch SGD with constant step-size in Hilbert scales under different 
smoothness assumptions. For \emph{well-specified models}, i.e. the solution 
belongs to the RKHS, we show that violation of a given \emph{benchmark smoothness} slows down convergence.  
Additionally we show that \emph{smoothness promoting SGD}, i.e. mapping the kernel into a smaller Hilbert space consisting of more regular functions 
improves convergence if the target is sufficiently smooth.   
For \emph{mis-specified models}, i.e. the objective function does not belong to the RKHS and is less regular than the kernel, we show that \emph{preconditioning}, that is, performing SGD in 
a larger Hilbert space with weaker norm, allows to reduce the number of iterations necessary for achieving the minimum. We furthermore investigate the interplay of all parameters involved, 
i.e. the step-size, mini-batch size, 
stopping time and smoothness. 
In this generality, our results are new in the learning theory framework of non-parametric regression with SGD. As a byproduct, we also show the benefit of tail-averaging 
in the high smoothness regime. Previous results are recovered as a special case.  
\paragraph{Outline} 
In Section \ref{sec-learning-in-hilbert-scales} we introduce the traditional learning setting in RKHSs. 
An introduction into the theory of Hilbert scales is given Section \ref{sec-learning-scales-theory} and we present our tail-averaged mini-batch SGD recursion. 
Section \ref{sec-smoothness-promoting} is devoted to presenting and discussing our main findings.  
Some numerical illustrations are given Section \ref{sec-numerics}. 
All proofs are deferred to the Appendix. 


\section{Learning in Reproducing Kernel Hilbert Spaces: State of the Art}
\label{sec-learning-in-hilbert-scales}
We consider a joint probability distribution $\rho$  on the input/ output pair $\cX \times \cY$, with $\cY\subseteq [-M,M]$, for some $M>0$. 
By $\rho_X$ we denote the marginal measure on $\cX$ and $\rho(\cdot|x)$ is the conditional distribution on $\cY$ given $x \in \cX$. 
In least squares regression, we aim at minimizing the \emph{expected risk}
\begin{equation}
\label{eq:min-prob}
 \inf_{f \in \cH} \cE(f) \;, \quad \cE(f)=\int_{\cX \times \cY}(f(x)-y)^2\; d\rho(x,y) \;, 
\end{equation} 
where $\cH$ is an appropriate \emph{hypothesis space}. We focus in particular on $\cH$ being a  
\emph{reproducing kernel Hilbert space} (RKHS), see e.g. \cite{aronszajn1950theory}, \cite{StCh08} 
arising from a kernel $K: \cX \times \cX \to \mbr$, satisfying: 
\begin{assumption}
\label{ass:bounded}
Suppose that $\sup_{x,x' \in \cX}K(x,x') \leq \kappa^2$, for some  $\kappa < \infty$, $\rho_X$-almost surely. 
\end{assumption}
Note that under this Assumption, the space $\cH$ can be continuously embedded into $L^2(\cX, \rho_X)$, the space of square integrable functions on 
$\cX$ with respect to $\rho_X$. In particular, we let 
$\cS: \cH \hookrightarrow   L^2(\cX, \rho_X)$ denote the inclusion. 
Recall that the function minimizing the expected risk over the set of all measurable functions is the \emph{regression function}, given by 
\[ f_\rho(x) = \int_{\cY} y\;d\rho(y|x) \;,  \quad \rho_X- a.s. \;. \]
A solution  $f_\cH$ of \eqref{eq:min-prob} is given by the projection of $f_\rho$ onto the closure of  $Ran(\cS)$ in $L^2(\cX, \rho_X)$.

In Section \ref{sec-smoothness-promoting} we investigate the learning properties of an approximate SGD minimizer 
$\bar f_\bz$ based on i.i.d. data $\bz=(\bx, \by)=((x_i, y_i))_{i=1,...,n} \in (\cX \times \cY)^n$, 
under different regularity assumptions on $f_\cH$.  
Important for our analysis will be the covariance operator $\cT:=\cS^*\cS : \cH \to \cH$, given by 
\begin{equation}\label{eq:covariance}
  \cT   = \int_\cX  \inner{\cdot , K_x}_\cH K_x \; \rho_X(dx) \;,
\end{equation}  
with $K_x=(x, \cdot)$. 
Under Assumption \ref{ass:bounded}, the operator $\cT$ is positive and trace class, hence compact and satisfies $||\cT|| \leq trace(\cT)\leq \kappa^2 $. 
For more details, we refer to Appendix \ref{app-notation}. 

It is well known that smoothness of $f_\cH$ influences accuracy and convergence of any learning algorithm for solving \eqref{eq:min-prob}. 
In learning theory, the regularity of $f_\cH$ is measured by means of a  \emph{source condition}, see e.g.  
\cite{bauer2007regularization}, \cite{rastogi2017optimal}. 
In the most general version, it is assumed that there is an \emph{index function}, i.e. a non-decreasing continuous function $\varphi: (0, ||\cT||] \to \mbr_+$ 
such that $f_\cH=\varphi(\cT)h$, with \emph{source element} $h \in \cH$. That is, $f_\cH \in Ran(\varphi(\cT))$. 


The choice $\varphi(t)=t^r$ for $r\geq0$ 
is called \emph{H\"older source condition} and is of special interest.  Optimal learning rates for the excess risk for different algorithms 
such as regularized least squares \cite{optimalratesRLS}, \cite{fischer2017sobolev}, gradient 
descent or SGD  \cite{blanchard2018optimal}, \cite{DieuBa16}, \cite{pillaud2018statistical}, \cite{mucke2019beating} are of order $\cO(n^{-\frac{2r +1}{2(r+1)}})$ under such a 
condition. 
Consequently, the smoother $f_\cH$ (i.e. the larger $r$ is), the faster is the learning rate. These rates can further be 
enhanced to $\cO(n^{-\frac{2r +1}{2r+1+\nu}})$, for some $\nu \in (0,1]$, if the \emph{effective dimension} 
\begin{equation}
\label{eq:eff-dim-def}
\cN_{\cT}(\lam ):=trace\left( \cT(\cT + \lam)^{-1}\right) \;, \quad \lam >0\;,
\end{equation}
satisfies $\cN_{\cT}(\lam ) \lesssim \lam^{-\nu}$. This key quantity, a.k.a. \emph{capacity assumption} has been introduced in \cite{Zhang03} and has been 
applied since then in a variety of papers for deriving fast learning rates for regularization algorithms, e.g. \cite{optimalratesRLS, fischer2017sobolev, Lin18, blanchard2018optimal}. 

However, up to now, smoothness is only expressed as a source condition involving a function of the kernel covariance operator $\cT$, i.e. learning rates 
for more general smoothness assumptions for $f_\cH$ are completely missing. 
We fill this gap by analyzing the  regularization properties of SGD in a general Hilbert scale induced by an unbounded operator, 
see Section \ref{sec-learning-scales-theory},  
and generalize the results from  \cite{DieuBa16}, \cite{pillaud2018statistical}, \cite{mucke2019beating} for SGD where only H\"older conditions are considered.  
In particular, this approach allows now to also consider source conditions arising from other integral (or covariance) operators 
and thus having a broader applicability. This has not been 
analyzed for SGD before.




\section{Hilbert Scales: Theory and Examples}
\label{sec-learning-scales-theory}

As a preparatory  step we briefly review the theory of Hilbert scales from \cite{engl96}, see also \cite{krein1966scales}. 
To this end, we assume $\cH$ to be a Hilbert space  and we let $L$ be a densely defined, self-adjoint, linear, unbounded, and strictly positive operator 
$L: \cD(L) \subset \cH \to \cH$, that is, $\cD(L) = \cD(L^*)$ is dense in $\cH$ with  
\[ \inner{Lf, g}_{\cH} =  \inner{f, L^*g}_{\cH}\;, \quad f,g \in \cD(L) \;. \]
Note that strict positivity implies that $L^{-1}: \cH \to \cH$ exists as a bounded operator. By spectral theory, the 
operator $L^s: \cD(L^s) \to \cH$ is well-defined for any $s \in \mbr$. In particular, for $s\geq 0$ we define the Hilbert spaces 
\[ \cH_s := \cD(L^s)  \]
with inner product and norm 
\begin{equation}
\label{eq:def-norm}
\inner{f,g}_{s}:= \inner{L^sf, L^sg}_{\cH}\;, \quad ||f||_s:=||L^sf||_{\cH}  \;,
\end{equation}
for any $f,g \in \cD(L^s)$. 
We introduce furthermore the dual spaces $\cH_{-s}:= \cH_s^*$ 
and the sequence $(\cH_s)_{s \in \mbr}$ is called the \emph{Hilbert scale induced by $L$}. 
In particular, $\cH_0=\cH$ and for any $-\infty < s \leq t < \infty $ we have $\cH_t \hookrightarrow  \cH_s$ 
with dense and continuous embeddings. 

Hilbert scales have been introduced in the context of inverse problems where the operator $L$ is typically a differential operator and the 
spaces $(\cH_s)_{s}$ are e.g. Sobolev spaces. We provide some examples which are relevant for learning in RKHSs.

\begin{example}[{\bf Scale of general Reproducing Kernel Hilbert Spaces}]
{\rm 
Consider an RKHS $\cH = \cH_K$ with kernel $K: \cX \times \cX \to \mbr$. Let $L : \cD(L)\subset \cH_K \to \cH_K$ be as above. 
Since $L$ is strictly positive and $L^{-s}$ is bounded for any $s \geq 0$, the operator $L$ 
gives rise to a feature map $\phi_s: \cX \to \cH_s$ 
\[  \phi_s(x) := L^{-s}K_x \;, \quad K_x:= K(x, \cdot) \;, \quad x \in \cX \;, \]
with feature space $\cH_s$.  The kernel is given by 
\[  K_s(x,x'):=\inner{L^{-s}K_x , L^{-s}K_{x'}}_{s} \;, \]
with $K_0=K$. From \cite[Theorem 4.21]{StCh08} we know that there exists an associated RKHS $\cH_{K_s}$ and the map 
$V: \cH_{K_s} \to \cH_s$ with $Vf(x):= \inner{f,  \phi_s(x)}_s $ 
is a metric surjection. 
Note that we get a nested sequence $\cH_{K_t} \hookrightarrow  \cH_{K_s} \hookrightarrow  \cH_K$ 
of RKHSs, whenever $0\leq s \leq t$. 
}
\end{example}

\begin{example}[{\bf Covariance Scale}]
{\rm 
A particular instance arises from the choice $\cH_K$ an RKHS and $L=\cT^{-1}$, where $\cT: \cH_K \to \cH_K$ is the kernel covariance operator, 
defined as in \eqref{eq:covariance}. This gives $\cH_s=Ran(\cT^s)$ and these are the classical smoothness spaces used 
in the learning framework, see Section \ref{sec-learning-in-hilbert-scales}. We call this Hilbert scale the \emph{Covariance scale}. 
}
\end{example}

\begin{example}[{\bf Sobolev Spaces}]
\label{ex:sobolev}
{\rm 
Following \cite{de2019reproducing}, we let $\cM$ denote a $d$-dimensional Riemannian manifold, which is connected, complete and, 
with bounded geometry. Typical examples are the space $\cM=\mbr^d$ with the usual Riemannian structure induced by the Euclidean
inner product  or any compact connected submanifold of $\mbr^d$, e.g. $\cM=S^{d-1}=\{x \in \mbr^d \; : \; ||x||=1\}$. 
Letting $\cD'(\cM)$ denote the space of distributions on $\cM$, we consider the operator $L:=(Id+\Delta)^{1/2}: \cD'(\cM) \to L^2(\cM)$. 
Given $s \in [0,\infty)$ we define $H^s(\cM):=\cH_s$ to be space of distributions $f \in \cD'(\cM)$ such that there exists an $g \in L^2(\cM)$ satisfying 
\[ f= L^{-s}g = (Id+\Delta)^{-s/2} g\;, \quad ||f||_s =||g||_{L^2(\cM)} \;. \]
The spaces $H^s(\cM)$ become a Hilbert space w.r.t. the inner product $\inner{f_1, f_2}_s = \inner{L^sf_1 , L^s f_2}_{L^2(\cM)} $. 
Thus, the family $(H^s(\cM))_{s\geq 0}$ is part of a Hilbert scale generated by $L$ and are known as \emph{Sobolev spaces}. These are RKHSs provided 
the smoothness index satisfies $s>d/2$, see \cite[Theorem 8]{de2019reproducing}. 
}
\end{example}

\begin{example}[{\bf Diffusion spaces and Gaussian RKHSs}]
\label{ex:diffusion}
{\rm 
Let $\cM$ be as in Example \ref{ex:sobolev}. 
For all $s>0$ 
we denote by $e^{-\frac{s}{2}\Delta}$ the \emph{Heat kernel}, defined as bounded operator on $L^2(\cM)$ by spectral calculus, 
see \cite{reed2012methods}. We set  $L=e^{\frac{1}{2}\Delta}$ and 
\[ \cH_s = Ran( L^{-s}) = Ran(e^{-\frac{s}{2}\Delta}) \;. \]
The semi-group property\footnote{That means $e^{-s'\Delta} = e^{-s\Delta}e^{-(s'-s)\Delta}$. } 
of $e^{-s\Delta}$ (see \cite[Proposition 1]{de2019reproducing}) shows that $ \cH_s \hookrightarrow  \cH_{s'}$ for all $0<s' \leq s$ 
and thus the family $(\cH_s)_{s> 0}$ is part of a Hilbert scale generated by $L$. 
Moreover, as shown in \cite[Theorem 8]{de2019reproducing}, the 
spaces $\cH_s$ are RKHSs for any $s>0$,  called \emph{Diffusion spaces} and they satisfy $\cH_s \hookrightarrow H^{s'}(\cM)$ for any $s, s'>0$. 
In particular, if $\cM=\mbr^d$, the Heat kernel is explicitly given by 
\begin{equation}\label{eq:gauss}
 K_s(x,x') = (4\pi s)^{-d/2} e^{-\frac{1}{4s}||x-x'||^2_2} \;,
\end{equation} 
the so called (normalized) \emph{Gaussian kernel}, see \cite{folland1995introduction}. Note that the \emph{width} $s$ serves here as a smoothness parameter.  
}
\end{example}



\section{SGD in Hilbert Scales: Learning Rates}
\label{sec-smoothness-promoting}
\begin{figure}
  \centering
 \includegraphics[width=0.32\columnwidth, height=0.15\textheight]{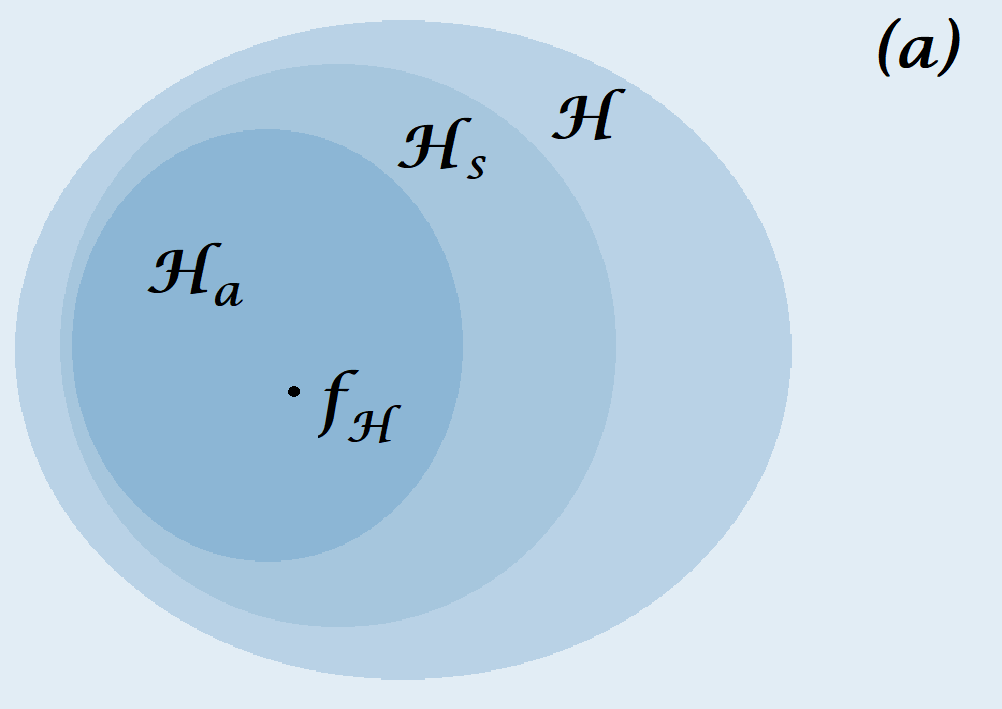}
 \includegraphics[width=0.32\columnwidth, height=0.15\textheight]{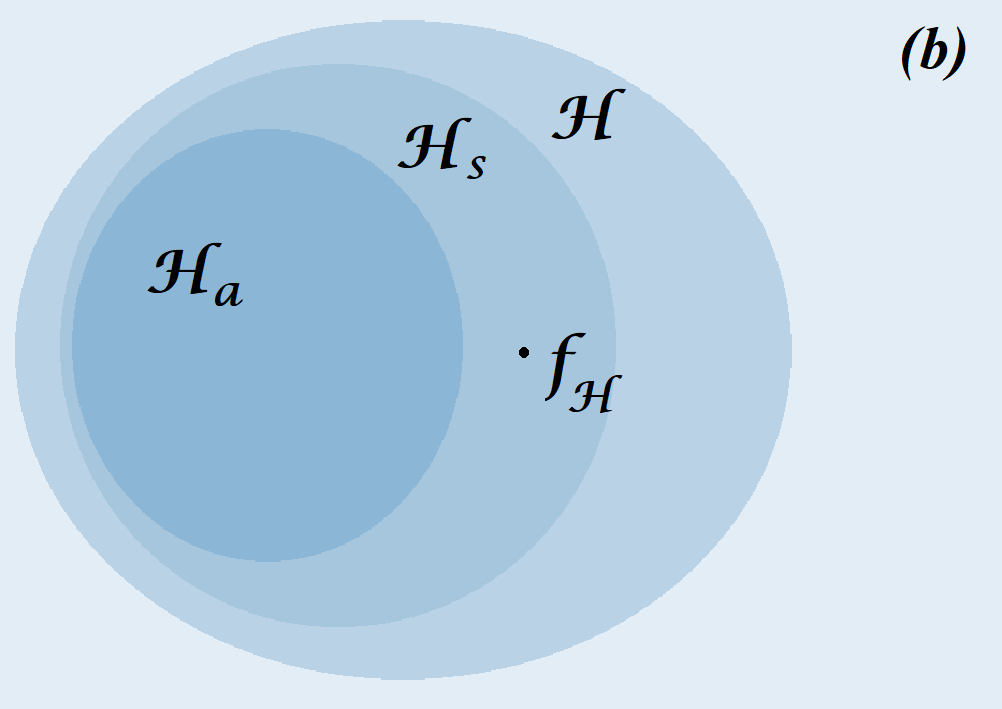}
 \includegraphics[width=0.32\columnwidth, height=0.15\textheight]{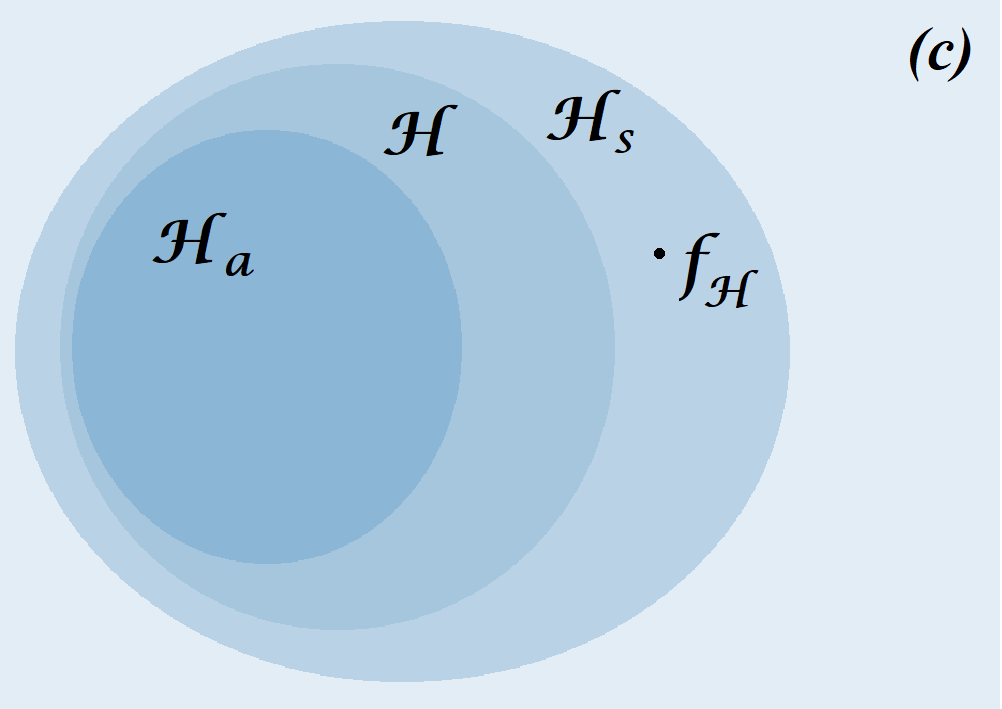} 
 \caption{{\small Illustration of the three different smoothness assumptions for the minimizer $f_\cH$. Regularization by tail-averaging SGD is performed in the space 
$\cH_s$. {\bf (a)} $0 \leq s \leq a$: The minimizer lies in $\cH_a$ and possesses the benchmark smoothness $a>0$. {\bf (b)} $0 \leq s < a$:  
The minimizer is less smooth and has a strictly 
positive distance $d_s$ to $\cH_a$. {\bf (c)} $-a \leq s \leq 0$: The minimizer does not belong to $\cH$ and is less regular. Note that $\cH \subseteq \cH_{s}$ in this case.}}
\label{fig:1}
\end{figure}


\subsection{Tail-averaged SGD in $\cH_s$}

We approximately solve the minimization 
problem (\ref{eq:min-prob}) by performing SGD in appropriate spaces $\cH_s$. 
Given $b \in [n]$, the mini-batch SGD recursion with constant step-size $\gamma >0$ in $\cH_s$ is given by $f_0=0$ and 
\begin{align}
\label{eq:SGD-recursion}
f_{t+1} = f_t - \gamma \frac{1}{b}\sum_{i=b(t-1)+1}^{bt} ( f_t(x_{j_i})  - y_{j_i})L^{-2s}K_{x_{j_i}}\;,
\end{align}
where $t=0,...,T$ and $j_1, ..., j_{bT}$ are i.i.d. random variables, distributed according to the uniform distribution
on $[n]$. Here, the number of passes over the data after $T$ iterations is $\ceil{bT/n}$. We are particularly interested in \emph{tail-averaging} 
with tail-length $T/2$, i.e. 
\begin{equation}
 \bar f_T := \frac{2}{T}\sum_{t=\floor{T/2} + 1}^T f_t \;.
\end{equation} 
The reason is twofold: In \cite{DieuBa16} it is shown that uniform averaging of the iterates leads to the possibility to 
choose constant step-sizes (instead of decaying $\gamma_t \sim t^{-\alpha}$, for some $\alpha \in  (0,1]$), making the algorithm more stable. However, as pointed out 
in \cite{mucke2019beating}, uniform  averaging suffers from \emph{saturation}, i.e. rates of convergence do not improve if the regularity of the 
objective function is large. Tail-averaging is known to lead to optimal rates of convergence also in the high smoothness regime.


\subsection{Smoothness Promoting SGD}
\label{sec-promoting}

Here, we consider the regular case where we assume that $f_\cH \in \cD(L^a)=\cH_a\subseteq \cH$ for some $a>0$, i.e. our model is well-specified and the target function  $f_\cH $ 
is smoother than just being in $\cH$. The operator $L^{-a}$ enforces smoothness of our  SGD iterates $\bar f_T$. 
For establishing the learning bounds in this case, we need:

\begin{assumption}[Link Condition]
\label{ass:link-condition3}
Assume  there exists $a>0$, $\underline{m}  >0$ such that for any $h \in \cH$
\begin{equation}
\label{eq:link2}
   \underline{m} ||h||_{-a} \leq  ||\cS h||_{L^2}  \;. 
\end{equation} 
\end{assumption}
Note that this assumption implies that $\cH_a \subseteq Ran(\cS)$. We are also interested in analyzing the special case where 
the benchmark smoothness $a>0$ is violated, that is $f_\cH \in \cH_s$, $0\leq s < a$, meaning that $f_\cH \not \in \cH_a$, see Figure \ref{fig:1}, $(b)$. 
For measuring the degree of violation of the benchmark, the concept of \emph{distance functions} is well suited, see e.g. \cite{flemming2011sharp}.

\begin{definition}[Distance Function]\label{def:distance1}
Given $a>0$ and  $0\leq s \leq a$ we define the \emph{distance function} $d_{s}: [0, \infty) \to [0, \infty)$ by 
\[ d_{s}(R):=\inf\{ ||L^s(f-f_\cH )||_\cH\;: \; f=L^{-a}h\;, ||h||_\cH \leq R \} \;. \]
\end{definition} 
The distance function is positive, decreasing, convex and continuous for all $0\leq R < \infty$ and tending to zero as $R\to \infty$, see \cite{hofmann2006approximate}. 
Hence, the 
minimizer exists and will henceforth be denoted as $f_{R} \in \cH_a$. 
Obviously, if $f_\cH \in \cH_a$, i.e. $f_\cH =L^{-a}h$ for some $h \in \cH$, then $f_\cH=f_{R_0} $, for some  $R_0 < \infty$ and $d_s(R_0)=0$. 
We now state our first main result, giving an upper bound for the excess risk. 

\begin{theorem}[Excess Risk]
\label{prop:newSGD}
Suppose Assumptions \ref{ass:bounded}, \ref{ass:link-condition3}, and  \ref{ass:moment} are satisfied. 
Let $\nu \in (0,1]$, $\beta=\frac{a-s}{2(a+s)}$ with $0\leq s\leq a$, $\gamma \kappa_s^2 < \frac{1}{4}$ and $R>0$. 
Assume further that $ Tr[\cT_s^{ \nu}] <\infty$ and that 
\begin{equation}
\label{eq:eff-dim-sample-size}
   n \geq \gamma T \max\{ 1, \cN_{\cT_s}(1/\gamma T) \}  \;.   
\end{equation}  
If $f_\cH \in \cD(L^s)$,  the excess risk satisfies\footnote{Throughout the manuscript we shortly write $A \lesssim  B$ for $A \leq c B$ for some $c \in \mbr_+$. } 
\begin{align} \label{eq:main1}
 \mbe\left[ \; ||\cS( \bar f_T - f_\cH )||^2_{L^2} \; \right] &\lesssim   (\gamma T)^{-1 } \left(\; d^2_{s}(R)   + R^2(\gamma T)^{-2\beta}   
      + M^2  \frac{\gamma T \cN_{\cT_s}(1/(\gamma T)) }{n}    \;\right)  \no \\
   & \quad   +  \frac{\Sigma^2_T (R)}{bT}  (\gamma T)^{\nu}    + \Omega_n(\gamma T)  \;, 
\end{align}
where the remainder $\Omega_n(\cdot)$ is of lower order and given by 
\[ \Omega_n(\gamma T) 
 = \frac{ || L^s\tilde g_T ||_\cH }{\gamma T} +   \sqrt{\frac{\gamma T}{n}}(d_s(R) + R^2(\gamma T)^{-\beta}) +  \gamma^2 T^5 \delta_n \]
with $\delta_n$ defined in \eqref{eq:delta}.  
Moreover, $\Sigma_T(R)$ obeys 
\[ \Sigma_T (R)  \lesssim  M + ||L^sf_\cH ||_\cH +  d_{s}(R)   + R(\gamma T)^{-\beta} \;.\]
\end{theorem}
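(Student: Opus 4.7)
The plan is to reduce the preconditioned SGD in $\cH_s$ to standard mini-batch SGD in a Hilbert space via the change of variables $\tilde f_t = L^s f_t$ and effective feature $\tilde K_x = L^{-s}K_x$, so that recursion \eqref{eq:SGD-recursion} becomes plain SGD whose population covariance is $\cT_s = L^{-s}\cS^*\cS L^{-s}$ (precisely the operator whose trace and effective dimension appear in the statement). The $L^2$-excess risk is recovered at the very end by combining the link condition \eqref{eq:link2} with interpolation in the Hilbert scale. The key identification at this step is that, modulo the link condition, $\cS^*\cS$ behaves like $L^{-2a}$, so $\cT_s$ behaves like $L^{-2(a+s)}$ and the source-type element $L^{-a}h$ becomes an element of $\mathrm{Ran}(\cT_s^{\beta})$ with exactly $\beta=(a-s)/(2(a+s))$; this is the origin of the exponent that governs the smooth part of the bias.

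The heart of the argument is a bias-variance decomposition against a noise-free population iterate $g_t$ in $\cH_s$ defined by
\[
 g_{t+1} = (I-\gamma L^{-2s}\cT)g_t + \gamma L^{-2s}\cS^* f_\cH,
\]
with tail-average $\bar g_T$ and $\tilde g_T := L^s\bar g_T$ the object appearing in $\Omega_n$. I split
\[
 \cS(\bar f_T - f_\cH) = \cS(\bar f_T - \bar g_T) + \cS(\bar g_T - f_\cH),
\]
and further decompose the first term, via the tower property and the independence of the mini-batch indices from the data, into a \emph{sample-variance} part (randomness in $\bx,\by$) and an \emph{SGD-variance} part (randomness in $j_1,\ldots,j_{bT}$). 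For the bias I use the distance-function trick: for each $R>0$ let $f_R = L^{-a}h_R$ be the minimiser in Definition~\ref{def:distance1} and write $f_\cH = f_R + (f_\cH - f_R)$; the standard tail-averaged-SGD filter calculus applied to $\cT_s$ then produces $R^2(\gamma T)^{-1-2\beta}$ from the smooth part $f_R$ (using the source-type identification above together with the extra $(\gamma T)^{-1}$ gained by measuring the bias in the $\cS$-range norm rather than in $\cH_s$), while the residual $f_\cH - f_R$ contributes $(\gamma T)^{-1}d_s^2(R)$ after transferring the $\cH_s$-control to $\|\cS\cdot\|_{L^2}$ via the link condition and the Heinz-type interpolation inequality $\|h\|_{-a}\leq \|h\|_s^{a/(a+s)}\|h\|_{-a-\epsilon}^{s/(a+s)}$ in the scale.

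For the variance terms I would follow the operator-theoretic template of \cite{mucke2019beating}. The SGD-variance factorises into the filter $\|\cT_s^{1/2}\Phi_T\|_{\mathrm{HS}}^2$ times the noise size $\Sigma_T^2(R)/b$; the hypothesis $\mathrm{Tr}[\cT_s^{\nu}]<\infty$ produces the claimed $(\gamma T)^{\nu}/(bT)$ factor, and $\Sigma_T(R) \lesssim M + \|L^s f_\cH\|_\cH + d_s(R) + R(\gamma T)^{-\beta}$ follows by expanding the residual at $f_R$ and re-using the bias estimate just obtained. The sample-variance contribution $M^2\cN_{\cT_s}(1/(\gamma T))/n$ is extracted from a Bernstein-type operator concentration for $\widehat\cT_s - \cT_s$ and its one-point counterpart, where the sample-size condition \eqref{eq:eff-dim-sample-size} is precisely what is needed so that the regularised empirical operator stays within a constant multiple of its population version with high probability. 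The lower-order remainder $\Omega_n$ collects cross-terms from the same concentration, written as a Neumann series that must be pushed to order five to produce the explicit $\gamma^2 T^5 \delta_n$.

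The main obstacle I expect is the bias analysis in the oversmoothing regime $f_\cH \notin \cH_a$: one cannot invoke the spectral theorem for $\cT_s$ directly on $f_\cH$, and the balance between the distance function $d_s(R)$ and the smooth surrogate $f_R$ requires the interpolation exponent $\beta$ to match on both sides, which only happens if the link condition is combined with the Hilbert-scale interpolation in exactly the right order. A secondary subtlety is making sure the tail-averaging qualification is high enough that the filter bound $(\gamma T)^{-(2\beta+1)}$ is saturation-free up to $\beta=1/2$ (corresponding to $s=0$), which is why tail-averaging rather than uniform averaging is essential in \eqref{eq:SGD-recursion}.
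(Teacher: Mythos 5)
Your proposal follows essentially the same route as the paper: reduction to the operator $\cT_s=\cB_s^*\cB_s$, decomposition against the population iterate $\tilde g_T$ as in \eqref{eq:classical}--\eqref{eq:II}, the distance-function splitting $f_\cH=f_R+(f_\cH-f_R)$ with the link condition turning $L^{s}f_R=L^{-(a-s)}h$ into an element of $Ran(\cT_s^{\beta})$ with $\beta=\frac{a-s}{2(a+s)}$ (Lemma \ref{lem:prelim3} and Proposition \ref{prop:II}), operator concentration under \eqref{eq:eff-dim-sample-size} for the estimation error (Propositions \ref{prop:III}--\ref{prop:V}), and the mini-batch variance template of \cite{mucke2019beating} producing the $\Sigma_T^2(R)(\gamma T)^{\nu}/(bT)$ term from $Tr[\cT_s^{\nu}]<\infty$. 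One correction: the residual term $f_\cH-f_R$ needs neither the link condition nor any Hilbert-scale interpolation --- it is handled by $\|\cT_s^{1/2}\bar R_T(\cT_s)L^s(f_\cH-f_R)\|_\cH\leq\|\cT_s^{1/2}\bar R_T(\cT_s)\|\,d_s(R)\lesssim(\gamma T)^{-1/2}d_s(R)$ straight from the filter calculus --- and the interpolation inequality you invoke has incorrect exponents (interpolating $\|\cdot\|_{-a}$ between $\|\cdot\|_{s}$ and $\|\cdot\|_{-a-\epsilon}$ gives weights $\epsilon/(s+a+\epsilon)$ and $(s+a)/(s+a+\epsilon)$, not $a/(a+s)$ and $s/(a+s)$); fortunately nothing in your argument actually depends on it.
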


The proof is given in Appendix \ref{app-proofs-SGD}. 
We draw now two conclusions, giving learning bounds in the two special well-specified cases. 
As in the classical approach, we assume a certain behavior for the effective dimension, see \eqref{eq:eff-dim-def}, of a slightly different covariance operator. 
More precisely, let $\cB_s:=\cS L^{-s}: \cH \to L^2(\cX, \rho_X)$ and $\cT_s:=\cB^*_s\cB_s: \cH \to \cH$. Note that $||\cT_s||\leq \kappa^2_s$, for some $\kappa_s>0$.

\begin{assumption}[Complexity]
\label{ass:effective-dim}
For some $\nu \in (0,1]$, the effective dimension obeys 
\[ \cN_{\cT_s}(1/\gamma T)  \lesssim (\gamma T)^\nu\;.  \]
\end{assumption}
Recall that for $f_\cH \in \cH_a$, see Figure \ref{fig:1}, $(a)$, 
the distance $d_s(R_0)$ appearing in \eqref{eq:main1} vanishes for some $R_0>0$. This gives:

\begin{corollary}
\label{cor:ratesSGD}
Let all Assumptions of Theorem \ref{prop:newSGD} be satisfied as well as Assumption \ref{ass:effective-dim}. 
Suppose that $f_\cH \in \cH_a$, for some $a>0$. 
Then, for any $n$ sufficiently large, the excess risk satisfies 
\[ \mbe\left[ ||\cS ( \bar f_{T_n} - f_\cH )||^2_{L^2} \right]   \lesssim R_0^2 \left( \frac{M^2}{R_0^2 n}  \right)^{\frac{1+2\beta}{1+2\beta + \nu }} \;, \]
for each of the following choices:\\
{\bf (a)} One pass SGD: $b=1$, $T_n \simeq n$, $\gamma_n\simeq  \frac{R_0^2}{M^2} \left(\frac{M^2}{R_0^2 n} \right)^{\frac{\nu +2\beta}{1+2\beta + \nu}}$.\\
{\bf (b)} Early Stopping and one pass SGD: 
$b_n\simeq n^{\frac{\nu+2\beta}{1+2\beta + \nu}}$, $\gamma = const.$, $T_n\simeq \left( \frac{R_0^2}{M^2}n\right)^{\frac{1}{1+2\beta + \nu}}$.\\
{\bf (c)}  Batch GD: $b=n$, $\gamma = const.$, $T_n\simeq \left( \frac{R_0^2}{M^2}n\right)^{\frac{1}{1+2\beta + \nu}}$. 
\end{corollary}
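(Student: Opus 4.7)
The plan is to specialize Theorem \ref{prop:newSGD} to the well-specified regime by taking $R = R_0 := \|h\|_\cH$, where $f_\cH = L^{-a}h$ for some $h \in \cH$. This choice makes the distance function vanish, $d_s(R_0) = 0$, so that the bound \eqref{eq:main1} reduces to a bias contribution $R_0^2(\gamma T)^{-(1+2\beta)}$, a statistical noise contribution of order $M^2(\gamma T)^\nu/n$ (after inserting the capacity estimate from Assumption \ref{ass:effective-dim} into the $\cN_{\cT_s}$-factor), and an SGD fluctuation term bounded by $\Sigma_T^2(R_0)(\gamma T)^\nu/(bT)$. Since $f_\cH \in \cH_a \hookrightarrow \cH_s$ and $R_0(\gamma T)^{-\beta}$ is bounded, the quantity $\Sigma_T(R_0)$ is uniformly controlled by a constant depending only on $M$ and $\|L^s f_\cH\|_\cH$, which I would absorb into an overall $M^2$ constant for the rate calculation.

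The heart of the argument is the bias--variance balance
\begin{equation*}
R_0^2(\gamma T)^{-(1+2\beta)} \;\asymp\; \frac{M^2(\gamma T)^\nu}{n},
\end{equation*}
which forces $\gamma T \asymp (R_0^2 n/M^2)^{1/(1+2\beta+\nu)}$, and substituting back recovers the stated rate $R_0^2(M^2/(R_0^2 n))^{(1+2\beta)/(1+2\beta+\nu)}$. The SGD fluctuation term stays below the noise provided $bT \gtrsim n$, while the sample-size condition \eqref{eq:eff-dim-sample-size}, which becomes $n \gtrsim (\gamma T)^{1+\nu}$ under Assumption \ref{ass:effective-dim}, reduces to $n^{(1+\nu)/(1+2\beta+\nu)} \lesssim n$; this is valid for all sufficiently large $n$ because $2\beta \geq 0$.

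Verifying each of the three regimes is then purely algebraic. For (a), $b=1$ and $T_n \simeq n$, and the prescribed $\gamma_n$ is constructed precisely so that $\gamma_n T_n \simeq (R_0^2 n/M^2)^{1/(1+2\beta+\nu)}$, with $bT = n$ saturating the variance-matching constraint. For (b), $\gamma$ is held constant, $T_n$ is chosen to deliver the same $\gamma T$, and the mini-batch size $b_n \simeq n^{(\nu+2\beta)/(1+2\beta+\nu)}$ yields $b_n T_n \simeq n$. For (c), batch gradient descent has $bT = n T_n \gg n$, so the SGD fluctuation term is automatically of lower order while bias and noise still balance.

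The main technical obstacle is demonstrating that the remainder $\Omega_n(\gamma T)$ is strictly dominated by the leading rate under each scaling. The term $\|L^s \tilde g_T\|_\cH/(\gamma T)$ is handled by the boundedness of $L^s f_\cH$; the term $\sqrt{\gamma T/n}\,(d_s(R_0) + R_0^2(\gamma T)^{-\beta})$ collapses to $R_0^2 \sqrt{\gamma T/n}\,(\gamma T)^{-\beta}$ and is a direct exponent check using the optimal $\gamma T$; the delicate part is the polynomial-in-$T$ summand $\gamma^2 T^5 \delta_n$, which forces one to unpack the decay of $\delta_n$ from \eqref{eq:delta} and confirm it shrinks fast enough to absorb the factor $T^5$ in each of the three cases. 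This last bookkeeping is where most of the case-specific care concentrates, and it ultimately pins down the threshold implicit in ``for any $n$ sufficiently large.''
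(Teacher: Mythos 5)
Your proposal matches the paper's intended argument: set $d_s(R_0)=0$ since $f_\cH\in\cH_a$, insert Assumption \ref{ass:effective-dim} into \eqref{eq:main1}, balance $R_0^2(\gamma T)^{-(1+2\beta)}$ against $M^2(\gamma T)^{\nu}/n$ to get $\gamma T\asymp(R_0^2n/M^2)^{1/(1+2\beta+\nu)}$, and check that each of the three $(b,\gamma,T)$ regimes realizes this while keeping $bT\gtrsim n$ and \eqref{eq:eff-dim-sample-size} — exactly the route the paper takes (cf.\ the proofs of Corollaries \ref{cor:rates-promo-GD-I} and \ref{cor:rates-promo-GD-II} in the GD case). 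Your extra care with the remainder $\Omega_n$ is reasonable; the paper simply invokes the assertion in Theorem \ref{prop:newSGD} that it is of lower order.
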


We turn now to the situation where for some $0\leq s <a$ the minimizer $f_\cH$ belongs to $\cH_s$, see Figure \ref{fig:1}, $(b)$. 
In this case, $d_s(R) >0$ and this will affect our learning bounds. 
In order to derive fast learning rates, we now need to determine the value of $R$, depending on $\gamma T$, 
such that the first two terms in \eqref{eq:main1} are balanced:    $d_{s}(R)     \asymp  R(\gamma T)^{-\beta}$.   
Thus, setting $\Gamma (R) :=R/ d_s(R)$, leads to 
\[ R =R_{\gamma T} = \Gamma^{-1}\left(  (\gamma T)^\beta \right) \;. \]
Actually, if the target satisfies for some 
$R_*>0$ a more \emph{general source condition}, see e.g. \cite{hofmann2007analysis}, \cite{rastogi2017optimal} or \cite{Lin18}, i.e.  
\begin{equation}\label{eq:source} 
f_\cH =\theta(L^{-1})h\;, \quad h \in \cH \;, \;\; ||h||_\cH \leq R_*\;,
\end{equation}
for some continuous increasing function $\theta: \mbr_+ \to \mbr_+$, 
we can give an upper bound for the distance function. 
Thanks to \cite[Theorem 5.9]{hofmann2007analysis}, see also \cite{rastogi2020inverse}, after rescaling, we obtain 
\begin{equation}
\label{eq:bound-distance}
 d_{s}(R) \leq  R \varphi\left(  \left(\frac{\varphi}{\tilde\theta}\right)^{-1}\left(\frac{R_*}{R}\right)  \right) \;,
\end{equation} 
with $\varphi(t)=t^{a-s}$, $\tilde\theta(t)=t^{-s}\theta(t)$. This finally gives 
$$ R_{\gamma T} \asymp (\gamma T)^{\frac{a\beta}{a-s}} \theta\left( (\gamma T)^{-\frac{\beta}{a-s}} \right) \;.$$
Note that the choice $\theta(t)=t^s$ gives $f_\cH \in \cH_s$, $d_s(R)\leq R_*$ and $R_{\gamma T}\asymp R_*(\gamma T)^\beta $.

\begin{corollary}
\label{cor:ratesSGD2}
Let all Assumptions of Theorem \ref{prop:newSGD} be satisfied as well as Assumption \ref{ass:effective-dim}. 
Suppose additionally that \eqref{eq:source} holds with $\theta(t)=t^s$.   
Then, for any $n$ sufficiently large, the excess risk satisfies 
\[ \mbe\left[ ||\cS ( \bar f_{T_n} - f_\cH )||^2_{L^2} \right]   \lesssim R_*^2 \left( \frac{M^2}{R_*^2  n}  \right)^{\frac{1}{1+\nu } }\;, \]
for each of the following choices: \\
{\bf  (a)} One pass SGD: $b=1$, $T_n \simeq n$, $\gamma_n\simeq  (R_*^2/M^2)\left(\frac{M^2}{R_*^2 n} \right)^{\frac{\nu}{1+\nu}}$. \\
{\bf (b)} Early Stopping and one pass SGD: $b_n\simeq n^{\frac{\nu}{1+\nu}}$, $\gamma = const.$, $T_n\simeq \left( \frac{R_*^2}{M^2}n\right)^{\frac{1}{1+\nu}}$. \\
{\bf (c)} Batch GD: $b=n$, $\gamma = const.$, $T_n\simeq \left( \frac{R_*^2}{M^2}n\right)^{\frac{1}{1+\nu}}$. 
\end{corollary}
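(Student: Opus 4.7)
The plan is to specialize Theorem~\ref{prop:newSGD} to the source condition $\theta(t)=t^s$ and then optimize $\gamma T$ against $n$ for each of the three schemes.

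First I would unpack the implications of $\theta(t)=t^s$. From \eqref{eq:source} we get $f_\cH = L^{-s}h$ with $\|h\|_\cH \leq R_*$, so in particular $\|L^s f_\cH\|_\cH \leq R_*$, and by \eqref{eq:bound-distance} (with $\varphi(t)=t^{a-s}$ and $\tilde\theta(t)=t^{-s}\theta(t)=1$) the distance function satisfies $d_s(R) \leq R_*$ uniformly in $R$. Solving the balance $d_s(R) \asymp R(\gamma T)^{-\beta}$ then gives $R_{\gamma T} \asymp R_*(\gamma T)^\beta$, so both $d_s(R_{\gamma T})$ and $R_{\gamma T}(\gamma T)^{-\beta}$ are of order $R_*$, and the coefficient $\Sigma_T(R_{\gamma T}) \lesssim M + R_*$.

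Substituting into \eqref{eq:main1} and using the capacity bound $\cN_{\cT_s}(1/(\gamma T)) \lesssim (\gamma T)^\nu$ from Assumption~\ref{ass:effective-dim} collapses the leading contributions to
\[
\mbe\bigl[\,\|\cS(\bar f_T - f_\cH)\|_{L^2}^2\,\bigr] \lesssim \frac{R_*^2}{\gamma T} + \frac{M^2 (\gamma T)^\nu}{n} + \frac{(M+R_*)^2 (\gamma T)^\nu}{bT} + \Omega_n(\gamma T).
\]
I then treat $\gamma T$ as a free parameter and balance the first two terms by solving $R_*^2/(\gamma T) \asymp M^2(\gamma T)^\nu/n$, which yields
\[
\gamma T \asymp \Bigl(\tfrac{R_*^2\, n}{M^2}\Bigr)^{\frac{1}{1+\nu}},
\]
and at this value both terms contribute exactly at the target rate $R_*^2 (M^2/(R_*^2 n))^{1/(1+\nu)}$. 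A direct check then confirms that each of the three parameter specifications (a)--(c) realizes this value of $\gamma T$: in (a) the small step size $\gamma_n$ carries the scaling while $T_n \simeq n$; in (b) and (c) the step size is constant and $T_n$ is chosen explicitly.

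It remains to verify that the mini-batch term $(M+R_*)^2(\gamma T)^\nu/(bT)$ and the remainder $\Omega_n(\gamma T)$ do not exceed the target rate. For the batch term this is immediate in (a), where $bT_n = T_n \simeq n$ makes this term identical in structure to the $M^2(\gamma T)^\nu/n$ contribution; in (b) and (c) the batch size $b_n$ and iteration count $T_n$ are tuned precisely so that $bT_n$ grows linearly in $n$ (up to the prefactor $(R_*/M)^{2/(1+\nu)}$), matching the second term. The main obstacle, and where I would spend the bulk of the proof, is controlling $\Omega_n(\gamma T)$: one needs to show that $\|L^s\tilde g_T\|_\cH/(\gamma T)$ and $\sqrt{\gamma T/n}\,(d_s(R_{\gamma T}) + R_{\gamma T}^2(\gamma T)^{-\beta})$ are dominated by the leading terms after the same substitution, and that the term $\gamma^2 T^5 \delta_n$ is controlled through the decay of $\delta_n$ in \eqref{eq:delta}. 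Finally, the capacity sample-size requirement \eqref{eq:eff-dim-sample-size} reduces under Assumption~\ref{ass:effective-dim} to $n \gtrsim (\gamma T)^{1+\nu}$, which holds by construction at the optimal scale chosen above, closing the argument.
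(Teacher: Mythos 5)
Your proposal is correct and follows essentially the same route as the paper: specialize $\theta(t)=t^s$ to get $d_s(R)\leq R_*$ and $R_{\gamma T}\asymp R_*(\gamma T)^{\beta}$, insert these together with the capacity bound $\cN_{\cT_s}(1/(\gamma T))\lesssim (\gamma T)^{\nu}$ into \eqref{eq:main1}, balance $R_*^2/(\gamma T)$ against $M^2(\gamma T)^{\nu}/n$ to obtain $\gamma T\asymp (R_*^2 n/M^2)^{1/(1+\nu)}$, and check that each of (a)--(c) realizes this scaling with $bT\gtrsim n$ so that the mini-batch variance term and the remainder $\Omega_n$ are of lower order. This mirrors the paper's argument for the analogous gradient-descent corollaries, so no further comparison is needed.
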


We comment on the results derived above: 
{\bf (a)} Cor. \ref{cor:ratesSGD} and \ref{cor:ratesSGD2} precisely describe the interplay of $b, \gamma, T$. 
Comparing the results shows that violation of a benchmark smoothness in Hilbert scales slows down rates 
of convergence from $\cO(n^{-\frac{1+2\beta}{1+2\beta + \nu}})$ to $\cO(n^{-\frac{1}{1+v}})$ for various choices of $b, \gamma, T$. In other words, rates  
can be improved by SGD in Hilbert scales if the regularity of $f_\cH$ is sufficiently large. Moreover, comparing the stopping times for one pass SGD in both 
$(a), (b)$ reveals that a lack of regularity  leads to \emph{later} stopping, i.e. $T_n$ reduces from 
$\cO(n^{\frac{1}{1+2\beta + \nu}})$ to $\cO(n^{\frac{1}{1+v}})$. 

{\bf (b)} Comparing further $(b),(c)$ we observe that the setting
of $\gamma$ and  $T$ is the same and there is a full range of possible values for $b_n \in [n^{\frac{2\beta + \nu}{1+2\beta + \nu}}, n]$ where a const. stepsize 
is allowed, still ensuring optimality. As noted in \cite{LinRos17}, \cite{mucke2019beating} where the covariance scale $L=\cT^{-1}$ is considered, increasing the
minibatch size beyond a critical value does not yield any benefit. Compared to \cite{LinRos17}, with a critical batchsize of $\cO(n^{\frac{2\beta + 1}{1+2\beta + \nu}})$, we proved 
that tail-averaging can lead to a much smaller critical minibatch size also in Hilbert scales, and hence to more efficient computations, 
see also \cite{mucke2019beating}. 

{\bf (c)} In the special case where $L=\cT^{-1}$ and the minimizer belongs to $\cH_a$, then $f_\cH$ satisfies a classical H\"older source condition as described in Section 
\ref{sec-learning-in-hilbert-scales}. We exactly recover the known optimal bounds from previous works \cite{optimalratesRLS}, \cite{fischer2017sobolev} or \cite{blanchard2018optimal}. 
In fact, Cor. \ref{cor:ratesSGD2} reduces to a special case of \cite{mucke2019beating} where tail-averaging was shown to lead to optimal rates in well-specified models. 

{\bf (d)} Let us consider the Heat kernel $e^{-s\Delta}$ generating the Gaussian RKHSs from Example \ref{ex:diffusion} again. 
From the theory of Hilbert scales, the width $s$ can be viewed as a smoothness parameter. Then Cor. \ref{cor:ratesSGD} states that  the rate of convergence 
explicitly depends on the width appearing in the exponent of the learning rate\footnote{For Gaussian RKHSs and uniform distribution, 
the effective dimension behaves as $\cN_{\cT_s}(1/\gamma T)\lesssim \log(\gamma T)$ and we consider the worst case scenario where $\nu=1$.} 
in terms of $\beta=\frac{a-s}{2(a+s)}$.  This naturally occurs in the context of Hilbert scales. 

{\bf (e)  Optimality:} The bound in Cor. \ref{cor:ratesSGD2} is minimax optimal under the given Assumptions \ref{ass:effective-dim} and \eqref{eq:source} with $\theta(t)=t^s$ 
since $f_\cH \in \cH_s$, see \cite{optimalratesRLS}, \cite{blanchard2018optimal}. However, the learning rate provided in Cor. \ref{cor:ratesSGD} is known to be minimax optimal 
under the Assumptions \ref{ass:effective-dim} and $f_\cH \in \cH_a$  only if $0<a\leq 1/4$. Indeed, Lemma   \ref{lem:prelim3}  gives 
\[ ||L^{-a}h || \leq \underbar m ^{-\frac{1}{2}}||\cT_s^{\frac{1}{4}}h||_\cH =  \underbar m ^{-\frac{1}{2}}|| \cT_s^{a}\cT_s^{\frac{1}{4} -a}h||_\cH \;. \]
Thus, by Lemma \ref{lem:ranges}, $f_\cH =L^{-a}h $ implies $f_\cH = \cT_s^a \tilde h$ with $\tilde h = \cT_s^{\frac{1}{4} -a}h$, $||\tilde h||_\cH \leq ||\cT_s^{\frac{1}{4} -a}|| \; ||h||_\cH$ 
and $f_\cH$ satisfies a classical H\"older source condition in terms of the covariance operator $\cT_s$, 
ensuring optimality according to \cite{optimalratesRLS}, \cite{blanchard2018optimal}.
In \cite{rastogi2020inverse} in the context of inverse problems, the authors derive optimality under 
an additional \emph{lifting condition}, relating smoothness as given in terms of $L^{-1}$ to smoothness in terms of $\cT$. However, it is open to show optimality, 
i.e. to derive a matching lower bound of our bounds without this extra assumption and for $a>1/4$.   
 
{\bf (f) Tail-averaged Gradient Descent: } On our way proving error bounds for tail-averaged SGD we derive in Appendix \ref{app-error-analysis-smoothness-promoting} 
also error bounds for tail-averaged Gradient Descent under the same assumptions, being interesting in it's own right.


\subsection{Preconditioning SGD for mis-specified Models}
\label{subsec:Precond}

Now we draw our attention to the case where $f_\cH \not \in \cH$, as illustrated in Figure \ref{fig:1}, $(c)$. We show that tail-averaging SGD still allows to obtain 
fast learning rates under an appropriated assumption. This amounts to regularization in a Hilbert space with weaker norm by using $L^{-s}$ as a \emph{preconditioner} for 
suitable $s\leq 0$. Note this gives $\cH \subseteq \cH_{s}$.  

\begin{assumption}[Link Condition]
\label{ass:link-condition2}
\begin{enumerate}
\item
Assume  there exists $a>0$, $\bar m >0$ such that 
\begin{equation}
\label{eq:link1}
 ||\cS h||_{L^2} \leq \bar m ||h||_{-a} \;, \quad h \in \cH \;. 
\end{equation} 
\item 
We assume that $\cH_{-a} \hookrightarrow  L^2(\cX, \rho_X)$ and the 
extension of $\cS$ to $\cH_{-a}$ (again denoted by $\cS$) is continuous and injective. Moreover,  
for any $-a \leq s \leq 0$  one has $||\cB_s||\leq \kappa_s$ for some $\kappa_s < \infty$, where $\cB_s = \cS L^{-s}$. 
\end{enumerate}
\end{assumption}
Note that this assumption implies that for any $-a \leq s \leq 0$, the space $\cH_s$ is an RKHS, too. The kernel arises by means of the Riesz Representation Theorem, 
see \cite{reed2012methods}. Indeed, since the extension of $\cS$ to $\cH_{s}$ is continuous we have $\cS h(x) = \inner{h, \tilde K_x}_{\cH_s}$ for some $\tilde K_x \in \cH_s$ 
and for any $h \in \cH_s$. By Cauchy-Schwartz 
\[ |\cS h(x)| =  |\inner{h, \tilde K_x}_{\cH_s}| \leq ||h||_{\cH_s} ||\tilde K_x||_{\cH_s} \;. \]
Thus, the evaluation functionals on $\cH_s$ are continuous. 
We give an example where this condition is satisfied. 

\begin{example}[Switching between different Diffusion spaces]
\label{ex:pre}
{\rm 
Let $t>0$ and  $\cH:=Ran(e^{-\frac{t}{2}\Delta})$ be the diffusion space, defined in Example \ref{ex:diffusion}.  
We show in Appendix \ref{app-additional-results} that \eqref{eq:link1} is satisfied for any $0<a\leq t$, with $L=e^{\frac{1}{2}\Delta}$. 
Moreover, $L^{s}$ maps from $\cH$ to $\cH_s=Ran(L^{-s})$ for any $0\leq s\leq t$ and $\cB_{s}$ is bounded for any $-a \leq s \leq 0$. 
The operator $L^s$ acts as a \emph{preconditioner} if $f_\cH \in \cH_s \supset \cH$. 
}
\end{example}

\begin{example}[Switching between different orders of smoothness]
\label{ex:pre2}
{\rm 
Let $s\leq 0$, $t>0$ and  $\cH:=Ran(e^{-\frac{t}{2}\Delta})$ be the diffusion space, defined in Example \ref{ex:diffusion}. For $u\geq 0$ define 
\[ L:=e^{\frac{1}{2}\Delta}(Id + \Delta)^{-\frac{u}{2}}\;.\] 
We show in in Appendix \ref{app-additional-results} that $L^{-s}$ maps from the Gaussian RKHS $\cH$ into the larger Sobolev space $H^{-su}(\cM)$ 
(note $s\leq 0$), see Example \ref{ex:sobolev}, 
provided $s +t\leq 0$. Moreover,  Assumption \ref{ass:link-condition2} is satisfied for any $-a\leq s$ with $0< t\leq a$. 
}
\end{example}

For more examples we refer to Section \ref{app-additional-results}, in the context of inverse problems we refer to \cite{egger2005preconditioning}. 

For deriving our error bounds we assume an a-priori smoothness for $f_\cH$ in a shifted scale.

\begin{assumption}[Source Condition]
\label{ass:SC-pre}
Let $a, s$ and $\cB_s$ as in Assumption \ref{ass:link-condition2} and let $r>-a$. 
Assume  there exists $R >0$ and $h \in \cH$ satisfying $||h||_\cH \leq R$ such that 
\begin{equation}
\label{eq:SC-pre}
  f_\cH = L^{-s}\cT_s^{\beta}h\;, \quad \beta = \frac{r-s}{2(a+s)}. 
\end{equation} 
\end{assumption}
The link condition implies that $Ran( L^{-s}\cT_s^{\frac{r-s}{2(a+s)}}) \subseteq Ran(\cT_s^{\frac{r}{2(a+s)}})$ 
and thus $f_\cH \in \cH$ if $r \geq 0$ and $f_\cH \not \in \cH$ if $-a<r <0$, see Lemma \ref{lem:prelim}. Under 
the assumptions given above, we can also give learning bounds for preconditioning SGD for mis-specified models.

\begin{theorem}[Excess Risk]
\label{theo:preSGD}
Suppose Assumptions \ref{ass:bounded}, \ref{ass:link-condition3}, \ref{ass:SC-pre} and  \ref{ass:moment} are satisfied with $a>0$, $-a/2 \leq s \leq 0$.  
Let further $\nu \in (0,1]$, $\gamma \kappa_s^2 < \frac{1}{4}$, $ Tr[\cT_s^{ \nu}] <\infty$ and \eqref{eq:eff-dim-sample-size} be satisfied. 
Then the excess risk for preconditioned tail-averaged SGD satisfies \eqref{eq:main1} with $\beta=\frac{r-s}{2(a+s)}$. 
\end{theorem}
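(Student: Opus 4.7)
The plan is to mirror the proof of Theorem \ref{prop:newSGD}, adapting it to the preconditioning regime $-a/2 \leq s \leq 0$. The SGD recursion \eqref{eq:SGD-recursion} is defined identically for any sign of $s$, and with $\cB_s = \cS L^{-s}$ and $\cT_s = \cB_s^* \cB_s$ bounded under Assumption \ref{ass:link-condition2}, the operator-theoretic bias-variance-noise decomposition used to prove Theorem \ref{prop:newSGD} carries over without structural modification. What changes is the bookkeeping for the approximation and bias terms under the new source condition \eqref{eq:SC-pre}.

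The central step is to reinterpret the source condition $f_\cH = L^{-s} \cT_s^{\beta} h$ with $\|h\|_\cH \leq R$ as an effective H\"older-type condition in the scale of $\cT_s$. Applying $L^s$ yields $L^s f_\cH = \cT_s^{\beta} h \in \cH$, which simultaneously guarantees $f_\cH \in \cD(L^s) = \cH_s$ (so that the distance function $d_s$ is well-defined at $f_\cH$) and supplies the bound $\|L^s f_\cH\|_\cH \leq \kappa_s^{2\beta} R$ that enters $\Sigma_T(R)$. The standard spectral-polynomial control of the tail-averaged SGD filter applied to $\cT_s^{\beta}$ then produces precisely the bias rate $R(\gamma T)^{-\beta}$ with the claimed exponent $\beta = (r-s)/(2(a+s))$. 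The distance function $d_s(R)$ is handled by the same interpolation mechanism underlying \eqref{eq:bound-distance}: the minimizer in Definition \ref{def:distance1} ranges over $\cH_a$, and combining the link condition in Assumption \ref{ass:link-condition3} with Lemma \ref{lem:prelim} converts the source condition into a polynomial-in-$R$ decay of $d_s$.

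With these two ingredients in place, the noise term $(\gamma T)^{\nu} \Sigma_T(R)^2/(bT)$, the capacity-driven statistical contribution involving $\cN_{\cT_s}(1/(\gamma T))$, and the higher-order remainder $\Omega_n(\gamma T)$ are all controlled exactly as in the proof of Theorem \ref{prop:newSGD}: each depends only on $\cT_s$, on Assumption \ref{ass:moment}, and on the sample-size constraint \eqref{eq:eff-dim-sample-size}, none of which distinguish the sign of $s$. Summing the contributions reproduces the bound \eqref{eq:main1} with the updated exponent.

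The main obstacle I anticipate is the analysis of the approximation error in the mis-specified regime $s<0$: since $f_\cH \not\in \cH$ in general, the minimizer defining $d_s(R)$ lives in $\cH_a \subset \cH$ while $f_\cH$ itself lives in the strictly larger space $\cH_s$, so one must argue along the full Hilbert scale rather than inside $\cH$. The constraint $-a/2 \leq s \leq 0$ enters precisely here, ensuring that $\beta = (r-s)/(2(a+s))$ is in a range where the operators $\cT_s^{\beta}$ and $L^{-s} \cT_s^{\beta}$ are well-defined via spectral calculus and the interpolation inequality used to bound $d_s(R)$ yields a genuine polynomial decay.
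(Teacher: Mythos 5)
Your central step is the same as the paper's: apply $L^s$ to the source condition \eqref{eq:SC-pre} to obtain $L^s f_\cH = \cT_s^{\beta}h$ with $\|h\|_\cH\leq R$, treat this as an exact H\"older condition in the scale of $\cT_s$, and then rerun the decomposition of Theorem \ref{prop:newSGD} verbatim, since the estimation, noise and remainder terms depend only on $\cT_s$, Assumption \ref{ass:moment} and \eqref{eq:eff-dim-sample-size} and are blind to the sign of $s$. This is exactly how the paper argues in the GD analogue (Proposition \ref{prop:preII} and Theorem \ref{theo:GDIPre}); the SGD section of the appendix is left blank in this draft, but the intended proof is the same transfer. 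Where you diverge is the treatment of the approximation error: you propose to route it through the distance function $d_s(R)$ and the interpolation bound \eqref{eq:bound-distance}, whereas the paper bounds it directly as $\|\cT_s^{\alpha+\beta}\bar R_T(\cT_s)h\|_\cH\lesssim R(\gamma T)^{-(\alpha+\beta)}$ with no distance function at all --- under the exact source condition \eqref{eq:SC-pre} the $d_s(R)$ term in \eqref{eq:main1} is simply absent (or zero), and invoking $d_s$ here is both unnecessary and delicate, since Definition \ref{def:distance1} is only formulated for $0\leq s\leq a$ and the conversion you describe rests on Lemma \ref{lem:prelim}, which requires the upper link condition of Assumption \ref{ass:link-condition2} rather than Assumption \ref{ass:link-condition3} cited in the theorem (almost certainly a typo in the statement, but one your argument silently depends on). Your detour does reproduce the correct exponent $\beta=\frac{r-s}{2(a+s)}$ after balancing, so it is not wrong, but the direct spectral bound is both shorter and avoids defining $d_s$ on the negative part of the scale; also note that your estimate $\|L^sf_\cH\|_\cH\leq\kappa_s^{2\beta}R$ implicitly uses $\beta\geq 0$, i.e.\ $r\geq s$, which should be stated.
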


\begin{corollary}
\label{cor:pre-SGD}
Suppose all assumptions of Theorem \ref{theo:preSGD} and Assumption \ref{ass:effective-dim} are satisfied. Then the 
learning rate in \eqref{cor:ratesSGD} also holds with $\beta=\frac{r-s}{2(a+s)}$, $r\geq s\geq -a/2$, for any of the choices of $b_n, \gamma_n, T_n$ given in 
items (a), (b) and (c), provided that $\nu +2\beta >0$.
\end{corollary}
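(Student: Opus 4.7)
The plan is to reduce this to Theorem \ref{theo:preSGD} plus the same parameter-optimization already performed in Corollary \ref{cor:ratesSGD}, now using the new value $\beta = \frac{r-s}{2(a+s)}$. By Theorem \ref{theo:preSGD}, the excess risk is bounded by the right-hand side of \eqref{eq:main1} with this $\beta$. The first step is to argue that Assumption \ref{ass:SC-pre} plays exactly the same role here as $f_\cH \in \cH_a$ did in Corollary \ref{cor:ratesSGD}: using the inclusion $Ran(L^{-s}\cT_s^{\beta}) \subseteq Ran(\cT_s^{r/(2(a+s))})$ noted right after \eqref{eq:SC-pre}, together with Lemma \ref{lem:prelim}, one can choose the representer in the definition of $d_s$ so that $d_s(R) = 0$ with $R \asymp ||h||_\cH$, exactly as in the benchmark case.

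Once the distance term is eliminated, applying Assumption \ref{ass:effective-dim} reduces \eqref{eq:main1}, up to the lower-order remainder $\Omega_n(\gamma T)$ and the variance contribution, to the balance
\[
R^2(\gamma T)^{-(1+2\beta)} \;+\; M^2 \frac{(\gamma T)^\nu}{n} \;+\; \frac{\Sigma_T^2(R)}{bT}(\gamma T)^\nu.
\]
Setting the first two terms equal yields $\gamma T \asymp (R^2 n/M^2)^{1/(1+2\beta+\nu)}$ and hence the claimed rate $R^2 (M^2/(R^2 n))^{(1+2\beta)/(1+2\beta+\nu)}$. The hypothesis $\nu + 2\beta > 0$ is precisely what makes both $1/(1+2\beta+\nu)$ and $(\nu+2\beta)/(1+2\beta+\nu)$ positive, and thereby legitimizes the parameter schedules in (a), (b) and (c); without it the stopping times and step-sizes would not even scale as positive powers of $n$.

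It then remains, for each of the three schedules, to verify that the variance term $\Sigma_T^2(R)(\gamma T)^\nu/(bT)$ does not exceed the dominant scale and that $\Omega_n(\gamma T)$ is of strictly lower order. For (a) this is enforced by the tuning of $\gamma_n$ with $b = 1$; for (b) and (c) it is enforced by the choice of $b_n$ under a constant step-size, exactly mirroring the three cases of Corollary \ref{cor:ratesSGD}. Finally, the sample-size constraint \eqref{eq:eff-dim-sample-size}, instantiated at the balanced $\gamma T$, reduces to a lower bound on $n$ in terms of $R/M$ that holds for $n$ sufficiently large. The main obstacle is the first step: translating the preconditioned source condition \eqref{eq:SC-pre} into an equivalent control of the distance function $d_s$ used in Theorem \ref{prop:newSGD}, since $d_s$ was originally defined via representations of the form $f = L^{-a}h$, whereas here the representation involves both $L^{-s}$ and a fractional power of $\cT_s$; handling this via the link condition (Assumption \ref{ass:link-condition2}) is the only part of the argument not already contained verbatim in the proof of Corollary \ref{cor:ratesSGD}.
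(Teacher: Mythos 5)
Your reduction-and-balancing argument is the right one and matches what the paper intends: take the bound of Theorem \ref{theo:preSGD}, invoke Assumption \ref{ass:effective-dim}, equate $R^2(\gamma T)^{-(1+2\beta)}$ with $M^2(\gamma T)^{\nu}/n$ to get $\gamma T \asymp (R^2 n/M^2)^{1/(1+2\beta+\nu)}$, and check that the variance term needs $bT\gtrsim n$, which each of the schedules (a)--(c) satisfies. (The paper's own appendix section for preconditioned SGD is empty in this draft, but the analogous GD chain --- Proposition \ref{prop:preII}, Theorem \ref{theo:GDIPre}, and the corollary following it --- shows this is exactly the intended route.)

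The one place you go astray is the step you yourself flag as the ``main obstacle'': translating the source condition \eqref{eq:SC-pre} into $d_s(R)=0$. This is both unnecessary and, taken literally, wrong. Unnecessary, because in the preconditioned regime the paper never routes the approximation error through the distance function at all: \eqref{eq:SC-pre} is an \emph{exact} representation $L^s f_\cH = \cT_s^{\beta}h$, so the residual term is bounded directly as $\lVert \cT_s^{\alpha+\beta}\bar R_T(\cT_s)h\rVert \lesssim R(\gamma T)^{-(\alpha+\beta)}$ (this is Proposition \ref{prop:preII} in the GD analogue), and Theorem \ref{theo:preSGD} already hands you \eqref{eq:main1} with the approximation term $R(\gamma T)^{-\beta}$ and no genuine distance contribution. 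Wrong, because Definition \ref{def:distance1} only covers $0\leq s\leq a$ and requires representers of the form $f=L^{-a}h$; Assumption \ref{ass:SC-pre} with $-a/2\leq s\leq 0$ does not place $f_\cH$ in $\cH_a$ (Lemma \ref{lem:prelim} only gives $f_\cH\in\cH$ when $r\geq 0$), so there is no choice of representer making $d_s(R)=0$ in general. The repair is simply to delete that step and use the theorem's approximation term as given. A minor additional remark: since the corollary restricts to $r\geq s$, one has $\beta\geq 0$ and the hypothesis $\nu+2\beta>0$ is automatic; your reading of it as guaranteeing positive exponents only becomes substantive if $r<s$ were allowed.
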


We highlight the most important consequence of our theory: {\bf  Preconditioning reduces the number of iterations. }  
Corollary \ref{cor:pre-SGD} $(b)$ reveals the benefit of preconditioning SGD. For one pass SGD, the number of iterations required is 
$T_n=\cO(n^{\frac{1}{1+\nu + 2\beta}})$ while one pass SGD in $\cH$ requires $T_n=\cO(n^{\frac{1}{1+\nu}})$ iterations, see \cite{mucke2019beating}. This    
is substantially more if $s\leq r$.


Minimax optimality of the rate in Cor. \ref{cor:pre-SGD} follows from the classical 
theory \cite{optimalratesRLS}, \cite{blanchard2018optimal} if $\beta = 0$, i.e. $r=s$. 

\section{Numerical Illustrations}
\label{sec-numerics}
In this section we give some empirical illustration supporting our theoretical findings. We concentrate on three different aspects of our results: 
The effect of preconditioning, the effect of mini-batching and the relation between stepsize and smoothness. All our experiments are 
conducted on synthetic data and follow the model $Y_j=f_\cH(X_j)+ \eps_j$, where $\eps_j \sim \cN(0,0.01)$ and $X_j \sim \cU[-1,1]$, $j=1,...,n$.

\begin{figure}[h]
\centering

 \includegraphics[width=0.35\columnwidth, height=0.17\textheight]{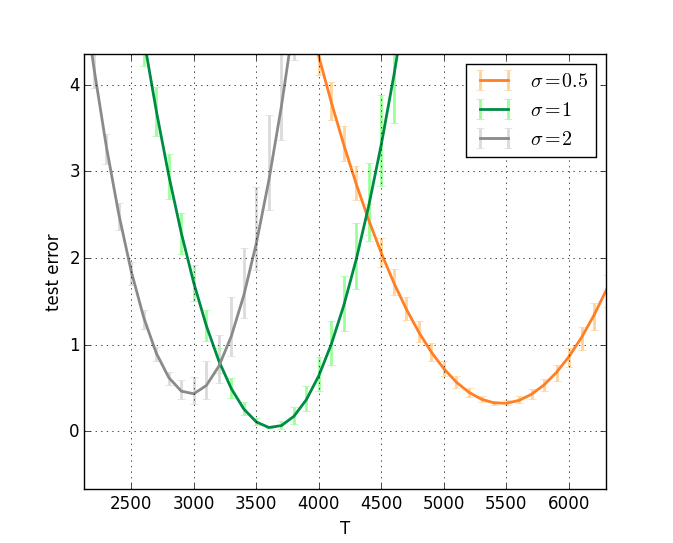}
 \includegraphics[width=0.34\columnwidth, height=0.17\textheight]{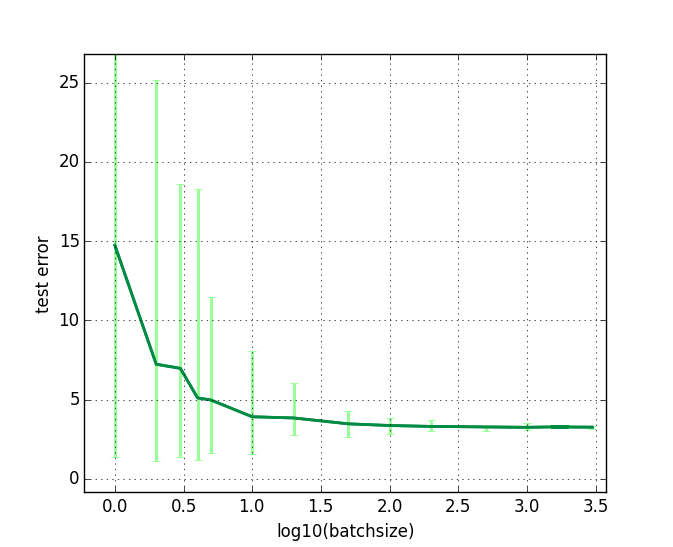}
 \caption{{\small Illustration of the effect of preconditioning and existence of a critical batchsize. Reported is the excess risk. 
{\bf Left:} Preconditioning reduces the number of iterations required to get to the minimum. 
{\bf Right:} Performance does not improve after a critical batchsize is reached. }}
\label{fig:2}
\end{figure}

\begin{figure}[h]
\centering
\includegraphics[width=0.34\columnwidth, height=0.17\textheight]{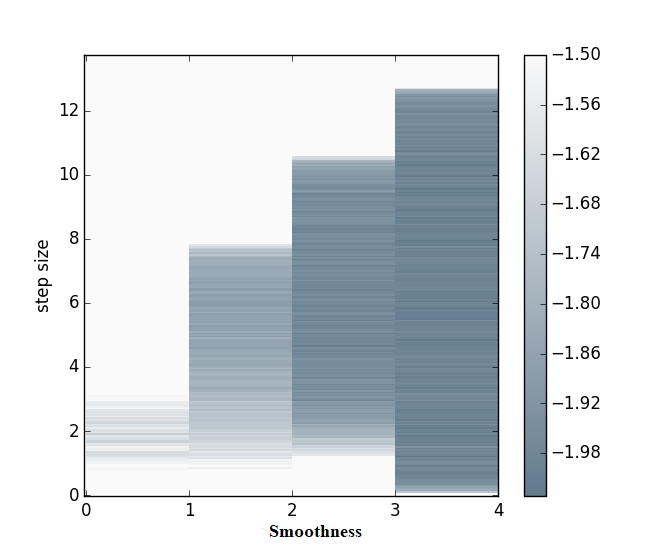}
\includegraphics[width=0.34\columnwidth, height=0.17\textheight]{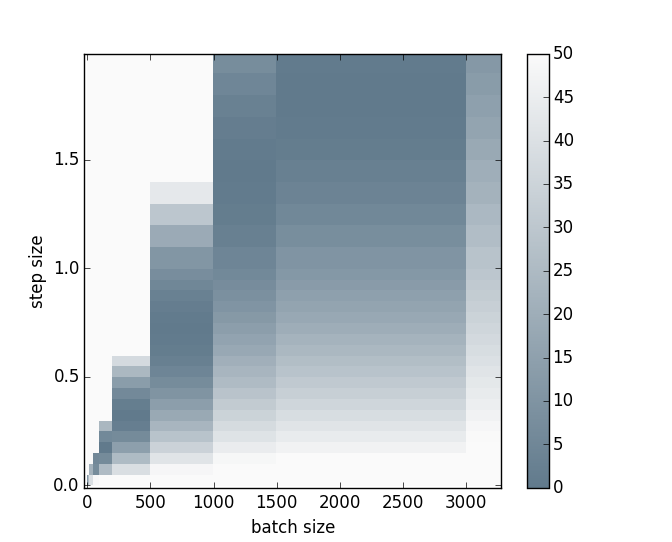}
\caption{{\small Illustration of the relation between stepsize, smoothness and batchsize. Reported is the excess risk. 
{\bf Left:} Stepsize increases with smoothness along the scale of Sobolev spaces for a smooth well-specified model.
{\bf Right:} Stepsize depends linearly on the batchsize. }}
\label{fig:3}
\end{figure}

{\bf (1) Preconditioning reduces the number of iterations required to get to the minimum: } 
Here, we pick up Example \ref{ex:pre} and let $f_\cH$ be a sum of Gaussians with width $s=1$. 
We take $n=3\cdot 10^3$, $b=300$ and $\gamma = 10^{-3}$. Plotted in the left plot in Fig. \ref{fig:2} are the test errors 
using three different Gaussians generating the Gaussian scale $\cH_{2} \subset \cH_1 \subset \cH_{0.5}$
with width $\sigma \in \{0.5, 1, 2\}$. The operator $L=e^{\frac{1}{2}\Delta}: \cH_2 \to \cH_1$ acts as a preconditioner, yielding less  
iterations as predicted by Cor. \ref{cor:pre-SGD}. On the other hand, we also see that a smoother model requires less iterations (see Cor. \ref{cor:ratesSGD} and \ref{cor:ratesSGD2}), that is, 
regularization in $\cH_{0.5}$ slows down convergence.  

{\bf (2) Existence of a critical batchsize: } 
The minimizer $f_\cH$ is chosen to be a linear combination of Mat\'ern kernels of order 3 and belongs therefore to the Sobolev space $H^{3+\frac{1}{2}}(\mbr)$.  
We perform SGD with constant stepsize $\gamma=10$, $T=3$ and $n=3\cdot 10^3$ in the Sobolev space $H^{3+\frac{1}{2}}(\mbr)$.  
Then our model is well-specified.  The batchsize ranges in $\{1,..., n\}$.  
The result is given in the second plot in Fig. \ref{fig:2}. As predicted by Cor. \ref{cor:ratesSGD}, \ref{cor:ratesSGD2} (see also remark (b) in the discussion), 
performance does not improve anymore after a critical batchsize is reached. 

{\bf (3) Stepsize increases with smoothness along the scale of Sobolev spaces for a smooth well-specified model: } The minimizer $f_\cH$ is a linear 
combination of the Mat\'ern Kernel of order $3$. We perform one pass SGD with $n=10^3$ in Sobolev spaces $H^{s+\frac{1}{2}}(\mbr)$, with $s=0,1,2,3$ with 
Mat\'ern Kernels of order $s$. Cor. \ref{cor:ratesSGD} predicts that with increasing degree of smoothness $s$, the stepsize $\gamma$ increases (equivalently, the 
number of iterations decreases). This is shown in the left plot in Fig. \ref{fig:3}. 

{\bf (4) Stepsize depends linearly on the batchsize: } 
The minimizer $f_\cH$ is chosen to be a sum of Gaussians as in Eq. \ref{eq:gauss}. 
We perform one pass SGD, i.e. $T = n/b$, with $n=3\cdot 10^3$ in the Sobolev space $H^{3+\frac{1}{2}}(\mbr)$ by utilizing the \emph{Mat\'ern} Kernel of order $3$.  
Then our model is well-specified. 
The result is given in the right plot in Fig. \ref{fig:3}. As predicted by Cor. \ref{cor:ratesSGD}, \ref{cor:ratesSGD2}, performance remains largely constant as 
$\gamma \cdot b$ remains constant.


\vspace{2cm}

\noindent
{\Large \bf Acknowledgments}
\\
\\
N.M. acknowledges funding by the Deutsche Forschungsgemeinschaft 
(DFG, German Research Foundation) under Germany's Excellence Strategy \emph{The Berlin Mathematics Research Center MATH+ } (EXC-2046/1, project ID: 390685689). 
\\
E.R. acknowledges funding by the \emph{Potsdam Graduate School}. 
\\
\\
N.M. is grateful for fruitful discussions with Markus Klein and helpful comments. 
N.M. is also thankful to Peter Math\'e for a nice conversation about regularization in Hilbert Scales. 


\bibliographystyle{plain}
\bibliography{bib_SGD}






\newpage 
\appendix

{\bf \Large Appendix }

\section{Notation}
\label{app-notation}
To begin with our error analysis we introduce some further notation. 
We consider the two different cases where $s\geq 0$ and $s<0$.   
If $s \geq 0$ we know that under Assumption \ref{ass:bounded}  and by definition of $L: \cD(L)\subset \cH \to \cH$, the operator 
$\cB_s:=\cS L^{-s}:  \cH \to L^2(\cH, \rho_X)$ is bounded.  For $s<0$ we let Assumption \ref{ass:link-condition2}, $2.$, be satisfied. 
We introduce the adjoint $\cB_s^*:  L^2(\cH, \rho_X) \to \cH$, given by  
\[  \cB_s^*g = \int_\cX g(x)L^{-s}K_x\; d\rho_X(x)\;.  \]
The non-centered covariance operator $\cT_s:=\cB_s^*\cB_s : \cH \to \cH$ 
satisfies 
\[  \cT_s = \int_{\cX} \inner{\cdot , L^{-s}K_x }_\cH L^{-s}K_x \;d\rho_X(x)\;. \]
Finally, we define $\cL_s:=\cB_s\cB_s^*:  L^2(\cH, \rho_X) \to L^2(\cH, \rho_X) $. 
Under the given assumptions, the operators $\cT_s$ and $\cL_s$ are positive and trace class 
and they satisfy 
\[ ||\cT_s|| \leq trace(\cT_s) = \int_{\cX} ||L^{-s}K_x||_\cH^2 \;d\rho_X(x) \leq  \kappa_s^2  \] 
as well as $trace(\cT_s)=trace(\cL_s)$. 
In particular, we denote $\cB_0=\cS$, $\cT_0=\cT$ and $\cL_0=\cL$. 
Moreover, we have the isometry property 
\begin{equation}
\label{eq:iso1}
 ||\cS f||^2_{L^2} = \inner{\cS f, \cS f}_{L^2} = \inner{\cS^*\cS f,  f}_{\cH} = \inner{\sqrt{\cT} f, \sqrt{\cT} f}_{\cH} = ||\sqrt{\cT}f||^2_\cH \;,
\end{equation} 
and more generally, 
\begin{equation}
\label{eq:iso2}
   ||\cS f||_{L^2}  = || \cB_s L^s f ||_{L^2} =  ||\sqrt{\cT_s} L^s   f||_\cH   \,, 
\end{equation}    
for all $f \in \cH$.


In addition, we introduce the empirical counterparts of the above operators. If 
\[ (\bx, \by)=((x_1,y_1), ..., (x_n, y_n)) \in (\cX \times \cY)^n \]
is an i.i.d. sample from $\rho$, we denote: 
\begin{align*}
 \cS_{\bx,s} &: \cH \to \mbr^n \;, \quad (\cS_{\bx,s} f)_j = \inner{f,L^{-s} K_{x_j}}_\cH \;, \quad j=1,...,n \;, \\
 \cS^*_{\bx ,s}&: \mbr^n \to \cH\;, \quad  \cS^*_{\bx,s} \by = \frac{1}{n}\sum_{j=1}^n y_j L^{-s}K_{x_j} \;, \\
\cT_{\bx,s} &: \cH \to \cH\;, \quad \cT_{\bx,s}  = \frac{1}{n}\sum_{j=1}^n   \inner{\cdot, L^{-s}K_{x_j}}_\cH L^{-s}K_{x_j} \;, \\ 
 \cL_{\bx,s} &: \mbr^n \to \mbr^n  \;, \quad  \cL_{\bx,s} = \left(\frac 1 n \inner{L^{-s}K_{x_i}, L^{-s} K_{x_j}}_\cH \right)_{i,j=1,...,n} \;.
\end{align*}


\section{Calculations in  Hilbert Scales}
\label{app-HS}
In this section we collect some preparatory results related to Hilbert scales. 
\begin{lemma}
\label{lem:adjoints}
Let $s \geq 0$ and denote by $\cS_s=(\cS_{|_{\cH_s}})$ the restriction of $\cS$ to $\cH_s \subseteq \cH$.  
The adjoint operator $\cS^*_s: L^2(\cH, \rho_X) \to \cH_s $ is given by $\cS^*_s = L^{-2s}\cS^*$.
\end{lemma}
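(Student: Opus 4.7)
The plan is to verify the defining adjoint identity $\langle \cS_s f, g\rangle_{L^2} = \langle f, \cS_s^* g\rangle_s$ for every $f \in \cH_s$ and $g \in L^2(\cX,\rho_X)$, with the candidate $\cS_s^* g := L^{-2s}\cS^* g$. Once this identity is established, uniqueness of the adjoint of the bounded operator $\cS_s:\cH_s \to L^2(\cX,\rho_X)$ does the rest. Note that $\cS_s$ is bounded because $\cS$ is bounded on $\cH$ and the embedding $\cH_s \hookrightarrow \cH$ is continuous (since $s\geq 0$).

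First I would check that the candidate actually lands in $\cH_s$. Because $L$ is strictly positive and self-adjoint, $L^{-1}$ is bounded on $\cH$ and by spectral calculus so is $L^{-u}$ for every $u\geq 0$. Hence $L^{-2s}\cS^* g \in \cH$, and moreover $Ran(L^{-s}) = \cD(L^s) = \cH_s$, so writing $L^{-2s}\cS^* g = L^{-s}(L^{-s}\cS^* g)$ shows $L^{-2s}\cS^* g \in \cH_s$ and $L^s(L^{-2s}\cS^* g) = L^{-s}\cS^* g \in \cH$.

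Next I would expand the inner product on $L^2$. For $f \in \cH_s \subseteq \cH$, the ordinary adjoint relation in $\cH$ gives
\[
\langle \cS_s f, g\rangle_{L^2} \;=\; \langle \cS f, g\rangle_{L^2} \;=\; \langle f, \cS^* g\rangle_\cH.
\]
Now I would rewrite the right-hand side in terms of the Hilbert scale inner product using self-adjointness of $L^s$ on its domain:
\[
\langle f, \cS^* g\rangle_\cH
= \langle f, L^s(L^{-s}\cS^* g)\rangle_\cH
= \langle L^s f, L^{-s}\cS^* g\rangle_\cH
= \langle L^s f, L^s(L^{-2s}\cS^* g)\rangle_\cH
= \langle f, L^{-2s}\cS^* g\rangle_s.
\]
The first equality uses $f\in\cD(L^s)$ together with $L^{-s}\cS^* g \in \cH$, so that the pairing $\langle f, L^s(\cdot)\rangle_\cH = \langle L^s f, \cdot\rangle_\cH$ is legitimate by the defining self-adjointness relation of $L^s$. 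The last equality is just the definition \eqref{eq:def-norm} of the scale inner product.

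The argument is essentially routine; the only point requiring any care is making sure every application of the unbounded operator $L^s$ occurs on an element of its domain, which is guaranteed by the boundedness of $L^{-s}$ for $s\geq 0$ and the nesting $\cD(L^{2s}) \subseteq \cD(L^s)$. Thus $L^{-2s}\cS^*$ satisfies the adjoint identity, and by uniqueness $\cS_s^* = L^{-2s}\cS^*$.
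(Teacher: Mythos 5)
Your proof is correct and follows essentially the same route as the paper: both verify the adjoint identity by writing $\langle \cS f, g\rangle_{L^2} = \langle f, \cS^* g\rangle_{\cH}$ and then inserting $L^{-s}L^s$ and using self-adjointness of $L^s$ to convert to the $\cH_s$ inner product. Your additional checks (that $L^{-2s}\cS^* g$ lies in $\cH_s$ and that $L^s$ is only ever applied on its domain) are sensible bookkeeping the paper leaves implicit.
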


\begin{proof}[Proof of Lemma \ref{lem:adjoints}]
Let $h \in \cH_s$. Then $\cS_s h = \cS h$ and since $L^s$ is self-adjoint, we may write 
\begin{align*}
\inner{\cS_s h , g}_{L^2}&=\inner{\cS h , g}_{L^2} = \inner{ h , \cS^* g}_{\cH} \\
&= \inner{L^{-s} L^{s} h ,L^{s} L^{-s} \cS^* g}_{\cH} =  \inner{L^{s} h ,L^{s} L^{-2s} \cS^* g}_{\cH} \\
&=\inner{ h , L^{-2s} \cS^* g}_{\cH_s}  \;. 
\end{align*}
\end{proof}

\begin{lemma}[Heinz Inequality \cite{engl96}, Prop.8.21]
\label{lem:heinz}
Let $A$ and $B$ be two densely defined unbounded selfadjoint strictly positive operators on $\cH$ with $\cD(A)\subset \cD(B)$ and 
\[ ||Bh|| \leq ||Ah||\;, \quad \mbox{for all } h \in \cD(A) \;. \]
Then, for any $\nu \in [0,1]$ we have $\cD(A^\nu )\subset \cD(B^\nu )$ and 
\[ ||B^\nu h|| \leq ||A^\nu h||\;, \quad \mbox{for all } h \in \cD(A^\nu ) \;. \]
\end{lemma}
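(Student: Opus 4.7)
The plan is to reduce the claim to the operator-monotonicity of the function $x \mapsto x^\nu$ on $[0,\infty)$ for $\nu\in[0,1]$ (the scalar heart of the Löwner--Heinz theorem) and to verify this via the standard integral representation of fractional powers, taking the hypothesis $\|Bh\|\leq\|Ah\|$ as a quadratic-form inequality $B^2\leq A^2$. The boundary cases $\nu=0$ and $\nu=1$ are trivial (the latter is the hypothesis), so I would fix $\nu\in(0,1)$ throughout.

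First I would recast the hypothesis as a form inequality. Squaring gives $\langle B^2 h, h\rangle = \|Bh\|^2 \leq \|Ah\|^2 = \langle A^2 h, h\rangle$ for all $h\in \cD(A)$. Since $A,B$ are self-adjoint and strictly positive, the sesquilinear forms attached to $A^2$ and $B^2$ are closable; denote the closures $q_A$, $q_B$ with form domains $\cD(A)$, $\cD(B)$. The hypothesis yields $\cD(q_A)\subseteq\cD(q_B)$ and $q_B(h)\leq q_A(h)$ on $\cD(q_A)$. The conclusion $\|B^\nu h\|\leq\|A^\nu h\|$ is equivalent to the form inequality $\langle B^{2\nu}h,h\rangle\leq\langle A^{2\nu}h,h\rangle$ on $\cD(A^\nu)$.

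Next I would establish a resolvent comparison: for every $\lambda>0$ and every $h\in\cH$,
\[
\langle (A^2+\lambda)^{-1}h,h\rangle \;\leq\; \langle (B^2+\lambda)^{-1}h,h\rangle.
\]
The cleanest proof uses the variational identity, valid for any non-negative self-adjoint $X$,
\[
\langle (X+\lambda)^{-1}h,h\rangle \;=\; \sup_{v\in\cD(q_X)}\bigl(\,2\,\mathrm{Re}\langle h,v\rangle - q_X(v) - \lambda\|v\|^2\,\bigr),
\]
applied with $X=A^2$ and $X=B^2$: since $q_B\leq q_A$ on $\cD(q_A)$ and $\cD(q_A)\subseteq\cD(q_B)$, the supremum over $v$ is (no smaller, hence) at least as large for $B^2$ as for $A^2$, giving the desired inequality.

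Now I would invoke the scalar integral representation $x^\nu = \frac{\sin(\pi\nu)}{\pi}\int_0^\infty \lambda^{\nu-1}\frac{x}{x+\lambda}\,d\lambda$ for $x\geq 0$, combined with the spectral theorem applied to $A^2$ and $B^2$, to write, for $h\in\cD(A^\nu)$,
\[
\|A^\nu h\|^2 = \frac{\sin(\pi\nu)}{\pi}\int_0^\infty \lambda^{\nu-1}\bigl\langle A^2(A^2+\lambda)^{-1}h,h\bigr\rangle\,d\lambda.
\]
Using $X(X+\lambda)^{-1} = I - \lambda(X+\lambda)^{-1}$ and subtracting the analogous identity for $B$ (the $I$ terms cancel), I obtain
\[
\|A^\nu h\|^2 - \|B^\nu h\|^2 \;=\; \frac{\sin(\pi\nu)}{\pi}\int_0^\infty \lambda^{\nu}\bigl(\langle(B^2+\lambda)^{-1}h,h\rangle - \langle(A^2+\lambda)^{-1}h,h\rangle\bigr)\,d\lambda.
\]
By the resolvent comparison above the integrand is non-negative, so $\|B^\nu h\|\leq\|A^\nu h\|$. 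The same bound shows the form for $B^{2\nu}$ is finite on $\cD(A^\nu)$, which (together with self-adjointness of $B^{2\nu}$) gives the inclusion $\cD(A^\nu)\subseteq\cD(B^\nu)$.

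The main obstacle is purely technical: for unbounded $A,B$ one must be careful because $\cD(A^2)\subseteq\cD(B^2)$ does not follow automatically from $\cD(A)\subseteq\cD(B)$, so the integral representation cannot be applied naively on operator domains. The strategy above sidesteps this by working throughout with the closed quadratic forms $q_{A^{2\nu}}$ and $q_{B^{2\nu}}$, whose form domains coincide with $\cD(A^\nu)$ and $\cD(B^\nu)$ respectively; the spectral theorem then makes every identity above a scalar statement fibre-wise against the joint spectral measures, where the monotonicity of $x\mapsto x^\nu$ is immediate.
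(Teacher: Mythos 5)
Your proof is correct, but note that the paper itself does not prove this lemma at all: it is imported verbatim as Proposition 8.21 of the cited monograph of Engl, Hanke and Neubauer, so there is no in-paper argument to compare against. What you have written is a self-contained proof of the Heinz (L\"owner--Heinz) inequality along the standard route: recast the hypothesis as the form inequality $B^2 \leq A^2$ with form domains $\cD(A)\subset\cD(B)$, deduce the reverse ordering of resolvents via the variational characterization of $\langle (X+\lambda)^{-1}h,h\rangle$, and transport this through the integral representation $x^{\nu}=\frac{\sin(\pi\nu)}{\pi}\int_0^\infty \lambda^{\nu-1}\frac{x}{x+\lambda}\,d\lambda$. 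All the key steps are sound, including the final observation that finiteness of the $B$-integral on $\cD(A^{\nu})$ yields the domain inclusion $\cD(A^{\nu})\subset\cD(B^{\nu})$. One presentational caveat: writing $\|A^{\nu}h\|^2-\|B^{\nu}h\|^2$ as a single integral of the difference presupposes that $\|B^{\nu}h\|^2$ is already known to be finite; the clean order of argument is to first note the pointwise domination $\lambda^{\nu-1}\langle B^2(B^2+\lambda)^{-1}h,h\rangle \leq \lambda^{\nu-1}\langle A^2(A^2+\lambda)^{-1}h,h\rangle$ (which is exactly your resolvent comparison after the identity $X(X+\lambda)^{-1}=I-\lambda(X+\lambda)^{-1}$), conclude finiteness and the bound by Tonelli against the spectral measure of $B^2$, and only then form the difference. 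This is clearly what you intend, so it is a matter of ordering rather than a gap.
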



\[\]

\begin{lemma}[\cite{bottcher2006convergence}, Proposition 2.1]
\label{lem:ranges}
Let $S$ and $T$ be self-adjoint bounded operators on $\cH$ and suppose that $T$ is injective. If $||Sf|| \leq C ||Tf||$ for any $f \in \cH$ and 
for some $C<\infty$, then $Ran(S) \subset Ran(T)$ and $||T^{-1}S||\leq C$. 
\end{lemma}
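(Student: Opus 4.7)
My plan is to construct an explicit bounded operator $X$ on $\cH$ satisfying $S = TX$, from which both conclusions follow immediately. This is the classical Douglas range-inclusion trick, specialized to the self-adjoint setting.

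First, I would define a linear map $A : Ran(T) \to \cH$ by the rule $A(Tf) := Sf$. Injectivity of $T$ makes this well-defined: if $Tf_1 = Tf_2$ then $T(f_1 - f_2)=0$ forces $f_1 = f_2$ and hence $Sf_1 = Sf_2$. The hypothesis $\norm{Sf} \leq C\norm{Tf}$ says exactly that $\norm{A(Tf)} \leq C \norm{Tf}$ for every $f \in \cH$, so $A$ is a bounded linear operator on $Ran(T)$ with $\norm{A} \leq C$.

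Next I would extend $A$ by continuity. Since $T$ is self-adjoint and injective,
\[ \overline{Ran(T)} \;=\; Ker(T^*)^{\perp} \;=\; Ker(T)^{\perp} \;=\; \cH, \]
so $A$ admits a unique bounded extension $\tilde A : \cH \to \cH$ with $\norm{\tilde A} \leq C$ and $S = \tilde A T$. The crucial step, which is the only one using self-adjointness of both $S$ and $T$, is to take adjoints:
\[ S \;=\; S^{*} \;=\; (\tilde A T)^{*} \;=\; T^{*} \tilde A^{*} \;=\; T \tilde A^{*}. \]
For any $f \in \cH$ this gives $Sf = T(\tilde A^{*} f) \in Ran(T)$, hence $Ran(S) \subseteq Ran(T)$. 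Because $T$ is injective, $T^{-1}S$ is unambiguously defined on $\cH$ and equals $\tilde A^{*}$, so $\norm{T^{-1}S} = \norm{\tilde A^{*}} = \norm{\tilde A} \leq C$.

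The only subtle point, and the place where the argument would fail in the non-self-adjoint case, is that the left factorization $S = \tilde A T$ by itself does \emph{not} yield the range inclusion $Ran(S) \subseteq Ran(T)$; one genuinely needs to convert it into a right factorization $S = T\tilde A^{*}$, and this conversion is precisely what the joint self-adjointness of $S$ and $T$ buys. Apart from this observation, the proof is essentially a density argument plus one adjoint computation.
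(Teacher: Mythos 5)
Your argument is correct and complete: the factorization $S=\tilde A T$ obtained by extending $A(Tf):=Sf$ over the dense range of the injective self-adjoint $T$, followed by the adjoint computation $S=S^*=T\tilde A^*$, yields both the range inclusion and the bound $\norm{T^{-1}S}=\norm{\tilde A^*}\leq C$. The paper does not prove this lemma at all --- it imports it verbatim from the cited reference --- and your proof is precisely the standard Douglas-factorization argument specialized to the self-adjoint injective setting, so there is nothing to object to; at most one could note that well-definedness of $A$ already follows from $\norm{S(f_1-f_2)}\leq C\norm{T(f_1-f_2)}=0$ without invoking injectivity, which is only genuinely needed to make sense of $T^{-1}$.
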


\[\]

\begin{lemma}
\label{lem:prelim3}
Let Assumption \ref{ass:link-condition3} be satisfied. 
\begin{enumerate}
\item
For any $\nu \in [0,1]$ it holds 
\begin{align}
\cH_{\nu( a+s)} &\subset  Ran(\cT_s^{\nu/2}) = \cD(\cT_s^{-\nu/2})\;, \no \\
\label{eq:111}
||\cT_s^{\nu/2}f||_\cH &\geq \underbar m^\nu ||f||_{-\nu( a+s)} \;, \quad f \in \cH \;,   \\
\label{eq:11}
||\cT_s^{-\nu/2}f||_\cH &\leq \underbar m^{-\nu} ||f||_{\nu( a+s)} \;, \quad f \in \cH_{\nu( a + s)} \;.
\end{align}
\item 
For any $a >0$, $s\geq 0$ we have 
\[ ||L^{-s} f||_\cH \leq  \underbar m ^{-\frac{s}{a+s}}||\cT_s^{\frac{s}{2(s+a)}}f||_\cH \;. \]
\item If $0 \leq s \leq a$ we have 
\[  ||L^{-(a-s)}f||_\cH \leq   \underbar m^{-\frac{a-s}{a+s}} ||\cT_s^{\beta}f||_\cH    \]
and therefore 
\[ Ran(L^{-(a-s)}) \subseteq Ran(\cT_s^\beta ) \;, \quad \beta = \frac{a-s}{2(a+s)}\leq \frac{1}{2}\;. \]
\end{enumerate}
\end{lemma}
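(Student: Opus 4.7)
The plan is to derive all three parts from a single master inequality and then specialize. The master inequality is the translation of the link condition into a comparison between $L^{-(a+s)}$ and $\cT_s^{1/2}$: applying \eqref{eq:link2} to $h = L^{-s} g$ for $g \in \cH$, and using the isometry \eqref{eq:iso2} together with $\norm{L^{-s}g}_{-a} = \norm{L^{-(a+s)}g}_\cH$, I obtain
\[ \underbar m \, \norm{L^{-(a+s)}g}_\cH \;\leq\; \norm{\cT_s^{1/2} g}_\cH, \qquad g \in \cH. \]
Since $L$ is strictly positive, $L^{-(a+s)}$ is injective, and this inequality then forces $Ker(\cT_s^{1/2}) = \{0\}$. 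Thus $\cT_s^{\nu/2}$ is strictly positive for every $\nu > 0$, which removes the only possible obstruction to applying Lemmas \ref{lem:heinz} and \ref{lem:ranges} in what follows.

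To upgrade the master inequality to arbitrary $\nu \in [0,1]$, I would apply Lemma \ref{lem:heinz} with $A = \underbar m^{-1}\cT_s^{1/2}$ and $B = L^{-(a+s)}$, both bounded self-adjoint and strictly positive, whose hypothesis $\norm{Bg} \leq \norm{Ag}$ is precisely the master inequality. The conclusion
\[ \norm{L^{-\nu(a+s)} g}_\cH \;\leq\; \underbar m^{-\nu}\, \norm{\cT_s^{\nu/2} g}_\cH, \qquad g \in \cH, \]
is exactly \eqref{eq:111} after trivial rearrangement. Feeding this inequality into Lemma \ref{lem:ranges} with $S := L^{-\nu(a+s)}$ and $T := \cT_s^{\nu/2}$ then yields $Ran(L^{-\nu(a+s)}) \subseteq Ran(\cT_s^{\nu/2})$ together with $\norm{\cT_s^{-\nu/2} L^{-\nu(a+s)}} \leq \underbar m^{-\nu}$. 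Since $Ran(L^{-\nu(a+s)}) = \cH_{\nu(a+s)}$ by the defining property of the Hilbert scale, this gives the claimed inclusion; writing any $f \in \cH_{\nu(a+s)}$ as $f = L^{-\nu(a+s)} g$ with $\norm{g}_\cH = \norm{f}_{\nu(a+s)}$ and substituting yields \eqref{eq:11}.

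Parts (2) and (3) then drop out as specializations. For (2), the choice $\nu = s/(a+s) \in [0,1)$ (admissible since $a>0$, $s \geq 0$) makes the exponents match: $\nu(a+s) = s$ and $\nu/2 = s/(2(a+s))$, so \eqref{eq:111} with this $\nu$ is exactly the stated bound. For (3), taking $\nu = (a-s)/(a+s) \in [0,1]$ (admissible since $0 \leq s \leq a$) yields $\nu(a+s) = a-s$ and $\nu/2 = \beta$, delivering both the norm bound and, via the range inclusion above, $Ran(L^{-(a-s)}) \subseteq Ran(\cT_s^\beta)$. The real subtlety in the whole argument is the injectivity step: without the observation that the master inequality forces $Ker(\cT_s^{1/2}) = \{0\}$, Lemma \ref{lem:heinz} as stated (strict positivity required) would not apply directly, and the range inclusion via Lemma \ref{lem:ranges} would be compromised.
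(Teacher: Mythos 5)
Your proof is correct and follows essentially the same route as the paper: translate the link condition into the comparison $\underline{m}\,\|L^{-(a+s)}g\|_{\cH}\leq\|\cT_s^{1/2}g\|_{\cH}$, upgrade to fractional powers via the Heinz inequality, and pass to range inclusions via Lemma \ref{lem:ranges} — the only difference being that the paper delegates part (1) to a cited proposition of \cite{egger2005preconditioning} while you derive it directly, and you obtain parts (2) and (3) by specializing $\nu$ rather than invoking Heinz a second time. Your explicit check that the master inequality forces injectivity of $\cT_s^{1/2}$ (so that Lemma \ref{lem:ranges} applies and $\cT_s^{-\nu/2}$ is well defined on the range) is a detail the paper glosses over, and it is handled correctly.
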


\begin{proof}[Proof of Lemma \ref{lem:prelim3}]
\begin{enumerate}
\item The first part follows from 
\cite[Proposition]{egger2005preconditioning}.
\item The second assertion follows from \eqref{eq:11} by applying the Heinz Inequality, Lemma \ref{lem:heinz}. 
\item The third part follows from \eqref{eq:111} by applying the Heinz Inequality, Lemma \ref{lem:heinz} and Lemma \ref{lem:ranges}. 
\end{enumerate}
\end{proof}

\[\]

\begin{lemma}
\label{lem:prelim}
Let Assumption \ref{ass:link-condition2} be satisfied. 
\begin{enumerate}
\item For any $\nu \in [0,1]$ we have 
\begin{align}
\cD(\cT_s^{-\nu/2}) &= Ran(\cT_s^{\nu/2}) \subset \cH_{\nu(a+s)} \;, \no \\
\label{eq:110}
||\cT_s^{\nu/2}f||_\cH &\leq \bar m^\nu ||f||_{-\nu(a+s)} \;, \quad f \in \cH \;, \\
\label{eq:10}
||\cT_s^{-\nu/2}f||_\cH &\geq \bar m^{-\nu} ||f||_{\nu(a+s)} \;, \quad f \in \cD(\cT_s^{-\nu/2}) \;.
\end{align}
\item For any $f \in \cH$ and $-a/2 \leq s \leq 0$ we have 
\[  ||L^{-s}f||_\cH \leq \bar m ^{-\frac{s}{a+s}}||\cT_s^{\frac{s}{2(s+a)}}f||_\cH \;. \]
\item Assume that for $-a/2 \leq s \leq 0$ and $r \geq 0$ 
\[ f = L^{-s}\cT_s^{\beta}h\;, \quad \beta = \frac{r-s}{2(a+s)}\;,\]
for some $h \in \cH$. Then $f \in \cH$. 
\end{enumerate}
\end{lemma}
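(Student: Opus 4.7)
The plan is to mirror the proof of Lemma \ref{lem:prelim3}, using the same three tools---the isometry \eqref{eq:iso2}, the Heinz inequality (Lemma \ref{lem:heinz}), and the range-inclusion lemma (Lemma \ref{lem:ranges})---but with the direction of the link condition reversed.

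For part 1, I first establish \eqref{eq:110} at $\nu = 1$. For $f \in \cH$, combining the isometry with the continuous extension of $\cS$ to $\cH_{-a}$ guaranteed by Assumption \ref{ass:link-condition2}(2) gives
\[
\|\cT_s^{1/2}f\|_\cH \;=\; \|\cB_s f\|_{L^2} \;=\; \|\cS L^{-s}f\|_{L^2} \;\leq\; \bar m\,\|L^{-s}f\|_{-a} \;=\; \bar m\,\|L^{-(a+s)}f\|_\cH \;=\; \bar m\,\|f\|_{-(a+s)},
\]
where $L^{-s}f$ is read as an element of $\cH_s \hookrightarrow \cH_{-a}$. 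Applying Lemma \ref{lem:heinz} to the strictly positive self-adjoint operators $\bar m\,L^{-(a+s)}$ and $\cT_s^{1/2}$ interpolates this inequality to every $\nu \in [0,1]$, which is \eqref{eq:110}. Feeding \eqref{eq:110} into Lemma \ref{lem:ranges} with $S = \cT_s^{\nu/2}$, $T = L^{-\nu(a+s)}$, $C = \bar m^\nu$ then yields both $Ran(\cT_s^{\nu/2}) \subseteq Ran(L^{-\nu(a+s)}) = \cH_{\nu(a+s)}$ and $\|L^{\nu(a+s)}\cT_s^{\nu/2}\| \leq \bar m^\nu$; substituting $f = \cT_s^{\nu/2}g$ in the operator bound produces \eqref{eq:10}.

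Parts 2 and 3 are direct specializations of part 1. For part 2, I pick $\nu_0 := -s/(a+s)$; the hypothesis $-a/2 \leq s \leq 0$ ensures $\nu_0 \in [0,1]$, and this choice gives $L^{-\nu_0(a+s)} = L^s$, so the operator bound from part 1 reads $\|L^{-s}\cT_s^{\nu_0/2}\| \leq \bar m^{\nu_0}$. Writing $f = \cT_s^{\nu_0/2}(\cT_s^{-\nu_0/2}f)$ (with the right-hand side understood as $+\infty$ when $f \notin \cD(\cT_s^{-\nu_0/2})$) rearranges this into the claimed inequality, using $-\nu_0/2 = s/(2(a+s))$. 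For part 3, set $\beta_0 := -s/(2(a+s)) = \nu_0/2$; since $r \geq 0$ implies $\beta \geq \beta_0$, we have $\cT_s^\beta h = \cT_s^{\beta_0}(\cT_s^{\beta-\beta_0}h) \in Ran(\cT_s^{\beta_0}) \subseteq \cH_{-s} = \cD(L^{-s})$ by part 1, and therefore $f = L^{-s}\cT_s^\beta h$ is a well-defined element of $\cH$.

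The one technical subtlety is that $L^{-s}$ is unbounded on $\cH$ when $s < 0$, so the factorization $\cB_s = \cS L^{-s}$ must be read through the continuous extension of $\cS$ to $\cH_{-a}$: for $f \in \cH$, $L^{-s}f$ lives in $\cH_s \hookrightarrow \cH_{-a}$, and it is the extended $\cS$ that acts on it. Apart from this bookkeeping, every step is a direct dual of the corresponding step in Lemma \ref{lem:prelim3}.
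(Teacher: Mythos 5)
Your proof is correct and, where the paper actually gives an argument, it is the same argument. For part 1 the paper simply cites \cite[Proposition]{egger2005preconditioning}; you reconstruct that proposition's proof (the isometry $\|\cT_s^{1/2}f\|_\cH=\|\cB_s f\|_{L^2}$, the link condition \eqref{eq:link1} read through the continuous extension of $\cS$ to $\cH_{-a}$, Heinz interpolation via Lemma \ref{lem:heinz}, and the range lemma, Lemma \ref{lem:ranges}), which is exactly what that reference contains, so this is a more self-contained rendering rather than a different route. Parts 2 and 3 match the paper's proof up to bookkeeping: for part 3 the paper combines the norm bound of part 2 with Lemma \ref{lem:ranges} to get the slightly stronger conclusion $Ran(L^{-s}\cT_s^{\beta})\subseteq Ran(\cT_s^{r/(2(a+s))})$ (which is what the surrounding text actually uses), whereas you argue via the domain inclusion $Ran(\cT_s^{\beta_0})\subseteq\cH_{-s}=\cD(L^{-s})$ from part 1; both establish $f\in\cH$ as stated. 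The only point worth making explicit is that the constant $\bar m$ in \eqref{eq:link1} survives the passage from $\cH$ to $\cH_{-a}$ by density of the embedding $\cH\hookrightarrow\cH_{-a}$ and continuity of the extended $\cS$ --- you flag the extension issue but could state this one-line density argument.
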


\begin{proof}[Proof of Lemma \ref{lem:prelim}]
\begin{enumerate}
\item
The first part follows from 
\cite[Proposition]{egger2005preconditioning}. 
\item The second assertion follows from \eqref{eq:10} by applying the Heinz Inequality, Lemma \ref{lem:heinz}.  
\item The third part follows from $(ii)$. Indeed, we have for any $h \in \cH$
\begin{align*}
||L^{-s} \cT_s^\beta h||_\cH &\leq  \bar m ^{-\frac{s}{a+s}}||\cT_s^{\frac{s}{2(s+a)}} \cT_s^\beta h||_\cH \\
&= \bar m ^{-\frac{s}{a+s}}||\cT_s^{\frac{r}{2(s+a)}} h||_\cH \;.
\end{align*}
Thus, thanks to  Lemma \ref{lem:ranges} we obtain $Ran(L^{-s} \cT_s^\beta ) \subseteq  Ran(\cT_s^{\frac{r}{2(s+a)}})$. The claim follows since 
$Ran(\cT_s^\alpha)\subseteq \cH$ if $\alpha \geq 0$.
\end{enumerate}
\end{proof}


\section{Error Bounds Tail-Averaged Gradient Descent in Hilbert Scales}
\label{app-error-analysis-smoothness-promoting}
In this section we derive learning rates for tail-averaged Gradient Descent. To this end, we consider the 
GD recursion in $\cH_s$ with constant stepsize $\gamma >0$ as given by 
\begin{equation}
\label{eq:barGD0}
g_{t+1} =  g_t - \gamma \frac{1}{n}\sum_{j=1}^n( g_t(x_j)-y_j)L^{-2s}K_{x_j}\;, 
\end{equation} 
where $t=0,...,T$. The tail-averaged updates are then defined by 
\begin{equation}
\label{eq:barGD}
 \bar g_T := \frac{2}{T}\sum_{t=\floor{T/2}+1}^T  g_t \;.  
\end{equation} 
Repeating the arguments in  \cite{mucke2019beating} shows that $\bar g_T$ can be written in closed form as 
\[ \bar g_T  =  L^{-s} \bar G_T(\cT_{\bx , s}) \cB^*_{\bx , s} \bar y \;,  \]
for a filter function $\bar G_T: (0, \kappa_s^2] \to \mbr$ defined on the spectrum $(0, \kappa_s^2]$ of $\cT_s$, 
see e.g. \cite{bauer2007regularization, engl96, blanchard2018optimal}. 
More precisely, 
\[ \bar G_T(\sigma) = \frac{2}{T}\sum_{t=\floor{T/2}+1}^T  G_t(\sigma ) \;, \]
with $G_t: (0, \kappa_s^2] \to \mbr$ given by 
\[ G_t(\sigma)=\gamma \sum_{k=0}^{t-1}(1-\gamma \sigma)^k \;,\]
being the GD filter function. We further introduce the \emph{residual} $\bar R_T: (0, \kappa_s^2] \to \mbr$ by 
\[ \bar R_T(\sigma )=1-\sigma \bar G_T(\sigma )  \;.\]
For the filter function properties of $\bar G_T$ and $\bar R_T$ we refer to  \cite{mucke2019beating}.

\subsection{Smoothness Promoting GD}

Let $s \geq 0$. In what follows we bound 
$||\cT_s^\alpha L^s (\bar g_T  - f_\cH)||_\cH $ for any $\alpha \in [0,1/2]$ and with $f_\cH \in \cD(L^s)$. Remember, 
according to \eqref{eq:iso2}, we have  
\[  ||\sqrt{\cT_s} L^s (\bar g_T  - f_\cH)||_\cH = || \cS( \bar g_T - f_\cH )||_{L^2}\;, \] 
while choosing $\alpha =0$ gives 
\[ || L^s (\bar g_T  - f_\cH)||_\cH = || \bar g_T - f_\cH ||_{s}\;. \]
We are also interested in deriving the error bound in $\cH$-norm. If the link condition in Assumption \ref{ass:link-condition3} is satisfied, then 
Lemma \ref{lem:prelim3} gives us with $\beta = \frac{a-s}{2(a+s)}\leq \frac{1}{2}$
\begin{align*}
|| \bar g_T - f_\cH  ||_\cH &= || L^{-s}L^s(\bar g_T - f_\cH)  ||_\cH \\
&\leq  \underbar m^{-\frac{a-s}{a+s}} ||\cT_s^{\beta}L^s(\bar g_T - f_\cH)||_\cH \;.
\end{align*}

{\bf Error Decomposition.}

Define 
\begin{align}
\label{eq:tildeu}
 \tilde g_T &:= L^{-s}\bar G_T(\cT_s) \cB^*_s \cS f_\cH \no \\
&= L^{-s}\bar G_T(\cT_s)\cT_s L^s f_\cH   
\end{align}  
and write 
\begin{align}
\label{eq:classical}
  \bar g_T -   f_\cH  &= \underbrace{(\bar g_T -  \tilde g_T)}_{Estimation \;\; error} + \underbrace{(  \tilde g_T -   f_\cH )}_{Approximation \;\; error} \;. 
\end{align} 
Recall that $f_R \in \cH_a$ denotes the minimizer of the distance function $d_s$, defined in \eqref{def:distance1}. Then we obtain 
\begin{align}
\label{eq:I}
\bar g_T -  \tilde g_T &= L^{-s}\bar G_T(\cT_{\bx , s}) \cB^*_{\bx , s} \bar y  -  \tilde g_T  \no \\
&= L^{-s}\bar G_T(\cT_{\bx , s})(\cB^*_{\bx , s} \bar y - \cT_{\bx , s} L^s\tilde g_T ) + 
   L^{-s}\bar R_T(\cT_{\bx , s})L^s\tilde g_T \no \\
   &= L^{-s}\bar G_T(\cT_{\bx , s})\left( (\cB^*_{\bx , s} \bar y -\cB_s^*f_\cH )- (   \cT_{\bx , s} L^s\tilde g_T - \cT_s L^s \tilde g_T ) \right)  \no \\
  & \; \;\; +  L^{-s}\bar G_T(\cT_{\bx , s})\left(  \cB_s^*f_\cH - \cT_s L^s \tilde g_T \right) + L^{-s}\bar R_T(\cT_{\bx , s}) L^s\tilde g_T \;,
\end{align}
and 
\begin{align}
\label{eq:II}
\tilde g_T -   f_\cH &= L^{-s}\bar G_T(\cT_s)\cT_s L^s f_\cH - L^{-s}L^s f_\cH   \no \\
&= L^{-s}  \bar R_T(\cT_s)  L^s f_\cH   \no \\ 
&= L^{-s}  \bar R_T(\cT_s)  L^s( f_\cH - f_{R})+ L^{-s}  \bar R_T(\cT_s)  L^s f_{R} \;. 
\end{align}


\subsubsection{Bounding the Approximation Error}

\begin{proposition}\label{prop:II}
Suppose Assumptions \ref{ass:bounded}, \ref{ass:link-condition3} are satisfied and let $R>0$, $\beta=\frac{a-s}{2(a+s)}$, $\alpha \in [0, \frac{1}{2}]$. Then 
\[  ||\cT_s^\alpha L^s(\tilde g_T -   f_\cH )||_\cH \leq c_{a,s}(\gamma T)^{-\alpha}  \left( d_{s}(R)   + R(\gamma T)^{-\beta}\right) \;, \]
for some $c_{a,s} < \infty$. Moreover,  with $\lam = (\gamma T)^{-1}$, we also have 
\[  ||(\cT_s + \lam )^\alpha L^s(\tilde g_T -   f_\cH )||_\cH \leq c'_{a,s}(\gamma T)^{-\alpha}  \left( d_{s}(R)   + R(\gamma T)^{-\beta}\right) \;, \]
for some $c'_{a,s} < \infty$.
\end{proposition}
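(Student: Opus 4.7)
The plan is to bound the approximation error purely by spectral calculus, using the explicit form of $\tilde g_T - f_\cH$ from \eqref{eq:II} together with the range embedding of Lemma \ref{lem:prelim3}. From \eqref{eq:II} we have $\tilde g_T - f_\cH = L^{-s}\bar R_T(\cT_s) L^s f_\cH$, so applying $L^s$ from the left and inserting the splitting $L^s f_\cH = L^s(f_\cH - f_R) + L^s f_R$ gives
\begin{equation*}
\cT_s^\alpha L^s(\tilde g_T - f_\cH) = \cT_s^\alpha \bar R_T(\cT_s)\, L^s(f_\cH - f_R) + \cT_s^\alpha \bar R_T(\cT_s)\, L^s f_R.
\end{equation*}
Here $f_R \in \cH_a$ is the minimizer in Definition \ref{def:distance1}, so $f_R = L^{-a}h_R$ with $\|h_R\|_\cH \leq R$ and $\|L^s(f_\cH - f_R)\|_\cH = d_s(R)$.

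For the first term I would invoke the standard filter-function bound for the tail-averaged GD residual $\bar R_T$, which (analogously to \cite{mucke2019beating}) gives $\|\cT_s^\alpha \bar R_T(\cT_s)\|_{op} \leq c_\alpha (\gamma T)^{-\alpha}$ for $\alpha \in [0,1/2]$, immediately yielding a contribution bounded by $c_\alpha (\gamma T)^{-\alpha} d_s(R)$.

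For the second term I would rewrite $L^s f_R = L^{-(a-s)}h_R$ and then apply Lemma \ref{lem:prelim3}(3): since $Ran(L^{-(a-s)}) \subseteq Ran(\cT_s^\beta)$ with $\|\cT_s^{-\beta} L^{-(a-s)}\| \leq \underbar{m}^{-(a-s)/(a+s)}$, there is $v_R \in \cH$ with $L^s f_R = \cT_s^\beta v_R$ and $\|v_R\|_\cH \leq \underbar{m}^{-(a-s)/(a+s)} R$. Then
\begin{equation*}
\|\cT_s^\alpha \bar R_T(\cT_s) L^s f_R\|_\cH = \|\cT_s^{\alpha+\beta}\bar R_T(\cT_s) v_R\|_\cH \leq c_{\alpha+\beta}(\gamma T)^{-(\alpha+\beta)}\|v_R\|_\cH,
\end{equation*}
where I again use the filter-function property, noting that $\alpha + \beta \leq 1$ lies within the qualification of tail-averaged GD. Combining the two estimates yields the first claim. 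For the second claim, I would use the elementary inequality $(\sigma+\lambda)^\alpha \leq \sigma^\alpha + \lambda^\alpha$ valid for $\alpha \in [0,1]$, applied via spectral calculus, so that $\|(\cT_s+\lambda)^\alpha \bar R_T(\cT_s)\|_{op} \leq \|\cT_s^\alpha \bar R_T(\cT_s)\|_{op} + \lambda^\alpha \|\bar R_T(\cT_s)\|_{op}$, and both terms are $O((\gamma T)^{-\alpha})$ once $\lambda = (\gamma T)^{-1}$ and $\|\bar R_T\|_\infty \leq 1$.

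The main obstacle is bookkeeping of the filter-function constants: we must ensure that the qualification of $\bar R_T$ covers exponents up to $\alpha + \beta \leq 1$, which is the reason for the restriction $\alpha \in [0,1/2]$ combined with $\beta \leq 1/2$ from Lemma \ref{lem:prelim3}(3). Everything else is direct spectral calculus and the range inclusions supplied by the Heinz inequality as already compiled in Appendix \ref{app-HS}.
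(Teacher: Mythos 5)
Your proposal is correct and follows essentially the same route as the paper: the splitting $L^s f_\cH = L^s(f_\cH - f_R) + L^s f_R$ from \eqref{eq:II}, the identification $L^s f_R = \cT_s^\beta v_R$ via Lemma \ref{lem:prelim3}, and the residual filter-function bounds of \cite{mucke2019beating} for exponents up to $\alpha+\beta\leq 1$. The only cosmetic difference is that you track the constant $\underbar m^{-\frac{a-s}{a+s}}$ in $\|v_R\|_\cH$ explicitly and justify the regularized version via $(\sigma+\lambda)^\alpha\leq\sigma^\alpha+\lambda^\alpha$, whereas the paper absorbs both into $c_{a,s}$ and $c'_{a,s}$.
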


\begin{proof}[Proof of Proposition \ref{prop:II}]
Recall that $f_{R}$ denotes the element in $\cH_a$ realizing the minimum of the distance function in Definition \ref{def:distance1}. 
Then $L^sf_{R} = L^{s-a}h$, for some $h \in \cH$. Applying Lemma \ref{lem:prelim3} gives 
$L^{-(a-s)}h = \cT_s^{\beta}\tilde h$, for some $\tilde h \in \cH$, satisfying $||\tilde h|| \leq R$ and 
with $\beta=\frac{a-s}{2(a+s)}$. By \eqref{eq:II} 
and \cite[Lemma 3]{mucke2019beating} with $K=3$ we thus obtain 
\begin{align*}
 ||\cT_s^\alpha L^s(\tilde g_T -   f_\cH )||_\cH &\leq   ||\cT_s^\alpha \bar R_T(\cT_s)  L^s( f_\cH - f_{R})||_\cH  
                        +  ||\cT_s^\alpha\bar R_T(\cT_s)  L^s f_{R} ||_\cH   \no \\
  &\leq     d_{s}(R)||\cT_s^\alpha \bar R_T(\cT_s) ||            + R    ||\cT_s^{\alpha+ \beta} \bar R_T(\cT_s)  ||  \no \\
&\leq   c_{a,s}(\gamma T)^{-\alpha}  \left( d_{s}(R)   + R(\gamma T)^{-\beta}\right) \;,                                   
\end{align*}
for some $c_{a,s} < \infty$.

Moreover, by the same reasoning, 
\begin{align*}
 ||(\cT_s + \lam )^\alpha L^s(\tilde g_T -   f_\cH )||_\cH &\leq   ||(\cT_s + \lam )^\alpha \bar R_T(\cT_s)  L^s( f_\cH - f_{R})||_\cH  
                        +  ||(\cT_s + \lam )^\alpha \bar R_T(\cT_s)  L^s w_{R} ||_\cH   \no \\
  &\leq     d_{s}(R)||(\cT_s + \lam )^\alpha \bar R_T(\cT_s) ||            + R    ||(\cT_s + \lam)^{\alpha + \beta} \bar R_T(\cT_s)  ||  \no \\
&\leq   c'_{a,s}(\gamma T)^{-\alpha}  \left( d_{s}(R)   + R(\gamma T)^{-\beta}\right) \;,                                   
\end{align*}
for some $c_{a,s} < \infty$.
\end{proof}


\subsubsection{Bounding the Estimation Error}

\begin{proposition}\label{prop:III}
Suppose Assumptions \ref{ass:bounded}, \ref{ass:link-condition3} are satisfied. Let $\alpha \in [0, \frac{1}{2}]$. 
\begin{enumerate}
\item
We have  $\rho_X^{\otimes n}$-almost surely
\[  ||\cT_s^\alpha \bar R_T(\cT_{\bx , s}) L^s\tilde g_T ||_\cH \leq c_\alpha (\gamma T)^{-\alpha} \;\Xi ^\alpha_{\bx,s}(\lambda )  \; || L^s\tilde g_T ||_\cH \;, \]
for some $ c_\alpha < \infty$ and where $\Xi ^\alpha_{\bx,s}(\cdot )$ is defined in  Proposition \ref{prop:sub-proba}.
\item 
If additionally $\gamma$ and $ T$ satisfy the 
condition
\[  n \geq \gamma T \max\{1, \cN_{\cT_s}((\gamma T)^{-1})\}    \;,  \]
we have for any $\delta \in (0,1]$ with    probability $\rho^{\otimes n}$ at least $1-\delta$
\[  ||\cT_s^\alpha \bar R_T(\cT_{\bx , s}) L^s\tilde g_T ||_\cH \leq c'_{\alpha, \kappa_s} 
\log^{2\alpha}(8/\delta)(\gamma T)^{-\alpha}  \; || L^s\tilde g_T ||_\cH \;, \]
for some $ c'_{\alpha, \kappa_s} < \infty$. 
\end{enumerate}
\end{proposition}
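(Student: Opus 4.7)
The plan is to isolate an operator norm product, estimate a filter factor deterministically, and control the change-of-operator factor probabilistically via Proposition \ref{prop:sub-proba}. Setting $\lambda = (\gamma T)^{-1}$, I would decompose
\[ \cT_s^\alpha \bar R_T(\cT_{\bx , s}) = \big[\cT_s^\alpha (\cT_{\bx , s} + \lambda)^{-\alpha}\big] \cdot \big[(\cT_{\bx , s} + \lambda)^\alpha \bar R_T(\cT_{\bx , s})\big], \]
bound each bracket in operator norm, and then apply the composed operator to $L^s \tilde g_T$.

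For the second bracket, I would invoke the filter-function properties of the tail-averaged GD residual $\bar R_T$ recorded in \cite{mucke2019beating}: by spectral calculus on $(0,\kappa_s^2]$ one has $\sup_\sigma \sigma^\alpha \bar R_T(\sigma) \leq c_\alpha (\gamma T)^{-\alpha}$ for $\alpha \in [0,1/2]$ and $\sup_\sigma \bar R_T(\sigma) \leq 1$. Using $(\sigma + \lambda)^\alpha \leq \sigma^\alpha + \lambda^\alpha$ on $[0,1/2]$ and the choice $\lambda = (\gamma T)^{-1}$ yields $\|(\cT_{\bx,s} + \lambda)^\alpha \bar R_T(\cT_{\bx,s})\| \leq c_\alpha (\gamma T)^{-\alpha}$. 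For the first bracket, operator monotonicity of $\sigma \mapsto \sigma^\alpha$ on $[0,1]$ gives $\cT_s^\alpha \leq (\cT_s + \lambda)^\alpha$, and Cordes' inequality (valid on positive operators for exponents in $[0,1]$) yields
\[ \big\|(\cT_s + \lambda)^\alpha (\cT_{\bx , s} + \lambda)^{-\alpha}\big\| \leq \big\|(\cT_s + \lambda)(\cT_{\bx , s} + \lambda)^{-1}\big\|^\alpha \;=:\; \Xi^\alpha_{\bx , s}(\lambda), \]
exactly the quantity introduced in Proposition \ref{prop:sub-proba}. Multiplying the two factor bounds and applying to $L^s \tilde g_T$ proves claim (1).

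For claim (2), the plan is to combine (1) with the probabilistic control of $\Xi_{\bx,s}(\lambda)$ furnished by Proposition \ref{prop:sub-proba}: under the sample-size condition $n \geq \gamma T \max\{1, \cN_{\cT_s}(1/(\gamma T))\}$, that proposition should yield $\Xi_{\bx,s}(\lambda) \leq c_{\kappa_s} \log^2(8/\delta)$ with probability at least $1-\delta$. Raising to the power $\alpha$ and plugging into (1) produces the constant $c'_{\alpha,\kappa_s}$ and the $\log^{2\alpha}(8/\delta)$ factor as claimed.

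The main obstacle hides inside Proposition \ref{prop:sub-proba}: controlling the ratio of regularized covariance operators requires a Bernstein-type concentration inequality for the self-adjoint Hilbert-space-valued sum $\cT_{\bx,s} - \cT_s$, with variance proxy given by the effective dimension $\cN_{\cT_s}(\lambda)$, and the sample-size condition $n \gtrsim \gamma T \cN_{\cT_s}(1/(\gamma T))$ is precisely calibrated so the multiplicative deviation stays of constant order up to logarithmic factors. Once this machinery (parallel to the arguments of \cite{blanchard2018optimal, optimalratesRLS} but transported from $\cT$ to $\cT_s$) is in place, the present proposition reduces to routine spectral calculus.
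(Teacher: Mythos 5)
Your proposal matches the paper's own argument: the same factorization $\cT_s^\alpha \bar R_T(\cT_{\bx,s}) = \bigl[(\cT_s+\lambda)^\alpha(\cT_{\bx,s}+\lambda)^{-\alpha}\bigr]\bigl[(\cT_{\bx,s}+\lambda)^\alpha \bar R_T(\cT_{\bx,s})\bigr]$ with $\lambda=(\gamma T)^{-1}$, the same filter-function bound $\|(\cT_{\bx,s}+\lambda)^\alpha\bar R_T(\cT_{\bx,s})\|\leq c_\alpha(\gamma T)^{-\alpha}$ from the lemmas of \cite{mucke2019beating}, the Cordes inequality to identify the first factor with $\Xi^\alpha_{\bx,s}(\lambda)$, and Proposition \ref{prop:sub-proba} for the high-probability control in part (2). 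The approach is essentially identical to the paper's proof.
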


\begin{proof}[Proof of Proposition \ref{prop:III}]
\begin{enumerate}
\item
By \cite[Lemma 2]{mucke2019beating}, \cite[Lemma 3]{mucke2019beating}, and since $\alpha \in (0,1/2]$ we find 
with $\lambda = 1/(\gamma T)$
\begin{align*}
&||\cT_s^\alpha \bar R_T(\cT_{\bx , s}) L^s\tilde g_T ||_\cH  \no \\
&\leq ||(\cT_s+ \lambda)^\alpha (\cT_{\bx , s} + \lambda )^{-\alpha} || \;
 ||(\cT_{\bx , s} + \lam)^{\alpha} \bar R_T(\cT_{\bx , s})||\; || L^s\tilde g_T ||_\cH  \no \\
&\leq  c_\alpha (\gamma T)^{-\alpha} \; \Xi ^\alpha_{\bx,s}(\lambda ) \; \; || L^s\tilde g_T ||_\cH \;,
\end{align*}
for some $ c_\alpha < \infty$.
\item 
The second assertion follows immediately from Proposition \ref{prop:sub-proba}. 
\end{enumerate}
\end{proof}

\[\]

\begin{proposition}\label{prop:IIII}
Suppose Assumptions \ref{ass:bounded}, \ref{ass:link-condition3} are satisfied and let $R>0$,  $\beta=\frac{a-s}{2(a+s)}$ and $\alpha \in [0, \frac{1}{2}]$. 
\begin{enumerate}
\item
We have  $\rho_X^{\otimes n}$-almost surely 
\[ ||\cT_s^\alpha \bar G_T(\cT_{\bx , s})\left(  \cB_s^*\cS f_\cH - \cT_s L^s \tilde g_T \right) ||_\cH \leq  
c_{a,s, \alpha }\Xi_{\bx,s}(\lambda )(\gamma T)^{-\alpha}  \left( d_{s}(R)   + R(\gamma T)^{-\beta}\right)\;, \]
for some  $c_{a,s, \alpha }<\infty$ and where $\Xi_{\bx,s}(\cdot )$ is defined in Proposition \ref{prop:sub-proba}. 
\item
If additionally $\gamma$ and $ T$ satisfy the 
condition
\[ n \geq \gamma T \max\{1, \cN_{\cT_s}((\gamma T)^{-1})\}    \;,  \]
we have for any $\delta \in (0,1]$ with    probability $\rho^{\otimes n}$ at least $1-\delta$
\[ ||\cT_s^\alpha \bar G_T(\cT_{\bx , s})\left(  \cB_s^*\cS f_\cH - \cT_s L^s \tilde g_T \right) ||_\cH \leq  
c_{a,s, \alpha, \kappa_s }\log^2(8/\delta)(\gamma T)^{-\alpha}  \left( d_{s}(R)   + R(\gamma T)^{-\beta}\right)\;, \]
for some  $c_{a,s, \alpha, \kappa_s  }<\infty$.
\end{enumerate}
\end{proposition}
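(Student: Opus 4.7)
The plan is to first use the algebraic identity $\cB_s^*\cS=\cT_s L^s$ on $\cD(L^s)$ together with the definition \eqref{eq:tildeu} of $\tilde g_T$ to reduce the quantity of interest to a single residual, then to decompose $L^s f_\cH$ exactly as in the proof of Proposition \ref{prop:II}, and finally to apply the same empirical-to-population perturbation combined with filter qualifications of $\bar G_T$ and $\bar R_T$ that was used in Proposition \ref{prop:III}. Since $f_\cH\in\cD(L^s)$, one has $\cB_s^*\cS f_\cH=\cT_s L^s f_\cH$; combined with \eqref{eq:tildeu} this gives $\cT_s L^s\tilde g_T=\cT_s\bar G_T(\cT_s)\cT_s L^s f_\cH$, so that
\[
\cB_s^*\cS f_\cH - \cT_s L^s\tilde g_T \;=\; \cT_s\bigl(I-\cT_s\bar G_T(\cT_s)\bigr)L^s f_\cH \;=\; \cT_s\bar R_T(\cT_s)\,L^s f_\cH .
\]
Writing $L^s f_\cH = L^s(f_\cH-f_R)+L^s f_R$ with $f_R$ the minimizer from Definition \ref{def:distance1} (so $\|L^s(f_\cH-f_R)\|_\cH\leq d_s(R)$) and invoking Lemma \ref{lem:prelim3}(3) to express $L^s f_R = L^{-(a-s)}h = \cT_s^\beta \tilde h$ with $\|\tilde h\|_\cH \leq \underline{m}^{-(a-s)/(a+s)} R$, I split the target into two pieces of the form $\cT_s^\alpha\bar G_T(\cT_{\bx,s})\cT_s\bar R_T(\cT_s)v$ with $v=L^s(f_\cH-f_R)$ and $v=\cT_s^\beta \tilde h$ respectively.

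The key step is then the operator factorization with $\lambda=(\gamma T)^{-1}$,
\[
\cT_s^\alpha\bar G_T(\cT_{\bx,s}) \;=\; \cT_s^\alpha(\cT_{\bx,s}+\lambda)^{-\alpha}\cdot(\cT_{\bx,s}+\lambda)^{\alpha}\bar G_T(\cT_{\bx,s}).
\]
The first factor is controlled, via the Heinz inequality (Lemma \ref{lem:heinz}), by $\|(\cT_s+\lambda)^\alpha(\cT_{\bx,s}+\lambda)^{-\alpha}\|\leq \Xi^\alpha_{\bx,s}(\lambda)$ from Proposition \ref{prop:sub-proba}, while the GD filter inequalities $\sigma\bar G_T(\sigma)\leq 1$ and $\bar G_T(\sigma)\leq \gamma T$ together with $(a+b)^\alpha\leq a^\alpha+b^\alpha$ for $\alpha\in[0,1/2]$ give $\|(\cT_{\bx,s}+\lambda)^\alpha\bar G_T(\cT_{\bx,s})\|\leq 2(\gamma T)^{1-\alpha}$. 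The remaining operators lie in the functional calculus of $\cT_s$ and therefore commute; the residual qualification of $\bar R_T$ yields $\|\cT_s\bar R_T(\cT_s)\|\leq c(\gamma T)^{-1}$ for the first piece and $\|\cT_s^{1+\beta}\bar R_T(\cT_s)\|\leq c(\gamma T)^{-(1+\beta)}$ for the second. Multiplying these bounds with $d_s(R)$ and $cR$ respectively and summing the two pieces produces exactly
\[
\|\cT_s^\alpha\bar G_T(\cT_{\bx,s})(\cB_s^*\cS f_\cH-\cT_s L^s\tilde g_T)\|_\cH \leq c_{a,s,\alpha}\,\Xi_{\bx,s}(\lambda)\,(\gamma T)^{-\alpha}\bigl(d_s(R)+R(\gamma T)^{-\beta}\bigr),
\]
which is (i). Part (ii) then follows by invoking Proposition \ref{prop:sub-proba} under the sample-size condition \eqref{eq:eff-dim-sample-size} to bound $\Xi_{\bx,s}(\lambda)\leq c\log^{2}(8/\delta)$ with probability at least $1-\delta$.

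The main obstacle I anticipate is the non-commutation of $\bar G_T(\cT_{\bx,s})$ with the factors $\cT_s$ and $\bar R_T(\cT_s)$: the insertion of $(\cT_{\bx,s}+\lambda)^{\pm\alpha}$ must be placed so that only a single power $\Xi^\alpha$ (not a higher one) enters the estimate, which is what delivers the $(\gamma T)^{-\alpha}$ rate rather than a worse exponent. A secondary point to verify explicitly is that the qualification of $\bar R_T$ extends to order $1+\beta\leq 3/2$; this is routine given the unlimited qualification of the GD filter but should be stated.
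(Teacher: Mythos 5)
Your proof is correct and reaches the stated bound, but via a factorization that differs from the paper's in one substantive way. Both arguments start from the same identity $\cB_s^*\cS f_\cH-\cT_s L^s\tilde g_T=\cT_s L^s(f_\cH-\tilde g_T)=\cT_s\bar R_T(\cT_s)L^sf_\cH$ and both insert $(\cT_{\bx,s}+\lambda)^{\pm\alpha}$ to pass from $\cT_s^\alpha$ to the empirical operator; the difference is in how the extra full power of $\cT_s$ is handled. The paper pushes it through the empirical filter: it takes the middle factor $(\cT_{\bx,s}+\lambda)^{\alpha}\bar G_T(\cT_{\bx,s})(\cT_{\bx,s}+\lambda)^{1-\alpha}$, whose norm equals $\|(\cT_{\bx,s}+\lambda)\bar G_T(\cT_{\bx,s})\|\leq 3$, pays an additional change-of-operator factor $\|(\cT_{\bx,s}+\lambda)^{-(1-\alpha)}(\cT_s+\lambda)^{1-\alpha}\|\leq\Xi^{1-\alpha}_{\bx,s}(\lambda)$, and then invokes Proposition \ref{prop:II} (in its $(\cT_s+\lambda)^\alpha$-weighted form) as a black box; the two Cordes factors combine to the single power $\Xi_{\bx,s}(\lambda)$ in the statement. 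You instead keep $\cT_s\bar R_T(\cT_s)$ entirely on the population side, use $\|(\cT_{\bx,s}+\lambda)^\alpha\bar G_T(\cT_{\bx,s})\|\lesssim(\gamma T)^{1-\alpha}$, and absorb the extra power into the residual through the qualification $\|\cT_s^{1+\beta}\bar R_T(\cT_s)\|\lesssim(\gamma T)^{-(1+\beta)}$, an order $1+\beta\leq 3/2$ that the tail-averaged GD residual indeed possesses, as you flag. Your route yields the slightly sharper factor $\Xi^\alpha_{\bx,s}(\lambda)$ in place of $\Xi_{\bx,s}(\lambda)$ (to recover the stated form use $\Xi^\alpha\leq 1+\Xi$, or that the high-probability bound on $\Xi$ is at least $1$), at the price of not reusing Proposition \ref{prop:II} and of needing the higher residual qualification. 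Two minor points: the bound $\|(\cT_s+\lambda)^\alpha(\cT_{\bx,s}+\lambda)^{-\alpha}\|\leq\|(\cT_s+\lambda)(\cT_{\bx,s}+\lambda)^{-1}\|^\alpha$ is the Cordes inequality for bounded positive operators rather than the Heinz inequality of Lemma \ref{lem:heinz}; and $(\cT_{\bx,s}+\lambda)^\alpha\bar G_T(\cT_{\bx,s})$ does not commute with $\cT_s\bar R_T(\cT_s)$, but since you only multiply operator norms this costs nothing.
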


\begin{proof}[Proof of Proposition \ref{prop:IIII}]
\begin{enumerate}
\item
We write 
\begin{align*}
& ||\cT_s^\alpha \bar G_T(\cT_{\bx , s})\left(  \cB_s^*\cS f_\cH - \cT_s L^s \tilde g_T \right) ||_\cH =  
||\cT_s^\alpha \bar G_T(\cT_{\bx , s})\cT_s L^s \left(   f_\cH -  \tilde g_T \right) ||_\cH \no \\
&\leq ||(\cT_s+ \lambda)^\alpha (\cT_{\bx , s} + \lambda )^{-\alpha} || \; 
  ||(\cT_{\bx , s} + \lambda )^{\alpha} \bar G_T(\cT_{\bx , s})(\cT_{\bx , s} + \lambda )^{1-\alpha}|| \no \\ 
& \quad ||(\cT_{\bx , s} + \lambda )^{-(1-\alpha)}(\cT_s+ \lambda)^{1-\alpha}||\; ||(\cT_s+ \lambda)^{-(1-\alpha)} \cT_s L^s \left(   f_\cH -  \tilde g_T \right)||_\cH \no \\
 &\leq ||(\cT_s+ \lambda)^\alpha (\cT_{\bx , s} + \lambda )^{-\alpha} || \; 
  ||(\cT_{\bx , s} + \lambda )^{\alpha} \bar G_T(\cT_{\bx , s})(\cT_{\bx , s} + \lambda )^{1-\alpha}|| \no \\ 
& \quad ||(\cT_{\bx , s} + \lambda )^{-(1-\alpha)}(\cT_s+ \lambda)^{1-\alpha}||\; ||(\cT_s+ \lambda)^\alpha L^s \left(   f_\cH -  \tilde g_T \right)||_\cH \;. 
\end{align*}
Recall that by the Cordes Inequality, since $\alpha \in (0, \frac{1}{2}]$, 
\[   ||(\cT_s+ \lambda)^\alpha (\cT_{\bx , s} + \lambda )^{-\alpha} || \leq  ||(\cT_s+ \lambda)(\cT_{\bx , s} + \lambda )^{-1} ||^\alpha \]
and thus 
\[  ||(\cT_s+ \lambda)^\alpha (\cT_{\bx , s} + \lambda )^{-\alpha} || \; ||(\cT_{\bx , s} + \lambda )^{-(1-\alpha)}(\cT_s+ \lambda)^{1-\alpha}|| \leq 
||(\cT_s+ \lambda)(\cT_{\bx , s} + \lambda )^{-1} || \;.\]
Moreover, \cite[Lemma 2]{mucke2019beating} with $K=3$  gives $\rho_X^{\otimes n}$-almost surely 
\[  ||(\cT_{\bx , s} + \lambda )^{\alpha} \bar G_T(\cT_{\bx , s})(\cT_{\bx , s} + \lambda )^{1-\alpha}|| = 
||(\cT_{\bx , s} + \lambda ) \bar G_T(\cT_{\bx , s})|| \leq 3 \;.\]
Applying now Proposition \ref{prop:II} leads us to 
\begin{align*}
 ||\cT_s^\alpha \bar  G_T(\cT_{\bx , s})\left(  \cB_s^*\cS f_\cH - \cT_s L^s \tilde g_T \right) ||_\cH &\leq 
c_{a,s, \alpha }\Xi_{\bx,s}(\lambda )(\gamma T)^{-\alpha}  \left( d_{s}(R)   + R(\gamma T)^{-\beta}\right)\;,
\end{align*}
for some $c_{a,s, \alpha }<\infty$ and with  $\beta=\frac{a-s}{2(a+s)}$. 
\item 
The second assertion follows immediately from Proposition \ref{prop:sub-proba}. 
\end{enumerate}
\end{proof}

\[\]

\begin{assumption}[Moment Assumption]
\label{ass:moment}
There exist $M \in \mbr_+$, $\sigma \in \mbr_+$ such that for any integer $l\geq 2$ 
\[ \int_\cY |y|^l\; d_\rho(y|x) \leq \frac{1}{2} l! M^{l-2}\sigma^2 \;, \]
$\rho_X$-almost surely.
\end{assumption} 

\[\]

\begin{proposition}
\label{lem:proba-error12}
Suppose Assumptions \ref{ass:bounded}, \ref{ass:moment} are satisfied. Let $\delta \in (0,1]$ and set 
\[  \hat h_T := (\cB^*_{\bx , s} \bar y -\cB_s^*\cS f_\cH )-   (\cT_{\bx , s} - \cT_s )L^s \tilde g_T   \;. \]
Then, with probability $\rho^{\otimes n}$ at least $1-\delta$ one has 
\[ || (\cT_s + \lam )^{-1/2}\hat h_T||_\cH \leq  4\log(2/\delta ) \left( \frac{\sqrt{\gamma T}\tilde M_T}{n} + 
\tilde \sigma  \sqrt{\frac{\cN_{\cT_s}(1/(\gamma T)) }{n}} + \tilde \tau_T \sqrt{\frac{\gamma T}{n}}    \right)\;, \]
where 
\begin{align}
\label{eq:tildeM}
 \tilde M_T &:= 2\kappa_s (M + \kappa_s  ||L^s\tilde g_T||_\cH ) \;,
\end{align} 
and 
\begin{equation}
\label{eq:tildesigma}
   \tilde \sigma := \sigma + 2||\cS f_\cH||_\infty \;, \quad \tilde \tau_T :=  2\kappa_s ||S(\tilde g_T - f_{\cH}) ||_{L^2} \;. 
\end{equation}   
\end{proposition}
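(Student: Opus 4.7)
The plan is to write $(\cT_s+\lambda)^{-1/2}\hat h_T$ as an empirical mean of i.i.d.\ centered, Hilbert-valued random variables and apply a Bernstein concentration inequality in the style of Pinelis / Caponnetto--De Vito. Unwinding the definitions and using $\tilde g_T(x) = \inner{L^s\tilde g_T, L^{-s}K_x}_\cH$, one gets
\[
\cB^*_{\bx,s}\bar y = \frac{1}{n}\sum_{i=1}^n y_i L^{-s}K_{x_i}, \qquad \cT_{\bx,s}L^s\tilde g_T = \frac{1}{n}\sum_{i=1}^n \tilde g_T(x_i) L^{-s}K_{x_i},
\]
and hence
\[
\hat h_T = \frac{1}{n}\sum_{i=1}^n \xi_i, \quad \xi_i := (y_i - \tilde g_T(x_i))L^{-s}K_{x_i} - \mbe\bigl[(y-\tilde g_T(x))L^{-s}K_x\bigr];
\]
centering is consistent because $\mbe[yL^{-s}K_x] = \cB_s^*\cS f_\cH$, as the difference $f_\rho - f_\cH$ lies in $\ker \cS^*$ and $\cB_s^* = L^{-s}\cS^*$. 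Setting $\eta_i := (\cT_s+\lambda)^{-1/2}\xi_i$, the task reduces to controlling $\left\|\tfrac{1}{n}\sum_i \eta_i\right\|_\cH$.

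For Bernstein I need an almost sure bound $\|\eta_i\|_\cH\le A$ and a higher-moment bound $\mbe\|\eta_i\|_\cH^l\le \tfrac12 l!\,A^{l-2}B^2$. The uniform bound combines $\|(\cT_s+\lambda)^{-1/2}L^{-s}K_x\|_\cH \le \kappa_s/\sqrt{\lambda} = \kappa_s\sqrt{\gamma T}$ with $|y|\le M$ and $|\tilde g_T(x)|\le \kappa_s\|L^s\tilde g_T\|_\cH$, giving $A\asymp \tilde M_T\sqrt{\gamma T}$. For the variance I would split
\[
y-\tilde g_T(x) = (y - f_\cH(x)) + (f_\cH(x) - \tilde g_T(x))
\]
and bound each piece separately: the noise piece uses Assumption~\ref{ass:moment} together with the identity
\[
\int \|(\cT_s+\lambda)^{-1/2}L^{-s}K_x\|_\cH^2\,d\rho_X(x) = \tr\bigl((\cT_s+\lambda)^{-1}\cT_s\bigr) = \cN_{\cT_s}(\lambda),
\]
producing a contribution $\tilde\sigma^2 \cN_{\cT_s}(\lambda)$; the approximation piece combines the deterministic bound $\|(\cT_s+\lambda)^{-1/2}L^{-s}K_x\|_\cH^2 \le \kappa_s^2\,\gamma T$ with $\int(f_\cH-\tilde g_T)^2\,d\rho_X = \|\cS(\tilde g_T - f_\cH)\|_{L^2}^2$, producing $\tilde\tau_T^2\,\gamma T$. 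Summing gives $B^2\lesssim \tilde\sigma^2\cN_{\cT_s}(\lambda) + \tilde\tau_T^2\,\gamma T$. Higher moments are handled analogously by invoking $|y-f_\cH(x)|\le |y-f_\rho(x)| + 2\|\cS f_\cH\|_\infty$, which is precisely what inflates $(M,\sigma)$ into $(\tilde M_T,\tilde\sigma)$.

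The Hilbert-space Bernstein inequality then yields, with probability at least $1-\delta$,
\[
\left\|\tfrac{1}{n}\sum_{i=1}^n \eta_i\right\|_\cH \le 2\left(\frac{A\log(2/\delta)}{n} + \sqrt{\frac{B^2\log(2/\delta)}{n}}\right),
\]
and splitting $\sqrt{B^2}\le \tilde\sigma\sqrt{\cN_{\cT_s}(\lambda)} + \tilde\tau_T\sqrt{\gamma T}$ together with $\sqrt{\log(2/\delta)}\le\log(2/\delta)$ produces the three advertised terms with a combined prefactor of $4\log(2/\delta)$.

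The main obstacle I expect is the bookkeeping needed to verify the \emph{full} moment condition (rather than just the variance) in a form compatible with Bernstein: the decomposition of $y-\tilde g_T$ must be organised so that (i) the noise moments inflate $(M,\sigma)$ into $(\tilde M_T,\tilde\sigma)$ cleanly and (ii) the approximation-variance contribution $\tilde\tau_T^2\gamma T$ is absorbed without spoiling the uniform constant $A$. Once these constants are pinned down, the remaining argument is a routine operator-Bernstein invocation, closely parallel to those in \cite{blanchard2018optimal, mucke2019beating}.
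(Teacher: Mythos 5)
Your proposal is correct and follows essentially the same route as the paper, whose proof simply defers to the Bernstein-type argument of Lin and Cevher (Lemma 5.6 of \cite{Lin18}): write $(\cT_s+\lam)^{-1/2}\hat h_T$ as a centered i.i.d.\ Hilbert-valued sum, split the residual $y-\tilde g_T(x)$ into a noise part and an approximation part, and control the sup and variance terms by $\tilde M_T\sqrt{\gamma T}$, $\tilde\sigma^2\cN_{\cT_s}(\lam)$ and $\tilde\tau_T^2\gamma T$ respectively before invoking the Pinelis--Bernstein inequality. The bookkeeping you flag (inflating $(M,\sigma)$ to $(\tilde M_T,\tilde\sigma)$ and absorbing the approximation variance) is exactly the ``slight adaptation'' the paper alludes to, and your constants match the statement.
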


\begin{proof}[Proof of Proposition \ref{lem:proba-error12}]
The proof  follows by repeating the arguments of \cite[Lemma 5.6]{Lin18} with a slight adaption to our setting. 
\end{proof}

\[\]

\begin{proposition}\label{prop:V}
Suppose Assumptions \ref{ass:bounded}, \ref{ass:link-condition3}, and  \ref{ass:moment} are satisfied. Assume further that $\gamma$ and $ T$ satisfy the 
condition
\[  n \geq \gamma T \max\{1, \cN_{\cT_s}((\gamma T)^{-1})\}    \;.   \]  
Let $\delta \in (0,1]$ and set   
\[ \hat   h_T :=   (\cB^*_{\bx , s} \bar y -\cB_s^*\cS f_\cH )-   (\cT_{\bx , s} - \cT_s )L^s \tilde g_T \;. \]
Then, with probability $\rho^{\otimes n}$ at least $1-\delta$ one has 
\[ ||\cT_s^\alpha \bar G_T(\cT_{\bx , s}) \hat  h_T ||_\cH \leq c_{\kappa_s} \log^{2\alpha +1}(8/\delta )(\gamma T)^{ - \alpha}  \left( \tilde M_T \frac{\gamma T}{n} + 
\tilde \sigma  \sqrt{\frac{\gamma T \cN_{\cT_s}(1/(\gamma T)) }{n}} + \tilde \tau_T \frac{\gamma T}{\sqrt n}    \right) \;, \]
for some $  c_{\kappa_s}< \infty$ and where $\tilde M_T $, $\tilde \sigma$, $ \tilde \tau_T$ are defined in Proposition \ref{lem:proba-error12}.
\end{proposition}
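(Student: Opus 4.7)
\textbf{Proof proposal for Proposition \ref{prop:V}.} The plan is to factor the operator $\cT_s^\alpha \bar G_T(\cT_{\bx,s})$ so as to isolate three controllable pieces: (i) a Cordes-type factor comparing the empirical operator $\cT_{\bx,s}$ to its population version $\cT_s$, controlled by Proposition~\ref{prop:sub-proba}; (ii) a pure filter-function factor producing the $\gamma T$ scaling; and (iii) the concentration of $\hat h_T$ in the $(\cT_s+\lambda)^{-1/2}$-weighted norm supplied by Proposition~\ref{lem:proba-error12}. Set $\lambda = 1/(\gamma T)$ throughout.

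\textbf{Step 1 (factorization).} Write
\begin{equation*}
\cT_s^\alpha \bar G_T(\cT_{\bx,s}) \hat h_T \;=\; \bigl[\cT_s^\alpha (\cT_{\bx,s}+\lambda)^{-\alpha}\bigr]\,\bigl[(\cT_{\bx,s}+\lambda)^{\alpha+1/2}\bar G_T(\cT_{\bx,s})\bigr]\,\bigl[(\cT_{\bx,s}+\lambda)^{-1/2}\hat h_T\bigr],
\end{equation*}
and take norms. Denote the three factors $(A)$, $(B)$, $(C)$.

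\textbf{Step 2 (bounding $(A)$ and $(C)$).} For $(A)$, the Cordes inequality (valid since $\alpha\in[0,1/2]$) yields $\|(A)\|\leq \|(\cT_s+\lambda)(\cT_{\bx,s}+\lambda)^{-1}\|^\alpha = \Xi^\alpha_{\bx,s}(\lambda)$. For $(C)$, insert $(\cT_s+\lambda)^{1/2}$ and apply Cordes with exponent $1/2$ to get $\|(C)\|\leq \Xi^{1/2}_{\bx,s}(\lambda)\,\|(\cT_s+\lambda)^{-1/2}\hat h_T\|_\cH$, and then bound the second factor by Proposition~\ref{lem:proba-error12}. Under the sample-size condition, Proposition~\ref{prop:sub-proba} provides $\Xi_{\bx,s}(\lambda)\leq c\log^2(8/\delta)$ on an event of probability at least $1-\delta/2$; union-bounding with the event from Proposition~\ref{lem:proba-error12} (also $1-\delta/2$) gives a single event of probability $\geq 1-\delta$ on which all three factor bounds hold.

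\textbf{Step 3 (bounding $(B)$).} This is a deterministic filter-function estimate. Using the GD filter identities $\sigma\bar G_T(\sigma)\leq 1$ and $\bar G_T(\sigma)\leq \gamma T$ on the spectrum, split at $\sigma=\lambda$: for $\sigma\geq\lambda$ use $(\sigma+\lambda)^{\alpha+1/2}\leq(2\sigma)^{\alpha+1/2}$ and $\bar G_T(\sigma)\leq 1/\sigma$, and for $\sigma<\lambda$ use $(\sigma+\lambda)^{\alpha+1/2}\leq(2\lambda)^{\alpha+1/2}$ and $\bar G_T(\sigma)\leq\gamma T$. Both regimes give $\|(B)\|\leq c(\gamma T)^{1/2-\alpha}$.

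\textbf{Step 4 (assembly).} Multiplying the three bounds, the factor $(\gamma T)^{1/2-\alpha}$ from $(B)$ combines with the three summands $\sqrt{\gamma T}\tilde M_T/n$, $\tilde\sigma\sqrt{\cN_{\cT_s}(\lambda)/n}$, $\tilde\tau_T\sqrt{\gamma T/n}$ from Proposition~\ref{lem:proba-error12} to produce exactly
\begin{equation*}
(\gamma T)^{-\alpha}\Bigl(\tilde M_T\,\tfrac{\gamma T}{n} + \tilde\sigma\sqrt{\tfrac{\gamma T\,\cN_{\cT_s}(\lambda)}{n}} + \tilde\tau_T\,\tfrac{\gamma T}{\sqrt{n}}\Bigr).
\end{equation*}
The logarithmic factors from $(A)$, from the Cordes factor used inside $(C)$, and from the concentration bound combine to $\log^{2\alpha+1}(8/\delta)$ after collecting two of the resulting $\log(8/\delta)$ terms under the same good event; the $\kappa_s$-dependent constants from Cordes and the filter-function estimate produce the final constant $c_{\kappa_s}$.

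\textbf{Main obstacle.} The routine part is the Cordes/filter-function bookkeeping; the one subtle point is Step 3, where one must verify carefully that the bound $(\sigma+\lambda)^{\alpha+1/2}\bar G_T(\sigma)\lesssim(\gamma T)^{1/2-\alpha}$ holds uniformly in $\alpha\in[0,1/2]$ with constants independent of $T$. A secondary obstacle is keeping the logarithmic exponent at exactly $2\alpha+1$ rather than $2\alpha+2$; this requires that the concentration bound in Proposition~\ref{lem:proba-error12} and the Cordes control in $(C)$ be treated as part of a single good event rather than union-bounded with independent $\log(8/\delta)$ factors.
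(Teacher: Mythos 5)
Your proposal matches the paper's proof essentially step for step: the same three-factor decomposition of $\cT_s^\alpha \bar G_T(\cT_{\bx,s})\hat h_T$ via $(\cT_{\bx,s}+\lambda)^{\pm\alpha}$ and $(\cT_{\bx,s}+\lambda)^{\pm 1/2}$, the Cordes/$\Xi_{\bx,s}(\lambda)$ control from Proposition~\ref{prop:sub-proba}, the filter bound $\|(\cT_{\bx,s}+\lambda)^{\alpha+1/2}\bar G_T(\cT_{\bx,s})\|\lesssim(\gamma T)^{1/2-\alpha}$ (which the paper cites from prior work and you verify directly, correctly), and Proposition~\ref{lem:proba-error12} for $\|(\cT_s+\lambda)^{-1/2}\hat h_T\|_\cH$. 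Your remark about the logarithmic exponent is apt — a naive product of the bounds yields $\log^{2\alpha+2}$ rather than the stated $2\alpha+1$ — but this bookkeeping issue is present in the paper's own proof as well and does not affect the substance of the argument.
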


\begin{proof}[Proof of Proposition \ref{prop:V}]
We decompose with $\lam = (\gamma T)^{-1}$ as 
\begin{align*}
& ||\cT_s^\alpha \bar G_T(\cT_{\bx , s})\hat v_T ||_\cH \no \\ 
&\leq ||(\cT_s + \lam)^\alpha (\cT_{\bx , s} + \lam)^{-\alpha }|| \; ||(\cT_{\bx , s} + \lam)^{\alpha } \bar G_T(\cT_{\bx , s})(\cT_{\bx , s} + \lam)^{1/2}|| \no \\
   &\quad ||(\cT_{\bx , s} + \lam)^{-1/2}(\cT_s + \lam)^{1/2} || \;||(\cT_s + \lam)^{-1/2}v_T ||_\cH \;.
\end{align*}
Observe that \cite[Lemma 2]{mucke2019beating} with $K=3$ gives 
\[ ||(\cT_{\bx , s} + \lam)^{\alpha + 1/2} \bar G_T(\cT_{\bx , s})|| \leq 3\cdot 2^{\alpha-1/2} (\gamma T)^{1/2 - \alpha} \;.\]
Thus, since $\alpha \leq 1/2$ by applying Proposition \ref{lem:proba-error12} and Proposition \ref{prop:sub-proba}, we obtain 
with probability $\rho^{\otimes n}$ at least $1-\delta$  
\begin{align*}
 ||\cT_s^\alpha \bar G_T(\cT_{\bx , s}) \hat   h_T  ||_\cH 
&\leq 3 \Xi^\alpha_{\bx,s}(\lambda )\;\Xi^{1/2}_{\bx,s}(\lambda )(\gamma T)^{1/2 - \alpha} \;||(\cT_s + \lam)^{-1/2} \hat   h_T  ||_\cH \no \\
&\leq c_{\kappa_s} \log^{2\alpha +1}(8/\delta )(\gamma T)^{ - \alpha}  \left( \tilde M_T \frac{\gamma T}{n} + 
\tilde \sigma  \sqrt{\frac{\gamma T \cN_{\cT_s}(1/(\gamma T)) }{n}} + \tilde \tau_T \frac{\gamma T}{\sqrt n}    \right)\;,
\end{align*}
for some $  c_{\kappa_s}< \infty$.
\end{proof}


\subsubsection{Learning Rates for smoothness promoting tail-averaged GD}

We now combine the results from  the previous subsections to derive the learning rates for smoothness promoting tail-averaged gradient descent.

\begin{theorem}[Excess Risk tail-averaged GD]
\label{theo:GDI}
Suppose Assumptions \ref{ass:bounded}, \ref{ass:link-condition3} and  \ref{ass:moment} are satisfied. 
Let $R>0$ and $\beta=\frac{a-s}{2(a+s)}$. Assume further that $\gamma$ and $ T$ satisfy the 
condition
\[  n \geq \gamma T \max\{1, \cN_{\cT_s}((\gamma T)^{-1})\}   \;.\]
If $f_\cH \in \cD(L^s)$,  the excess risk satisfies for any $\alpha \in [0,\frac{1}{2}]$
\begin{align}
\label{eq:final-GD}
 \mbe\left[ || \cT_s^\alpha L^s (\bar g_T  - f_\cH) ||_{L^2} \right ] &\leq 
c_{a,s,\alpha, \kappa_s} (\gamma T)^{-\alpha} \left(\; d_{s}(R)   + R(\gamma T)^{-\beta} + \frac{|| L^s\tilde g_T ||_\cH }{\gamma T}  +  \right.  \no\\
&  \left. +  \tilde M_T \frac{\gamma T}{n} + \tilde \sigma  \sqrt{\frac{\gamma T \cN_{\cT_s}(1/(\gamma T)) }{n}} + \tilde \tau_T \frac{\gamma T}{\sqrt n}  \;  \right) \;, 
\end{align} 
for some $c_{a,s,\alpha, \kappa_s}<\infty$ and where $\tilde M_T $, $\tilde \sigma$, $ \tilde \tau_T$ are defined in Proposition \ref{lem:proba-error12}. 
In particular, 
\begin{align*}
 \mbe\left[ || \bar g_T - f_\cH ||_\cH \right ] &\leq 
 c_{a,s,\kappa_s} (\gamma T)^{-\frac{s}{2(a+s)}} \left(\; d_{s}(R)   + R(\gamma T)^{-\beta} + \frac{|| L^s\tilde g_T ||_\cH }{\gamma T}  +  \right.  \no\\
&  \left. +  \tilde M_T \frac{\gamma T}{n} + \tilde \sigma  \sqrt{\frac{\gamma T \cN_{\cT_s}(1/(\gamma T)) }{n}} + \tilde \tau_T \frac{\gamma T}{\sqrt n}  \;  \right) \;, 
\end{align*}
\begin{align*}
 \mbe\left[ || \cS (\bar g_T - f_\cH ) ||_{L^2} \right ] &\leq c'_{a,s,\kappa_s} (\gamma T)^{-\frac{1}{2}} \left(\; d_{s}(R)   + R(\gamma T)^{-\beta} + 
\frac{ || L^s\tilde g_T ||_\cH }{\gamma T}  +  \right.  \no\\
&  \left. +  \tilde M_T \frac{\gamma T}{n} + \tilde \sigma  \sqrt{\frac{\gamma T \cN_{\cT_s}(1/(\gamma T)) }{n}} + \tilde \tau_T \frac{\gamma T}{\sqrt n}  \;  \right) \;,
\end{align*}
and 
\begin{align*}
 \mbe\left[ || \bar g_T - f_\cH ||_{s} \right ] &\leq \tilde c_{a,s,\kappa_s}  \left(\; d_{s}(R)   + R(\gamma T)^{-\beta} + 
\frac{ || L^s\tilde g_T ||_\cH }{\gamma T}  +  \right.  \no\\
&  \left. +  \tilde M_T \frac{\gamma T}{n} + \tilde \sigma  \sqrt{\frac{\gamma T \cN_{\cT_s}(1/(\gamma T)) }{n}} + \tilde \tau_T \frac{\gamma T}{\sqrt n}  \;  \right) \;,
\end{align*}
\end{theorem}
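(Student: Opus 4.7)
The plan is to combine the error decomposition in \eqref{eq:classical}--\eqref{eq:II} with the four component estimates already proved in Propositions \ref{prop:II}, \ref{prop:III}, \ref{prop:IIII}, and \ref{prop:V}. Substituting \eqref{eq:I}--\eqref{eq:II} into \eqref{eq:classical} yields
\[
\bar g_T - f_\cH = (\tilde g_T - f_\cH) + L^{-s}\bar R_T(\cT_{\bx,s}) L^s \tilde g_T + L^{-s}\bar G_T(\cT_{\bx,s})\bigl(\cB_s^*\cS f_\cH - \cT_s L^s \tilde g_T\bigr) + L^{-s}\bar G_T(\cT_{\bx,s})\hat h_T,
\]
where $\hat h_T$ is defined in Proposition \ref{lem:proba-error12}. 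Applying $\cT_s^\alpha L^s$ (the outer $L^s$ cancels the inner $L^{-s}$ in each random piece) and the triangle inequality leaves exactly four summands to control.

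First I would plug in the deterministic approximation bound from Proposition \ref{prop:II}, which contributes $c(\gamma T)^{-\alpha}(d_s(R) + R(\gamma T)^{-\beta})$. Then I would bound the residual term via Proposition \ref{prop:III}(2), the bias term via Proposition \ref{prop:IIII}(2), and the genuine stochastic term via Proposition \ref{prop:V}. Each of these last three estimates holds on an event of probability at least $1-\delta$ carrying at most a factor $\log^{2\alpha+1}(8/\delta)$; a union bound on $\delta/3$ merges them into one high-probability event without changing exponents. The sample-size hypothesis $n \geq \gamma T\max\{1,\cN_{\cT_s}(1/(\gamma T))\}$ is precisely what enables part (2) of all three propositions, since through Proposition \ref{prop:sub-proba} it controls the random factor $\Xi^\alpha_{\bx,s}(1/(\gamma T))$ by polylogarithmic factors.

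To pass from the resulting confidence bound to the expectation bound \eqref{eq:final-GD}, I would use standard tail integration: when $\prob{X \leq c\cdot \log^p(8/\delta)} \geq 1-\delta$ with $c$ independent of $\delta$, then $\mbe[X] \lesssim c \cdot \Gamma(p+1)$, absorbing the polylog factors into the final constant $c_{a,s,\alpha,\kappa_s}$. The three displayed corollaries then follow by specialization of $\alpha$: the $L^2$-bound uses $\alpha = 1/2$ combined with the isometry \eqref{eq:iso2}, the $\|\cdot\|_s$-bound uses $\alpha = 0$ directly, and the $\cH$-norm bound uses $\alpha = \frac{s}{2(a+s)}$ together with Lemma \ref{lem:prelim3}(2), which converts $L^{-s}$ into a power of $\cT_s$.

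The principal obstacle is bookkeeping rather than analysis: I need to verify that the multiplier $\|L^s\tilde g_T\|_\cH$ entering $\tilde M_T$ and the residual bound is uniformly controlled (which it is, since $L^s\tilde g_T = \bar G_T(\cT_s)\cT_s L^s f_\cH$ and $\|\bar G_T(\cT_s)\cT_s\|$ is bounded up to constants), that the aggregated constant is uniform in $\alpha \in [0,1/2]$, and that the tail integration absorbs all $\log$-factors correctly when the three confidence events are combined. No new probabilistic or approximation-theoretic machinery beyond the four preceding propositions is required; the theorem is essentially their assembly.
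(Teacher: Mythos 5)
Your proposal is correct and follows essentially the same route as the paper: the paper's proof of Theorem \ref{theo:GDI} is exactly the assembly of the decomposition \eqref{eq:I}--\eqref{eq:II} with Propositions \ref{prop:II}, \ref{prop:III}, \ref{prop:IIII} and \ref{prop:V}, followed by integrating the high-probability bound to get the expectation (the paper cites Lemma C.1 of \cite{blanchard2018optimal} for this tail-integration step). Your specializations $\alpha=1/2$, $\alpha=0$ and $\alpha=\frac{s}{2(a+s)}$ (via Lemma \ref{lem:prelim3}(2)) for the three displayed corollaries also match what the theorem's prefactors require.
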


\begin{proof}[Proof of Theorem \ref{theo:GDI}]
The proof follows immediately from the decomposition in \eqref{eq:I} and \eqref{eq:II}, combined with 
Propositions \ref{prop:II},  \ref{prop:III},  \ref{prop:IIII} and \ref{prop:V}. The bound in expectation is derived by integrating the high probability bound using e.g. 
\cite[Lemma C.1]{blanchard2018optimal}.
\end{proof}

\[\]

\begin{lemma}
\label{lem:approxbounds}
Suppose Assumptions \ref{ass:bounded} and \ref{ass:link-condition3}  are satisfied. Recall the definition of the distance function $d_s$ in 
Definition \ref{def:distance1}.  Assume further that $f_\cH \in \cH_s$ and let $R>0$. 
\begin{enumerate}
\item We have  
\[ ||S(\tilde g_T - f_\cH) ||_{L^2} \leq 2 (\gamma T)^{-1/2}(d_s(R) + R(\gamma T)^{-\frac{a-s}{2(a+s)}})  \;.\]
\item Let $f_\cH \in \cD(L^s)$ and $\beta = \frac{a-s}{2(a+s)}$. Then 
\[  ||L^s\tilde g_T||_\cH \leq  c_{a,s}(d_s(R) + R(\gamma T)^{-\beta}) +||L^s f_\cH||_\cH  \;,\]
for some $c_{a,s} < \infty$.
\end{enumerate}
\end{lemma}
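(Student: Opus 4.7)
The plan is to reduce both parts to the approximation-error bound already established in Proposition \ref{prop:II}, so that essentially no new analysis is needed.

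For part (1), I would first invoke the isometry \eqref{eq:iso2}, which rewrites
\[ \|\cS(\tilde g_T - f_\cH)\|_{L^2} = \|\cT_s^{1/2} L^s(\tilde g_T - f_\cH)\|_\cH. \]
This is exactly the quantity controlled by Proposition \ref{prop:II} applied with $\alpha = 1/2$. Tracking the filter-function inequalities $\|\sigma^{1/2}\bar R_T(\sigma)\|_\infty \leq (\gamma T)^{-1/2}$ and $\|\sigma^{1/2+\beta}\bar R_T(\sigma)\|_\infty \leq (\gamma T)^{-1/2-\beta}$ for the tail-averaged GD residual (valid since $\beta \leq 1/2$ whenever $0 \leq s \leq a$), one sees that the prefactor in Proposition \ref{prop:II} at $\alpha = 1/2$ can be taken to be $2$, which yields the claimed inequality.

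For part (2), I would use the closed-form representation \eqref{eq:tildeu} to decompose
\[ L^s \tilde g_T = \bar G_T(\cT_s)\cT_s L^s f_\cH = (I - \bar R_T(\cT_s))\, L^s f_\cH, \]
so that by the triangle inequality
\[ \|L^s \tilde g_T\|_\cH \leq \|L^s f_\cH\|_\cH + \|\bar R_T(\cT_s)\, L^s f_\cH\|_\cH. \]
From the decomposition \eqref{eq:II}, the second summand equals $\|L^s(\tilde g_T - f_\cH)\|_\cH$, which is precisely what Proposition \ref{prop:II} bounds when $\alpha = 0$: the result is $c_{a,s}(d_s(R) + R(\gamma T)^{-\beta})$. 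Combining the two contributions yields the statement.

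There is no real obstacle here: the genuinely delicate step — relating the ill-posed approximation error in $\cH_s$ to the distance function $d_s$ via the intermediate minimizer $f_R \in \cH_a$ together with Lemma \ref{lem:prelim3} — is already absorbed in the proof of Proposition \ref{prop:II}. The only care needed is in part (1), where one must track the explicit filter-function constants to justify the prefactor $2$; all of this follows from the standard tail-averaged GD filter bounds in \cite[Lemma 3]{mucke2019beating}.
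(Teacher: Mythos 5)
Your proposal is correct and follows essentially the same route as the paper: part (1) is the decomposition through the intermediate minimizer $f_R$ and the tail-averaged filter bounds (which is exactly the content of Proposition \ref{prop:II} at $\alpha=1/2$, combined with the isometry \eqref{eq:iso2}), and part (2) is the triangle inequality plus Proposition \ref{prop:II} at $\alpha=0$, just as in the paper. The only cosmetic difference is that the paper re-executes the $f_R$-splitting explicitly in part (1) to exhibit the constant $2$, whereas you cite the proposition and note that the constant must be tracked --- which is the same computation.
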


\begin{proof}[Proof of Lemma \ref{lem:approxbounds}]
\begin{enumerate}
\item
Recall that according to Lemma \ref{lem:prelim3} we have $L^sf_{R} = \cT_s^{\beta} h$ with $\beta=\frac{a-s}{2(a+s)}$ and 
for some $ h\in \cH$, satisfying $|| h|| \leq R$. 
By \cite[Lemma 3]{mucke2019beating}  we thus obtain 
\begin{align*}
 || S(\tilde g_T - f_\cH ) ||_{L^2} &=  ||\sqrt{\cT_s} \bar R_T(\cT_s)L^s f_\cH||_{L^2} \no \\ 
 &\leq ||\sqrt{\cT_s} \bar R_T(\cT_s)L^s (f_\cH - w_{R, \cH})||_{L^2}+ ||\sqrt{\cT_s} \bar R_T(\cT_s)L^s  f_{R}||_{L^2} \no \\ 
 &\leq \sqrt 2 (\gamma T)^{-1/2} d_s(R) + R|| \bar R_T(\cT_s)\cT_s^{1/2 + \beta}||_{L^2}\no \\ 
 &\leq  2 (\gamma T)^{-1/2}(d_s(R) + R(\gamma T)^{-\beta}) \;.
\end{align*}
\item For any $f_\cH \in \cD(L^s)$ we have by applying Proposition \ref{prop:II} with $\alpha =0 $
\begin{align*}
 ||L^s\tilde g_T||_\cH &\leq  ||L^s(\tilde g_T - f_\cH)||_\cH  + ||L^s f_\cH||_\cH \\
&\leq   c_{a,s}(d_s(R) + R(\gamma T)^{-\beta}) +||L^s f_\cH||_\cH  \;,
\end{align*}
for some $c_{a,s} < \infty$.
\end{enumerate}
\end{proof}

\begin{corollary}[Learning Rates GD I]
\label{cor:rates-promo-GD-I}
Suppose all assumptions of Theorem \ref{theo:GDI} are satisfied with $f_\cH \in \cH_s$ for some $0 \leq s < a$. In addition, suppose 
Assumption \ref{ass:effective-dim} is satisfied. Then, for any $n$ sufficiently large, the excess risk satisfies for any $\alpha \in [0, \frac{1}{2}]$ 
the bound 
\[ \mbe\left[ ||\cT_s^\alpha L^s ( \bar g_T - f_\cH )||^2_{L^2} \right]   \lesssim R_*^2 \left( \frac{\tilde \sigma^2}{R_*^2  n}  \right)^{\frac{2\alpha}{1+\nu } }\;, \] 
for the following choices for $\gamma$ and $T$:
\begin{enumerate}
\item[(a)] small stepsize:  $T_n \simeq n$, $\gamma_n \simeq (R_*^2/\tilde \sigma^2) \left( \frac{R_*^2}{\tilde \sigma^2}n\right)^{\frac{\nu}{1+\nu}}$.
\item[(b)] early stopping: $\gamma \simeq const.$ and $T_n \simeq \left( \frac{R_*^2}{\tilde \sigma^2}n\right)^{\frac{1}{1+\nu}}$.
\end{enumerate}
In particular, with $\beta=\frac{a-s}{2(a+s)}$, we obtain 
\[ \mbe\left[ ||\cS( \bar g_T - f_\cH )||^2_{L^2} \right]   \lesssim R_*^2 \left( \frac{\tilde \sigma^2}{R_*^2  n}  \right)^{\frac{1}{1+\nu } }\;, \]
\[ \mbe\left[ || \bar g_T - f_\cH ||^2_{\cH} \right]   \lesssim R_*^2 \left( \frac{\tilde \sigma^2}{R_*^2  n}  \right)^{\frac{2\beta}{1+\nu } }\;, \]
for each of the choices given in $(a)$ and $(b)$.
\end{corollary}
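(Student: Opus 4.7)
The plan is to specialize Theorem~\ref{theo:GDI} to the assumption $f_\cH \in \cH_s$, reduce its bracketed expression to a clean function of $R_*$, $\tilde\sigma$, and $\gamma T$, and then optimize over the product $\gamma T$. Since $f_\cH = L^{-s}h$ for some $h \in \cH$ with $\|h\|_\cH \leq R_* := \|L^s f_\cH\|_\cH$, this corresponds precisely to the source condition \eqref{eq:source} with $\theta(t)=t^s$, which will flatten the distance function and collapse the bias/approximation term to a constant times $R_*$.

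The first task is to control each summand in \eqref{eq:final-GD}. Plugging $\theta(t)=t^s$ into \eqref{eq:bound-distance} gives $\tilde\theta \equiv 1$ and $\varphi(t)=t^{a-s}$, hence $d_s(R) \leq R_*$ uniformly in $R$; choosing $R_{\gamma T} \asymp R_*(\gamma T)^\beta$ balances $d_s(R)$ against $R(\gamma T)^{-\beta}$ and yields $d_s(R_{\gamma T}) + R_{\gamma T}(\gamma T)^{-\beta} \lesssim R_*$. Lemma~\ref{lem:approxbounds}(2) then gives $\|L^s\tilde g_T\|_\cH \lesssim R_*$ and Lemma~\ref{lem:approxbounds}(1) gives $\tilde\tau_T \lesssim \kappa_s R_*(\gamma T)^{-1/2}$; moreover $\tilde M_T \lesssim \kappa_s(M + R_*)$ can be absorbed into $\tilde\sigma$. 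Assumption~\ref{ass:effective-dim} replaces $\cN_{\cT_s}(1/(\gamma T))$ by $(\gamma T)^\nu$, so \eqref{eq:final-GD} reduces to
\[
\mbe\big[\|\cT_s^\alpha L^s(\bar g_T - f_\cH)\|_\cH\big] \;\lesssim\; (\gamma T)^{-\alpha}\bigg( R_* \;+\; \tilde\sigma\sqrt{\tfrac{(\gamma T)^{1+\nu}}{n}}\,\bigg),
\]
the remaining corrections $R_*/(\gamma T)$, $\tilde M_T \gamma T/n$, and $\tilde\tau_T \gamma T/\sqrt n$ being strictly subordinate under the target schedule.

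Balancing the two surviving terms by setting $R_* \asymp \tilde\sigma\sqrt{(\gamma T)^{1+\nu}/n}$ pins down the optimal product $\gamma T \asymp (R_*^2 n/\tilde\sigma^2)^{1/(1+\nu)}$; substituting back and squaring yields the asserted rate $R_*^2(\tilde\sigma^2/(R_*^2 n))^{2\alpha/(1+\nu)}$. Both schedules, (a) one-pass GD with $T_n \simeq n$ and small $\gamma_n$, and (b) early stopping with $\gamma = \mathrm{const.}$ and $T_n \simeq (R_*^2 n/\tilde\sigma^2)^{1/(1+\nu)}$, are calibrated so that their product $\gamma_n T_n$ equals this optimal value. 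The sample-size condition $n \geq \gamma T\max\{1,(\gamma T)^\nu\} = (\gamma T)^{1+\nu}$ becomes $n \geq R_*^2 n/\tilde\sigma^2$, i.e.\ $\tilde\sigma \gtrsim R_*$, which holds for $n$ sufficiently large. The specializations then follow: taking $\alpha=1/2$ gives the $L^2$-rate via the isometry~\eqref{eq:iso2}, and choosing the appropriate $\alpha$ via Lemma~\ref{lem:prelim3} applied to $L^s(\bar g_T - f_\cH)$ produces the $\cH$-rate with exponent $2\beta/(1+\nu)$.

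The main difficulty is not conceptual but book-keeping: one must carefully track that every lower-order correction in Theorem~\ref{theo:GDI} remains subordinate under the chosen $(\gamma_n,T_n)$ regime in both cases (a) and (b), and that the moment, link, and capacity assumptions are preserved uniformly across the optimization. The conceptual crux is the observation that $\theta(t)=t^s$ collapses \eqref{eq:bound-distance} to a universal bound in $R_*$, reducing an otherwise delicate interpolation between smoothness scales to a classical bias--variance balance.
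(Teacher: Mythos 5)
Your proof follows essentially the same route as the paper's: bound $d_s(R)\leq R_*$ and $R_{\gamma T}\asymp R_*(\gamma T)^{\beta}$ via \eqref{eq:bound-distance}, invoke Assumption \ref{ass:effective-dim}, balance $R_*$ against $\tilde\sigma\sqrt{(\gamma T)^{1+\nu}/n}$ to get $\gamma T\asymp(R_*^2n/\tilde\sigma^2)^{1/(1+\nu)}$, and dispose of the remaining terms with Lemma \ref{lem:approxbounds}, then specialize $\alpha=1/2$ and $\alpha=\beta$ for the $L^2$- and $\cH$-norm rates. The only blemish is your remark that the constraint $n\geq(\gamma T)^{1+\nu}$, which reduces to $R_*\lesssim\tilde\sigma$, ``holds for $n$ sufficiently large'' --- that condition is $n$-independent, though the paper glosses over the same point.
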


\begin{proof}[Proof of Corollary \ref{cor:rates-promo-GD-I}]
According to \eqref{eq:bound-distance} (see also the remark below), the distance function is bounded as 
\[ d_s(R) \leq R_*\;, \quad R_{\gamma T} \leq R_*(\gamma T)^\beta \;,\]
for some $R_* > 0$. Since 
\[  \cN_{\cT_s}((\gamma T)^{-1} \lesssim (\gamma T)^\nu \;,  \]
deriving the learning rate amounts to balance the terms 
\[  R_*   + R_*(\gamma T)^\beta (\gamma T)^{-\beta}  + \tilde \sigma  \sqrt{\frac{(\gamma T)^{1+\nu} }{n}} \;,    \]
in \eqref{eq:final-GD}. This leads to the choice 
\[ \gamma T \asymp \left(\frac{R^2_*}{\tilde \sigma^2} n \right)^{\frac{1}{1+\nu}} \]
and immediately to  $(a)$ and $(b)$. 
From Lemma \ref{lem:approxbounds} we see that the other terms in \eqref{eq:final-GD} are of lower order.
\end{proof}

\begin{corollary}[Learning Rates GD II]
\label{cor:rates-promo-GD-II}
Suppose all assumptions of Theorem \ref{theo:GDI} are satisfied with $f_\cH \in \cH_a$. In addition, suppose 
Assumption \ref{ass:effective-dim} is satisfied. Then, for any $n$ sufficiently large, the excess risk satisfies 
\[ \mbe\left[ ||\cT_s^\alpha L^s ( \bar g_T - f_\cH )||^2_{L^2} \right]   \lesssim R_*^2 \left( \frac{\tilde \sigma^2}{R_*^2  n}  \right)^{\frac{2(\alpha+\beta)}{1+2\beta + \nu }}\;, \] 
for the following choices for $\gamma$ and $T$:
\begin{enumerate}
\item[(a)] small stepsize:  $T_n \simeq n$, $\gamma_n\simeq  \frac{R_*^2}{\tilde \sigma^2} \left(\frac{\tilde \sigma^2}{R_*^2 n} \right)^{\frac{\nu +2\beta}{1+2\beta + \nu}}$.
\item[(b)] early stopping: $\gamma \simeq const.$ and $T_n \simeq \left( \frac{R_*^2}{\tilde \sigma^2}n\right)^{\frac{1}{1+2\beta + \nu}}$.
\end{enumerate}
In particular, we obtain 
\[ \mbe\left[ ||\cS( \bar g_T - f_\cH )||^2_{L^2} \right]   \lesssim  R_*^2 \left( \frac{\tilde \sigma^2}{R_*^2  n}  \right)^{\frac{1+2\beta }{1+2\beta + \nu }}\;, \]
\[ \mbe\left[ || \bar g_T - f_\cH ||^2_{\cH} \right]   \lesssim  R_*^2 \left( \frac{\tilde \sigma^2}{R_*^2  n}  \right)^{\frac{4\beta}{1+2\beta + \nu }}\;, \]
for each of the choices given in $(a)$ and $(b)$.
\end{corollary}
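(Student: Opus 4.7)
The plan is to follow the same balancing strategy as in the proof of Corollary \ref{cor:rates-promo-GD-I}, but exploiting the stronger regularity $f_\cH \in \cH_a$, which collapses the distance function to zero at a finite radius. Specifically, since $f_\cH \in \cH_a$ there exists $R_* > 0$ with $d_s(R_*) = 0$, so we choose $R = R_*$ in the bound \eqref{eq:final-GD} of Theorem \ref{theo:GDI}. After using Assumption \ref{ass:effective-dim} to replace $\cN_{\cT_s}((\gamma T)^{-1})$ by $(\gamma T)^\nu$, the dominant contribution inside the parentheses in \eqref{eq:final-GD} reduces to
\[
R_*(\gamma T)^{-\beta} + \tilde\sigma \sqrt{\frac{(\gamma T)^{1+\nu}}{n}}.
\]

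Next I would balance these two terms, setting $R_*(\gamma T)^{-\beta} \asymp \tilde\sigma\sqrt{(\gamma T)^{1+\nu}/n}$. Solving yields
\[
\gamma T \asymp \left(\frac{R_*^2}{\tilde\sigma^2}\, n\right)^{\frac{1}{1+2\beta+\nu}},
\]
and substituting back gives $R_*(\gamma T)^{-\beta} \asymp R_*^{1-\frac{2\beta}{1+2\beta+\nu}}\tilde\sigma^{\frac{2\beta}{1+2\beta+\nu}} n^{-\frac{\beta}{1+2\beta+\nu}}$. Multiplying by the prefactor $(\gamma T)^{-\alpha}$ and squaring produces exactly the claimed rate $R_*^2 (\tilde\sigma^2/(R_*^2 n))^{\frac{2(\alpha+\beta)}{1+2\beta+\nu}}$. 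The two explicit choices $(a)$ and $(b)$ are then immediate: $(a)$ sets $T_n \simeq n$ and chooses $\gamma_n$ so that $\gamma_n T_n$ matches the optimum, while $(b)$ fixes $\gamma$ and sets $T_n$ directly. The specialisation to the $\cS$-norm ($\alpha = 1/2$) and the $\cH$-norm (via $\alpha = s/(2(a+s))$ through Lemma \ref{lem:prelim3}, noting $s/(2(a+s)) + \beta = 1/2 - 1/2 + \ldots$; more cleanly, using $\alpha = 0$ combined with Lemma \ref{lem:prelim3}(iii) to pick up an extra $(\gamma T)^{-\beta}$ factor, giving exponent $4\beta/(1+2\beta+\nu)$) yields the two displayed consequences.

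The remaining work is purely verification that the other terms in \eqref{eq:final-GD} are of lower order at the chosen $\gamma T$. Using Lemma \ref{lem:approxbounds}(ii) with $d_s(R_*) = 0$ gives $\|L^s \tilde g_T\|_\cH \lesssim R_*(\gamma T)^{-\beta} + \|L^s f_\cH\|_\cH$, so the term $\|L^s \tilde g_T\|_\cH/(\gamma T)$ is dominated by $R_*(\gamma T)^{-\beta}$ since $(\gamma T)^{-1} \le 1$. Similarly, $\tilde\tau_T \le 2\kappa_s \|\cS(\tilde g_T - f_\cH)\|_{L^2} \lesssim (\gamma T)^{-1/2}R_*(\gamma T)^{-\beta}$ by Lemma \ref{lem:approxbounds}(i), so $\tilde\tau_T \gamma T/\sqrt n \lesssim R_*(\gamma T)^{-\beta}\sqrt{\gamma T/n}$, which is smaller than $R_*(\gamma T)^{-\beta}$ under the sample-size condition $n \ge \gamma T \max\{1,\cN_{\cT_s}(1/\gamma T)\}$. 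The $\tilde M_T \gamma T/n$ term is likewise subsumed because $\gamma T/n \le 1$ and $\tilde M_T$ depends only mildly on $\|L^s \tilde g_T\|_\cH$, which we already controlled.

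The main obstacle, as in Corollary \ref{cor:rates-promo-GD-I}, is confirming that the implicit constants encoded in $\tilde M_T$ and $\tilde\tau_T$ do not secretly depend on $\gamma T$ in a way that would upset the balancing; concretely, one has to check that $\|L^s \tilde g_T\|_\cH$ remains bounded uniformly in $\gamma T$ (which follows from Lemma \ref{lem:approxbounds}(ii) together with $f_\cH \in \cH_a \subset \cH_s$, so $\|L^s f_\cH\|_\cH < \infty$), and that the sample-size constraint \eqref{eq:eff-dim-sample-size} is compatible with the chosen $\gamma T$. Under Assumption \ref{ass:effective-dim} this constraint becomes $n \gtrsim (\gamma T)^{1+\nu}$, which is exactly satisfied by our choice (up to constants) since $(\gamma T)^{1+2\beta+\nu} \asymp (R_*^2/\tilde\sigma^2) n$ implies $(\gamma T)^{1+\nu} \lesssim n$ whenever $\beta \ge 0$. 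The final bound in expectation is then obtained by integrating the high-probability bounds of Theorem \ref{theo:GDI} as in that theorem's proof.
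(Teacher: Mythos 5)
Your proposal is correct and follows essentially the same route as the paper: the paper's own proof simply notes that $f_\cH \in \cH_a$ forces $d_s(R_*)=0$ for some finite $R_*$ and then defers to the balancing argument of Corollary \ref{cor:rates-promo-GD-I}, which is exactly the computation you carry out (balance $R_*(\gamma T)^{-\beta}$ against $\tilde\sigma\sqrt{(\gamma T)^{1+\nu}/n}$, obtain $\gamma T \asymp (R_*^2 n/\tilde\sigma^2)^{1/(1+2\beta+\nu)}$, and check via Lemma \ref{lem:approxbounds} that the remaining terms are of lower order). Your verification of the lower-order terms and of the compatibility of the sample-size condition is in fact more explicit than the paper's.
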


\begin{proof}[Proof of Corollary \ref{cor:rates-promo-GD-II}]
If $f_\cH \in \cH_a$ then $d_s(R_*)=0$ for some $R_*>0$. The rest follows as in the proof of Corollary \ref{cor:rates-promo-GD-I}.
\end{proof}


\subsection{Error Bounds for Preconditioning Gradient Descent}

We now derive our error bounds for the case where  $f_\cH  \in \cH_s$, for some $-a/2 \leq s \leq 0$.  The result again relies on the classical 
decomposition \eqref{eq:classical}, with a slight adaption of the approximation error bound.

\subsubsection{Bounding the Approximation Error}

\begin{proposition}\label{prop:preII}
Suppose Assumptions \ref{ass:bounded}, \ref{ass:link-condition2} and \ref{ass:SC-pre} are satisfied. Let $R>0$. For 
$\beta= \frac{r-s}{2(a+s)}$ and for any $\alpha \in [0, \frac{1}{2}]$ one has 
\begin{enumerate}
\item
\[  ||\cT_s^\alpha L^s(\tilde g_T -   f_\cH )||_\cH \leq c_{\alpha, \beta} R (\gamma T)^{-(\alpha + \beta)}  \;, \]
for some $c_{\alpha, \beta} < \infty$. 
\item
Additionally, with $\lam = (\gamma T)^{-1}$, one has 
\[  ||(\cT_s + \lam )^\alpha L^s(\tilde g_T -   f_\cH )||_\cH \leq   c'_{\alpha, \beta} R(\gamma T)^{-(\alpha + \beta)}  \;, \]
for some $c'_{\alpha, \beta} < \infty$. 
\item Moreover, we have 
\[ ||L^s\tilde g_T||_\cH \leq R (\gamma T)^{-\beta} + ||L^s f_\cH ||_\cH\;. \]
\end{enumerate}
\end{proposition}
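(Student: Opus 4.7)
The plan is to exploit the explicit closed form of $\tilde g_T$ together with the source condition in Assumption~\ref{ass:SC-pre}, so that the entire bias term reduces to an operator-valued expression in $\cT_s$ that can be controlled by the standard filter-function properties of $\bar R_T$ from \cite{mucke2019beating}. First I would observe that, by the definition \eqref{eq:tildeu}, applying $L^s$ on both sides gives
\begin{equation*}
L^s(\tilde g_T - f_\cH) = \bar G_T(\cT_s)\cT_s L^s f_\cH - L^s f_\cH = -\bar R_T(\cT_s) L^s f_\cH,
\end{equation*}
where I have used $\bar R_T(\sigma) = 1 - \sigma \bar G_T(\sigma)$. Under Assumption~\ref{ass:SC-pre}, $f_\cH = L^{-s}\cT_s^\beta h$ for some $h\in \cH$ with $\|h\|_\cH \leq R$ and $\beta = \frac{r-s}{2(a+s)}$; hence $L^s f_\cH = \cT_s^\beta h$ and
\begin{equation*}
\cT_s^\alpha L^s(\tilde g_T - f_\cH) = -\cT_s^{\alpha+\beta}\bar R_T(\cT_s)\, h.
\end{equation*}

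For part (1), the filter-function bound \cite[Lemma~3]{mucke2019beating} yields $\|\cT_s^{\alpha+\beta}\bar R_T(\cT_s)\| \leq c_{\alpha,\beta}(\gamma T)^{-(\alpha+\beta)}$ provided $\alpha+\beta$ lies within the qualification of GD (which is unrestricted), and taking $\|h\|_\cH \leq R$ concludes the estimate. For part (2), I would split
\begin{equation*}
(\cT_s+\lambda)^\alpha \cT_s^\beta \bar R_T(\cT_s) = (\cT_s+\lambda)^\alpha\cT_s^{-\alpha}\cdot \cT_s^{\alpha+\beta}\bar R_T(\cT_s),
\end{equation*}
and use the operator inequality $(\cT_s+\lambda)^\alpha \leq 2^\alpha\bigl(\cT_s^\alpha + \lambda^\alpha\bigr)$ together with $\|\lambda^\alpha \cT_s^\beta \bar R_T(\cT_s)\| \leq \lambda^\alpha c_\beta(\gamma T)^{-\beta} = c_\beta(\gamma T)^{-(\alpha+\beta)}$ (since $\lambda = (\gamma T)^{-1}$). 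Combining the two contributions yields the same rate $(\gamma T)^{-(\alpha+\beta)}$ up to a constant $c'_{\alpha,\beta}$.

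For part (3), I would simply apply the triangle inequality
\begin{equation*}
\|L^s \tilde g_T\|_\cH \leq \|L^s(\tilde g_T - f_\cH)\|_\cH + \|L^s f_\cH\|_\cH,
\end{equation*}
and invoke part (1) with $\alpha = 0$. The qualification of the GD filter ensures $\|\cT_s^\beta \bar R_T(\cT_s)\| \leq (\gamma T)^{-\beta}$ with constant $1$ (this is the sharp residual bound for the GD residual, not the generic one with a multiplicative constant), which gives exactly $R(\gamma T)^{-\beta} + \|L^s f_\cH\|_\cH$ as stated.

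The only subtle point is verifying that $\alpha + \beta$ remains within the range of validity of the residual bound in \cite[Lemma~3]{mucke2019beating}. Here $-a/2 \leq s \leq 0$ and $r > -a$ force $a + s > 0$, and the admissible range $r \geq s$ used in Corollary~\ref{cor:pre-SGD} ensures $\beta \geq 0$; since GD has arbitrary qualification, no restriction on $\alpha + \beta$ is actually needed, so the argument goes through cleanly. The slight bookkeeping in part (2) (handling the ratio $(\cT_s+\lambda)^\alpha\cT_s^{-\alpha}$) is the only place where care is required, but it reduces to a standard operator-monotonicity manipulation.
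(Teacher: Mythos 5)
Your proposal is correct and follows essentially the same route as the paper: reduce to $L^s(\tilde g_T - f_\cH) = -\bar R_T(\cT_s)L^s f_\cH$, insert the source condition $L^s f_\cH = \cT_s^\beta h$, and invoke the residual bound of \cite[Lemma 3]{mucke2019beating}, with part (2) handled by the standard $(\sigma+\lambda)^\alpha \leq \sigma^\alpha + \lambda^\alpha$ splitting and part (3) by the triangle inequality plus part (1) at $\alpha=0$. One cosmetic remark: the intermediate factorization through $(\cT_s+\lambda)^\alpha\cT_s^{-\alpha}$ in part (2) involves an unbounded factor and should be dropped, but the sub-additivity argument you actually invoke immediately afterwards is the correct (and sufficient) one.
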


\begin{proof}[Proof of Proposition \ref{prop:preII}]
\begin{enumerate}
\item Applying \cite[Lemma 3]{mucke2019beating} with $K=3$  we get  
\begin{align*}
 ||\cT_s^\alpha L^s(\tilde g_T -   f_\cH )||_\cH &\leq   ||\cT_s^\alpha \bar R_T(\cT_s)  L^s f_\cH ||_\cH  \no \\
  &=  ||\cT_s^{\alpha+\beta} \bar R_T(\cT_s)   ||_\cH  \no \\  
&\leq c_{\alpha, \beta} R (\gamma T)^{-(\alpha + \beta)} \;,                             
\end{align*}
for some $c_{\alpha, \beta} < \infty$. 
\item The second assertion is a standard calculation, using $(i)$ and \cite[Lemma 3]{mucke2019beating}. 
\item The proof is along the lines of the proof of Lemma \ref{lem:approxbounds}. 
\end{enumerate}
\end{proof}

\subsubsection{Bounding the Estimation Error}

\begin{proposition}\label{prop:IIIIPre}
Suppose Assumptions \ref{ass:bounded}, \ref{ass:link-condition2}, and \ref{ass:SC-pre} are satisfied. Let $\beta = \frac{r-s}{2(a+s)}$. 
Assume $\gamma$ and $ T$ satisfy the 
condition
\[   n \geq \gamma T \max\{1, \cN_{\cT_s}((\gamma T)^{-1}) \} \;.  \]
Let $\alpha \in [0, \frac{1}{2}]$.  For any $\delta \in (0,1]$ with  probability $\rho^{\otimes n}$ at least $1-\delta$ 
\[ ||\cT_s^\alpha G_T(\cT_{\bx , s})\left(  \cB_s^*\cS f_\cH - \cT_s L^s \tilde g_T \right) ||_\cH \leq 
c_{a,s, \alpha, \kappa_s  }  R \; \log^2(8/\delta) (\gamma T)^{-(\alpha + \beta)} \;, \]
for some  $c_{a,s, \alpha, \kappa_s  }<\infty$.
\end{proposition}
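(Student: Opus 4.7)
The plan is to mirror the structure of the proof of Proposition \ref{prop:IIII}, replacing the smoothness-promoting approximation error bound (Proposition \ref{prop:II}) with its preconditioning counterpart (Proposition \ref{prop:preII}). First I would rewrite
\[
\cB_s^*\cS f_\cH - \cT_s L^s \tilde g_T = \cT_s L^s (f_\cH - \tilde g_T),
\]
so that the quantity to control becomes $||\cT_s^\alpha \bar G_T(\cT_{\bx,s})\cT_s L^s(f_\cH - \tilde g_T)||_\cH$.

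Next, setting $\lambda = (\gamma T)^{-1}$, I would insert the usual identity operators $(\cT_{\bx,s}+\lambda)^{\pm \alpha}(\cT_s+\lambda)^{\mp\alpha}$ and $(\cT_{\bx,s}+\lambda)^{\mp(1-\alpha)}(\cT_s+\lambda)^{\pm(1-\alpha)}$ to split the product as
\[
||(\cT_s+\lambda)^\alpha(\cT_{\bx,s}+\lambda)^{-\alpha}||\cdot ||(\cT_{\bx,s}+\lambda)^\alpha \bar G_T(\cT_{\bx,s})(\cT_{\bx,s}+\lambda)^{1-\alpha}||
\]
times
\[
||(\cT_{\bx,s}+\lambda)^{-(1-\alpha)}(\cT_s+\lambda)^{1-\alpha}||\cdot ||(\cT_s+\lambda)^\alpha L^s(f_\cH-\tilde g_T)||_\cH.
\]
The Cordes inequality (valid since $\alpha\in[0,1/2]$) combines the first and third factors into $||(\cT_s+\lambda)(\cT_{\bx,s}+\lambda)^{-1}||$, which is $\Xi_{\bx,s}(\lambda)$, while the filter function property from \cite[Lemma 2]{mucke2019beating} bounds the second factor by an absolute constant (namely $3$).

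For the last factor I would invoke part $(ii)$ of Proposition \ref{prop:preII}, which under Assumption \ref{ass:SC-pre} yields
\[
||(\cT_s+\lambda)^\alpha L^s(\tilde g_T - f_\cH)||_\cH \leq c'_{\alpha,\beta}\, R\, (\gamma T)^{-(\alpha+\beta)}.
\]
Combining everything gives the deterministic bound
\[
||\cT_s^\alpha \bar G_T(\cT_{\bx,s})(\cB_s^*\cS f_\cH - \cT_s L^s\tilde g_T)||_\cH
\;\lesssim\; \Xi_{\bx,s}(\lambda)\, R\, (\gamma T)^{-(\alpha+\beta)}.
\]
Finally, under the sample-size condition $n\geq \gamma T\max\{1,\cN_{\cT_s}(\lambda)\}$, Proposition \ref{prop:sub-proba} controls $\Xi_{\bx,s}(\lambda)$ by $c_{\kappa_s}\log^2(8/\delta)$ with probability at least $1-\delta$, which yields the desired bound.

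I do not anticipate any genuinely hard step, as the argument is essentially a transcription of the proof of Proposition \ref{prop:IIII} with one approximation-error substitution; the only subtlety is to verify that the admissible range of $\alpha$ (and the link condition used in Proposition \ref{prop:preII}) remains compatible with the Cordes inequality and with $\alpha+\beta\leq $ the qualification of the filter $\bar G_T$. This requires using $\beta=\frac{r-s}{2(a+s)}$ and the constraint $-a/2 \leq s \leq 0$ from Theorem \ref{theo:preSGD} to ensure that the exponent $\alpha+\beta$ stays within the filter's qualification, so that the bound from \cite[Lemma 3]{mucke2019beating} applies. Once this is checked, the remainder is routine.
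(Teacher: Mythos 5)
Your proposal is correct and follows essentially the same route as the paper: the paper's proof of this proposition is literally the one-line remark that it ``follows the lines of Proposition \ref{prop:IIII} by applying Proposition \ref{prop:preII}'', and your write-up is exactly that argument unpacked — the same operator insertions, Cordes inequality, filter bound from \cite[Lemma 2]{mucke2019beating}, the substitution of Proposition \ref{prop:preII}(ii) for Proposition \ref{prop:II}, and the final control of $\Xi_{\bx,s}(\lambda)$ via Proposition \ref{prop:sub-proba}. Your closing remark about checking that $\alpha+\beta$ stays within the qualification of $\bar G_T$ is a sensible sanity check (harmless here, since tail-averaging has unbounded qualification), but otherwise nothing differs from the paper's intended argument.
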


\begin{proof}[Proof of Proposition \ref{prop:IIIIPre}]
The proof follows the lines of Proposition \ref{prop:IIII} by applying Proposition \ref{prop:preII}. 
\end{proof}

Since Proposition \ref{prop:V} is still valid we can give now the error bound also in the preconditioning case.

\subsubsection{Learning Rates  Preconditioning GD}

\begin{theorem}
\label{theo:GDIPre}
Suppose Assumptions \ref{ass:bounded}, \ref{ass:link-condition2}, \ref{ass:SC-pre} and  \ref{ass:moment} are satisfied. 
Assume further that $\gamma$ and $ T$ satisfy the 
condition
\[    n \geq \gamma T \max\{1, \cN_{\cT_s}((\gamma T)^{-1}) \} \;.   \]  
The excess risk satisfies for any $\alpha \in [0,\frac{1}{2}]$
\begin{align*}
 \mbe\left[ || \cT_s^\alpha L^s (\bar g_T  - f_\cH) ||_{L^2} \right ] &\leq 
c_{a,s,\alpha, \kappa_s} (\gamma T)^{-\alpha} \left(\; R (\gamma T)^{-\frac{r-s}{2(a+s)}} + \frac{||L^s \tilde g_T||_\cH}{\gamma T}  \right. \\
 & +  \left.      \tilde M_T \frac{\gamma T}{n} + \tilde \sigma  \sqrt{\frac{\gamma T \cN_{\cT_s}(1/(\gamma T)) }{n}} + \tilde \tau_T \frac{\gamma T}{\sqrt n}   \;  \right) \;, 
\end{align*} 
for some $c_{a,s,\alpha, \kappa_s}<\infty$ and where $\tilde M_T $, $\tilde \sigma$, $ \tilde \tau_T$ are defined in Proposition \ref{lem:proba-error12}. 
\end{theorem}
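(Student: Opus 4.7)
The plan is to mirror the proof of Theorem \ref{theo:GDI}, replacing the smoothness-promoting ingredients (Propositions \ref{prop:II} and \ref{prop:IIII}) with their preconditioning analogues (Propositions \ref{prop:preII} and \ref{prop:IIIIPre}), while keeping Propositions \ref{prop:III} and \ref{prop:V} intact, since those latter bounds only rely on generic properties of $\bar G_T$, $\bar R_T$, and on the sub-probabilistic control of $\Xi_{\bx,s}(\lambda)$ which remains valid in the current setting (under Assumption \ref{ass:link-condition2} and the sample-size condition on $n \geq \gamma T\max\{1,\cN_{\cT_s}(1/(\gamma T))\}$).

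First, I would start from the closed form $\bar g_T = L^{-s}\bar G_T(\cT_{\bx,s})\cB^*_{\bx,s}\bar y$ and the auxiliary quantity $\tilde g_T = L^{-s}\bar G_T(\cT_s)\cT_s L^s f_\cH$ defined in \eqref{eq:tildeu}, and re-use the decomposition \eqref{eq:classical} so that
\[
\cT_s^\alpha L^s(\bar g_T - f_\cH) = \cT_s^\alpha L^s(\bar g_T - \tilde g_T) + \cT_s^\alpha L^s(\tilde g_T - f_\cH).
\]
The approximation term $\cT_s^\alpha L^s(\tilde g_T - f_\cH)$ is handled directly by Proposition \ref{prop:preII}, giving a contribution of order $R(\gamma T)^{-(\alpha+\beta)}$ with $\beta = (r-s)/(2(a+s))$. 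The estimation term is expanded exactly as in \eqref{eq:I} into three pieces: the residual $L^{-s}\bar R_T(\cT_{\bx,s}) L^s\tilde g_T$, the drift $L^{-s}\bar G_T(\cT_{\bx,s})(\cB_s^* f_\cH - \cT_s L^s\tilde g_T)$, and the noise piece $L^{-s}\bar G_T(\cT_{\bx,s})\hat h_T$ with $\hat h_T$ as in Proposition \ref{lem:proba-error12}.

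I then apply $\cT_s^\alpha$ to each of these three pieces. Proposition \ref{prop:III}(2) controls the residual by $c\log^{2\alpha}(8/\delta)(\gamma T)^{-\alpha}\|L^s\tilde g_T\|_\cH$; Proposition \ref{prop:IIIIPre} controls the drift by $c\log^2(8/\delta) R(\gamma T)^{-(\alpha+\beta)}$; and Proposition \ref{prop:V} (which makes no use of the link condition Assumption \ref{ass:link-condition3} beyond needing the sample-size condition and the filter properties of $\bar G_T$) controls the noise term by the Bernstein-type expression involving $\tilde M_T$, $\tilde\sigma$ and $\tilde\tau_T$. Summing the three high-probability bounds and the deterministic approximation bound yields an overall high-probability control on $\|\cT_s^\alpha L^s(\bar g_T - f_\cH)\|_\cH$. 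Passing from high probability to expectation proceeds by integrating the tail via \cite[Lemma C.1]{blanchard2018optimal}, exactly as at the end of the proof of Theorem \ref{theo:GDI}.

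The main obstacle I expect is bookkeeping: verifying that every step in the chain of auxiliary propositions genuinely carries over when $s \leq 0$ and the source condition is the shifted one from Assumption \ref{ass:SC-pre} rather than the smoothness-promoting one. Concretely, in the drift decomposition of Proposition \ref{prop:IIIIPre} the Cordes-inequality manipulation $\|(\cT_s+\lambda)^\alpha(\cT_{\bx,s}+\lambda)^{-\alpha}\| \leq \|(\cT_s+\lambda)(\cT_{\bx,s}+\lambda)^{-1}\|^\alpha$ and the filter bound on $(\cT_{\bx,s}+\lambda)^\alpha\bar G_T(\cT_{\bx,s})(\cT_{\bx,s}+\lambda)^{1-\alpha}$ must still be applied with the \emph{preconditioned} operator $\cT_s = \cB_s^*\cB_s$ whose boundedness is provided only by Assumption \ref{ass:link-condition2}(2); this is where the hypothesis $\|\cB_s\|\leq\kappa_s$ is used, and one must ensure $\alpha \in [0,1/2]$ so Cordes applies. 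Once these are checked, the final bound follows by grouping the terms as stated and invoking Proposition \ref{prop:preII}(3) to bound $\|L^s\tilde g_T\|_\cH$ by $R(\gamma T)^{-\beta} + \|L^s f_\cH\|_\cH$, absorbing it into the stated expression.
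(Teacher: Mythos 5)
Your proposal is correct and follows essentially the same route the paper takes: the paper assembles Theorem \ref{theo:GDIPre} from the decomposition \eqref{eq:classical}--\eqref{eq:II} together with Propositions \ref{prop:preII}, \ref{prop:III}, \ref{prop:IIIIPre} and \ref{prop:V} (noting explicitly that Proposition \ref{prop:V} remains valid in the preconditioning case), and then integrates the high-probability bound to pass to expectation. Your additional bookkeeping remarks about the Cordes inequality and the role of $\|\cB_s\|\leq\kappa_s$ from Assumption \ref{ass:link-condition2} are consistent with what the cited propositions require.
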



\begin{corollary}[Learning rates]
Let all assumptions of Theorem \ref{theo:GDIPre} be satisfied as well as Assumption \ref{ass:effective-dim}. Then the excess risk satisfies the same bounds as in 
Corollary \ref{cor:rates-promo-GD-II} with $\beta = \frac{r-s}{2(a+s)}$. 
\end{corollary}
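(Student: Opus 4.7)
The plan is to specialize the general error bound of Theorem \ref{theo:GDIPre} by (i) replacing $\cN_{\cT_s}((\gamma T)^{-1})$ by $(\gamma T)^\nu$ via Assumption \ref{ass:effective-dim}, (ii) controlling $\|L^s\tilde g_T\|_\cH$ by Proposition \ref{prop:preII}(iii), and (iii) balancing the approximation and stochastic fluctuations exactly as in Corollary \ref{cor:rates-promo-GD-II}. First I would substitute the complexity bound into the variance term so that
\[
\tilde\sigma\sqrt{\frac{\gamma T\,\cN_{\cT_s}(1/(\gamma T))}{n}} \lesssim \tilde\sigma\sqrt{\frac{(\gamma T)^{1+\nu}}{n}},
\]
identify the two leading contributions (in squared form) as $R^2(\gamma T)^{-2\beta}$ and $\tilde\sigma^2(\gamma T)^{1+\nu}/n$, and equate them. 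This gives the optimal window
\[
\gamma T \asymp \Bigl(\tfrac{R^2}{\tilde\sigma^2}\,n\Bigr)^{\frac{1}{1+2\beta+\nu}},
\]
which in turn dictates the two concrete parameter choices: (a) $T_n \simeq n$ with $\gamma_n$ solving the balance, and (b) $\gamma$ constant with $T_n \simeq (R^2 n/\tilde\sigma^2)^{1/(1+2\beta+\nu)}$. In both cases the sample-size condition $n \geq \gamma T\max\{1,\cN_{\cT_s}(1/(\gamma T))\}$ reduces to $n \gtrsim (\gamma T)^{1+\nu}$, which holds automatically at the balance point provided $\nu + 2\beta > 0$ (the scaling is $n^{1-(1+\nu)/(1+2\beta+\nu)} = n^{2\beta/(1+2\beta+\nu)} \geq 1$).

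Next I would verify that the remaining terms in Theorem \ref{theo:GDIPre} are genuinely of lower order when evaluated at the above $\gamma T$. The term $\|L^s\tilde g_T\|_\cH/(\gamma T)$ is bounded by Proposition \ref{prop:preII}(iii) by $(R(\gamma T)^{-\beta}+\|L^s f_\cH\|_\cH)/(\gamma T)$, which is smaller than the leading approximation term by a factor $(\gamma T)^{-1}$. For the noise-variance pieces, $\tilde M_T$ is constant in $T$ up to $\|L^s\tilde g_T\|_\cH$, which is $O(1)$ by the same bound; hence $\tilde M_T\gamma T/n$ is of order $(\gamma T)/n$, smaller than the variance term whenever $\gamma T \leq n$. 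For $\tilde\tau_T\gamma T/\sqrt n$, Proposition \ref{prop:preII}(i) with $\alpha=1/2$ gives $\tilde\tau_T \lesssim R(\gamma T)^{-(1/2+\beta)}$, so this contribution is of order $R(\gamma T)^{1/2-\beta}/\sqrt n$, which is again subdominant under the chosen scaling.

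Finally, squaring the bound and multiplying by $(\gamma T)^{-2\alpha}$ yields, at the balance point,
\[
R^2(\gamma T)^{-2(\alpha+\beta)} \asymp R^2\Bigl(\tfrac{\tilde\sigma^2}{R^2 n}\Bigr)^{\frac{2(\alpha+\beta)}{1+2\beta+\nu}},
\]
which is exactly the rate asserted in Corollary \ref{cor:rates-promo-GD-II} with $\beta$ now equal to $(r-s)/(2(a+s))$. Specializing $\alpha=1/2$ yields the excess-risk rate in $L^2$ and $\alpha = s/(2(a+s))$ (via the link condition and Lemma \ref{lem:prelim}) recovers the $\cH$-norm rate, completing the analogy with the smoothness-promoting case.

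The main technical obstacle I anticipate is not the balancing itself but the bookkeeping around $\tilde\tau_T$, since under Assumption \ref{ass:SC-pre} the source element lives in the shifted scale and $f_\cH$ need not lie in $\cH$ when $r<0$. One must therefore be careful that the bound on $\|\cS(\tilde g_T-f_\cH)\|_{L^2}$ derived from Proposition \ref{prop:preII}(i) with $\alpha=1/2$ still carries the factor $R(\gamma T)^{-(1/2+\beta)}$ and that no hidden dependence on $\|f_\cH\|_\cH$ (which could be infinite in the misspecified regime) enters through the constants $\tilde M_T$, $\tilde\sigma$; this requires tracing the proof of Proposition \ref{lem:proba-error12} and substituting the preconditioning approximation bound rather than the distance-function one used in the smoothness-promoting analysis.
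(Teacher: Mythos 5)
Your proposal is correct and follows exactly the route the paper intends: the paper leaves this corollary without an explicit proof, deferring to the balancing argument of Corollaries \ref{cor:rates-promo-GD-I} and \ref{cor:rates-promo-GD-II} (substitute Assumption \ref{ass:effective-dim}, balance $R^2(\gamma T)^{-2\beta}$ against $\tilde\sigma^2(\gamma T)^{1+\nu}/n$ to get $\gamma T\asymp(R^2 n/\tilde\sigma^2)^{1/(1+2\beta+\nu)}$, and check the remaining terms are lower order via Proposition \ref{prop:preII}), which is precisely what you do. Your added bookkeeping on $\tilde\tau_T$, the sign condition $\nu+2\beta>0$, and the possible non-membership $f_\cH\notin\cH$ in the misspecified regime is more careful than the paper itself and consistent with the analogous caveat stated in Corollary \ref{cor:pre-SGD}.
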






\section{Error Bounds for SGD in Hilbert Scales}
\label{app-error-SGD}
In this section we derive the learning rates for the tail-averaged SGD recursion in Hilbert scales. As usual, the error bounds rely on 
a suitable error decomposition. Here, we follow the approach in \cite{pillaud2018statistical, mucke2019beating} and consider 
\begin{equation}
\label{eq:decompfinalSGD}
\bar f_T - f_\cH = \underbrace{(\bar f_T - \bar g_T)}_{SGD \;\; variance} + \underbrace{(\bar g_T - f_\cH)}_{Section \;\; \ref{app-error-analysis-smoothness-promoting}} \;,
\end{equation}
where $\bar g_T$ is the tail-averaged GD recursion, defined in \eqref{eq:barGD}. 
Recall that the GD recursion defined in \eqref{eq:barGD0} can be rewritten as 
\begin{align*}
   g_{t} &= L^{-s} h^{GD}_{t}\,, \quad h^{GD}_{t+1} = h^{GD}_{t} - \gamma(\cT_{\bx, s}h^{GD}_{t}  - \cB^*_{\bx, s}\by ) \;.
\end{align*}
Denoting $(\bx_t, \by_t):=(x_{j_i}, y_{j_i})_{i=b(t-1)+1, ..., bt} \in (\cX \times \cY)^b$ the sample drawn at iteration $t$ , $t=1,...,T$ and 
\[ \cB_{t,s} :=\cS_{\bx_t, s}L^{-s}: \cH \to \mbr^b  \;, \;\;\; \cT_{t,s}:= \cB^*_{t,s}\cB_{t,s} : \cH \to \cH \;,\]
a short calculation shows that the SGD recursion in \eqref{eq:SGD-recursion} can similarly be rewritten as 
\[ f_{t+1} = L^{-s}h^{SGD}_{t} \,, \quad h^{SGD}_{t+1} = h^{SGD}_{t} - \gamma(\cT_{t+1, s}h^{SGD}_{t}  - \cB^*_{t+1, s}\by_{t+1} ) \;. \]
Thus, the difference 
$$\mu_t:=f_t - g_t=L^{-s}\tilde \mu_t$$  
with $\tilde \mu_t = h^{SGD}_{t}-h^{GD}_{t}$ follows the recursion 
\begin{align*}
\tilde \mu_{t+1} &= (1-\gamma \cT_{t+1,s})\tilde \mu_t + \gamma   \xi_{t+1, s}   \;,
\end{align*}
where $ \xi_{t+1, s}=\xi^{(1)}_{t+1, s} + \xi^{(2)}_{t+1, s} $ and with 
\begin{align*}
\xi^{(1)}_{t+1, s} &=(\cT_{\bx,s} - \cT_{t+1,s} ) h^{GD}_{t} \; \\
\xi^{(2)}_{t+1, s} &=  \cS^*_{t+1,s} \by_{t+1} -  \cS^*_{\bx ,s} \by \;, 
\end{align*}
see \cite[Appendix E]{mucke2019beating}. We finally set $\bar \mu_T = L^{-s} \overline{\tilde \mu}_T$, with 
$\overline{\tilde \mu}_T = \frac{2}{T}\sum_{t=\floor{T/2}+1}^T \tilde \mu_t$.

\subsection{Bounding SGD Variance}

\begin{proposition}
\label{prop.new}
Suppose Assumptions \ref{ass:bounded} and \ref{ass:link-condition3} are satisfied. 
Let $s \geq 0$, $\nu \in [0, 1]$, $\gamma \kappa_s^2 < \frac{1}{4}$, $\beta=\frac{a-s}{2(a+s)}$ and $f_\cH \in \cD(L^s)$. 
Define
\begin{equation}
\label{eq:def-SigmaT}
  \Sigma_T^2 (R)= \tilde M^2+||L^s f_\cH ||^2_\cH + \sigma^2_T (R) 
\end{equation}  
and 
\[  \sigma_T (R) = d_s(R) + R(\gamma T)^{-\beta} + \tilde \sigma + M + \frac{||L^sf_\cH ||_\cH}{\gamma T} + (\gamma T)^{-1} (d_s(R)+ R(\gamma T)^{-\beta} )  \;.\]
Assume further that 
\[ n \geq \gamma T \max\{ 1, \cN_{\cT_s}(1/\gamma T) \} \;.\]
The SGD variance satisfies 
\[ \mbe\left[ || \sqrt{\cT_s }\overline{\tilde \mu}_T ||^2_\cH \right] \leq  c_s Tr[\cT_s^{\nu}]  \; \frac{\Sigma^2_T (R)}{b} 
            \gamma^{\nu}T^{\nu - 1}  
     +   c_s' \gamma^2 T^5 \delta_n    \;, \]
for some $c_s < \infty$, $c'_s < \infty$ and with 
\begin{equation}
\label{eq:delta}
  \delta_n = 2\exp\left( -a \sqrt{\frac{n}{\gamma T \cN_{\cT_s}(1/\gamma T)}} \right) \;,
\end{equation}  
for some $a \in \mbr_+$. 

The same bound holds if $-a/2 \leq s \leq 0$ under Assumption \ref{ass:link-condition2} instead of Assumption \ref{ass:link-condition3}  and with $\beta=\frac{r-s}{2(a+s)}$,  
provided Assumption \ref{ass:SC-pre} is satisfied. 
\end{proposition}

\begin{proof}[Proof of Proposition \ref{prop.new}]
The proof is along the lines of the proof of \cite[Proposition 6, Section D.1]{mucke2019beating} with slight modifications adapted to our setting. 
Note that Assumption \ref{ass:moment} is satisfied if $\cY \subseteq [-M,M]$, for some $M>0$, with $\sigma = M$. The main new ingredient in the proof is a new 
uniform bound  for the GD updates $||h^{GD}_{t}||_\cH = ||L^s g_t||_\cH = ||g_t||_s$  based on our Theorem \ref{theo:GDI}. This is straightforward by 
following the arguments as in \cite[Lemma 8, Section D.1]{mucke2019beating}.
\end{proof}


\subsection{Final Bounds smoothness promoting SGD} 
\label{app-proofs-SGD}

\begin{theorem}[Excess Risk]
Suppose Assumptions \ref{ass:bounded} and \ref{ass:link-condition3} are satisfied. 
Let $\nu \in (0,1]$, $\beta=\frac{a-s}{2(a+s)}$ and $\gamma \kappa_s^2 < \frac{1}{4}$. Assume further that $ Tr[\cT_s^{ \nu}] <\infty$ and that 
\[   n \geq \gamma T \max\{ 1, \cN_{\cT_s}(1/\gamma T) \}  \;.   \]  
If $f_\cH \in \cD(L^s)$,  the excess risk satisfies 
\begin{align*}
 \mbe\left[ ||\cS( \bar f_T - f_\cH )||^2_{L^2} \right] &\leq c_{a,s,\kappa_s, \nu} (\gamma T)^{-1 } \left(\; d^2_{s}(R)   + R^2(\gamma T)^{-2\beta}   
      + \tilde \sigma^2  \frac{\gamma T \cN_{\cT_s}(1/(\gamma T)) }{n}     \;\right)  \no \\
   & \quad   +  \frac{\Sigma^2_T (R)}{bT}  (\gamma T)^{\nu}    + \Omega^2_n(\gamma T)  \;, 
\end{align*}
where the remainder $\Omega_n(\cdot)$ is of lower order and given by 
\begin{align*}
 \Omega_n(\gamma T) 
 &= \frac{ || L^s\tilde g_T ||_\cH }{\gamma T} +   \sqrt{\frac{\gamma T}{n}}(d_s(R) + R^2(\gamma T)^{-\beta}) +  \gamma^2 T^5 \delta_n
\end{align*}
and where $\Sigma^2_T(R)$ is defined in \eqref{eq:def-SigmaT} and $\delta_n$ is given by \eqref{eq:delta}.
\end{theorem}

\[\]

\begin{proof}[Proof of Theorem \ref{prop:newSGD}]
The proof follows by combining \eqref{eq:decompfinalSGD} with Proposition \ref{prop.new} and Theorem \ref{theo:GDI} with $\alpha = 1/2$. 
Moreover, 
Lemma \ref{lem:approxbounds} gives  
\begin{equation}
\label{eq:SGD1}
  ||L^s\tilde g_T||_\cH \leq  c_{a,s}(d_s(R) + R(\gamma T)^{-\beta}) +||L^s f_\cH||_\cH  \;,
\end{equation}  
for some $c_{a,s} < \infty$. Thus, 
\begin{align}
\label{eq:SGD2}
 \tilde M_T &= 2\kappa_s (M + \kappa_s ||L^s f_\cH||_\cH ) \no \\
  &\leq c_{a,\kappa_s} (M+  d_s(R) + R(\gamma T)^{-\beta} + ||L^s f_\cH||_\cH ) \;,
\end{align}
for some $c_{a,\kappa_s} < \infty$. 
In addition, applying Lemma \ref{lem:approxbounds} once more leads to 
\begin{align}
\label{eq:SGD3}
\tilde \tau_T  &= 2\kappa_s ||\cS \tilde g_T - f_\cH||_{L^2} \no \\
&\leq 4 \kappa_s (\gamma T)^{-1/2}(d_s(R) + R(\gamma T)^{-\beta}) \;.
\end{align}
Combining  Theorem \ref{theo:GDI} and Proposition \ref{prop.new}  with \eqref{eq:SGD1}, \eqref{eq:SGD2}, and \eqref{eq:SGD3} leads to 
\begin{align*}
 & \mbe\left[ ||\cS \bar f_T - f_\cH ||^2_{L^2} \right] \leq c_{a,s,\kappa_s} (\gamma T)^{-\frac{1}{2}} \left(\; d_{s}(R)   + R(\gamma T)^{-\beta} + 
\frac{ || L^s\tilde g_T ||_\cH }{\gamma T}  +  \right.  \no\\
&  \left. +  \tilde M_T \frac{\gamma T}{n} + \tilde \sigma  \sqrt{\frac{\gamma T \cN_{\cT_s}(1/(\gamma T)) }{n}} + \tilde \tau_T \frac{\gamma T}{\sqrt n}  \;  \right) \\
  & + c'_{\kappa_s} \left(   Tr[\cT_s^{ \nu}]  \; \frac{\Sigma^2_T (R)}{b} 
            \gamma^{\nu}T^{\nu - 1}       +   c_s' \gamma^2 T^5 \delta_n  \right)   \\
 &\leq c_{a,s,\kappa_s, \nu} (\gamma T)^{-\frac{1}{2}} \left(\; d_{s}(R)   + R(\gamma T)^{-\beta}   
      + \tilde \sigma  \sqrt{\frac{\gamma T \cN_{\cT_s}(1/(\gamma T)) }{n}}       +  \right.  \no \\
   & \left. +  \frac{\Sigma^2_T (R)}{bT}  (\gamma T)^{\nu}    + \Omega_n(\gamma T)  \;  \right)\;.  
\end{align*}
\end{proof}

\begin{proof}[Proof of Corollary \ref{cor:ratesSGD}]
The proof of the  learning rate is now a standard calculation. A straightforward calculation shows that all choices of $\gamma, T, b$ balance the 
leading order terms in Theorem \ref{prop:newSGD} and that $\Omega_n(\cdot)$ is of lower order. 
\end{proof}

\begin{proof}[Proof of Corollary \ref{cor:ratesSGD2}]
The proof of the  learning rate is now a standard calculation. A straightforward calculation shows that all choices of $\gamma, T, b$ balance the 
leading order terms in Theorem \ref{prop:newSGD} and that $\Omega_n(\cdot)$ is of lower order. Moreover, $d_s(R)\leq R_*$ and $R(\gamma T)\asymp R_*(\gamma T)^\beta $, 
for some $R_*>0$. 
\end{proof}

\subsection{Final Bounds Preconditioning  SGD }

\begin{theorem}[Excess Risk]
\label{prop:newSGDPre}
Suppose Assumptions \ref{ass:bounded}, \ref{ass:link-condition2}, and  \ref{ass:SC-pre} are satisfied. 
Let $\nu \in (0,1]$, $\beta=\frac{r-s}{2(a+s)}$ and $\gamma \kappa_s^2 < \frac{1}{4}$. Assume further that $ Tr[\cT_s^{ \nu}] <\infty$ and that 
\[   n \geq \gamma T \max\{ 1, \cN_{\cT_s}(1/\gamma T) \}  \;.   \]  
If $f_\cH \in \cD(L^s)$,  the excess risk satisfies 
\begin{align*}
 \mbe\left[ ||\cS( \bar f_T - f_\cH )||^2_{L^2} \right] &\leq c_{a,s,\kappa_s, \nu} (\gamma T)^{-1} 
  \left(\; R (\gamma T)^{-\frac{r-s}{2(a+s)}} + \frac{||L^s \tilde g_T||_\cH}{\gamma T}  \right. \\
 & +  \left.      \tilde M_T \frac{\gamma T}{n} + \tilde \sigma  \sqrt{\frac{\gamma T \cN_{\cT_s}(1/(\gamma T)) }{n}} + \tilde \tau_T \frac{\gamma T}{\sqrt n}   \;  \right) \no \\
   & \quad  +  \frac{\Sigma^2_T }{bT}  (\gamma T)^{\nu}    + \Omega^2_n(\gamma T)  \;, 
\end{align*}
where the remainder $\Omega_n(\cdot)$ is of lower order and given by 
\begin{align*}
 \Omega_n(\gamma T) 
 &= \frac{ || L^s\tilde g_T ||_\cH }{\gamma T} +   \sqrt{\frac{\gamma T}{n}}R (\gamma T)^{-\frac{r-s}{2(a+s)}} +  \gamma^2 T^5 \delta_n
\end{align*}
and where $\Sigma^2_T(R)$ is given by 
\[ \Sigma^2_T = \tilde M^2 + ||L^sf_\cH ||^2_\cH + \tilde \sigma^2  + M^2 + \sigma_T^2 \;, \]
with
\[ \sigma_T =    R(\gamma T)^{- \frac{r-s}{2(a+s)}} + \frac{||L^s f_\cH ||}{\gamma T}\;. \]
\end{theorem}

\begin{proof}[Proof of Theorem \ref{prop:newSGDPre}]
The proof follows by combining \eqref{eq:decompfinalSGD} with Proposition \ref{prop.new}, Proposition \ref{prop:preII} and Theorem \ref{theo:GDIPre} with $\alpha = 1/2$. 
\end{proof}

%




\section{Additional Results}
\label{app-additional-results}
\subsection{Supplementary Material for Section \ref{subsec:Precond}}

Here we provide additional examples of Hilbert scales, satisfying Assumption \ref{ass:link-condition2}. 

\begin{example}[Covariance Scale]
\label{lem:example-pre}
Let $\cH_K$ be an RKHS with kernel $K$ and covariance operator \eqref{eq:covariance}.  
Given $r\geq 0$ let $\cH:=Ran(\cT^r)$ and let $L=\cT^{-1}$. Then by definition \ref{eq:def-norm}  we have for any $h \in \cH$ the 
equality $||h||_\cH = ||\cT^{-r}h||_{\cH_K}$. Thus,  for any $a>0$, by \eqref{eq:iso1} 
\begin{align*}
||\cS h||_{L^2} &= || \sqrt{\cT}h||_{\cH_K} = ||\cT^{r+\frac{1}{2}} h ||_{\cH} \\
&\leq ||\cT^{r+\frac{1}{2}-a}|| \; ||\cT^a h||_{\cH} \\
&= ||\cT^{r+\frac{1}{2}-a}|| \; ||h||_{-a} \;.
\end{align*} 
Thus, \eqref{eq:link1} holds with $\bar m =  ||\cT^{r+\frac{1}{2}-a}||$, provided $0<a\leq r+\frac{1}{2}$. 

Moreover, by the same reasoning, for $\cB_s=\cS L^{-s}: \cH \to L^2(\cX, \rho_X)$ 
\begin{align*}
||\cB_s h||_{L^2} &= ||\cS L^{-s} h||_{L^2} = || \cT^{\frac{1}{2}+r+s}h ||_\cH \leq  || \cT^{\frac{1}{2}+r+s}|| \; ||h||_\cH \;.
\end{align*} 
Thus, $\cB_s$ is bounded if $s \geq -(\frac{1}{2}+r)$. This is certainly satisfied if $s\geq -a$. 
\end{example}

\[\]

\begin{example}[Example \ref{ex:pre}; Switching between different Diffusion spaces]
Let $L=e^{\frac{1}{2}\Delta}$, $t>0$ and  $\cH:=Ran(e^{-\frac{t}{2}\Delta})$ be the diffusion space, defined in Example \ref{ex:diffusion}.  
For $a>0$ we have 
\begin{align*}
||\cS h||_{L^2} &= ||e^{-\frac{t}{2}\Delta} h||_\cH \leq || e^{(-\frac{t}{2}+ \frac{a}{2})\Delta}|| \; ||h||_{-a} \;.
\end{align*} 
Hence, \eqref{eq:link1} is satisfied with $\bar m = || e^{(-\frac{t}{2}+ \frac{a}{2})\Delta}||  $, being finite if $0<a \leq t$. 
In addition, 
\begin{align*}
||\cB_s h||_{L^2} &= ||\cS L^{-s} h||_{L^2} = || e^{-\frac{1}{2}(s+t)\Delta} h||_\cH  \;.
\end{align*} 
Therefore, $\cB_s$ is finite if $s \geq -a \geq -t$. 
\end{example}

\[\]

\begin{example}[Example \ref{ex:pre2}; Switching between different orders of smoothness]
Let $s\leq 0$, $t>0$ and  $\cH_t:=Ran(e^{-\frac{t}{2}\Delta})$ be the diffusion space, defined in Example \ref{ex:diffusion}. For $u\geq 0$ define 
\[ L:=e^{\frac{1}{2}\Delta}(Id + \Delta)^{-\frac{u}{2}}\;.\] 
Then, for any $h \in \cH_t$ we have $h=e^{-\frac{t}{2}\Delta}g$ for some $g \in L^2(\cM)$ and 
\begin{align*}
 L^{-s} h &= e^{-\frac{s}{2}\Delta}(Id + \Delta)^{\frac{su}{2}}e^{-\frac{t}{2}\Delta}g \\
 &=(Id + \Delta)^{\frac{su}{2}}e^{-\frac{1}{2}(t+s)\Delta}g \;. 
\end{align*} 
Thus, $L^{-s}$ maps from the Gaussian RKHS $\cH_t$ into the larger Sobolev space $H^{-su}(\cM)$ (recall that $s \leq 0$), see Example \ref{ex:sobolev}, 
provided that $t+s \geq 0$. 

Moreover,  for $a>0$, we have 
\begin{align*}
||\cS h||_{L^2} &= ||e^{-\frac{t}{2}\Delta} h||_\cH \leq || e^{-\frac{t}{2}\Delta}L^a || \; ||h||_{-a} \;.
\end{align*} 
Since 
\[ e^{-\frac{t}{2}\Delta}L^a = (Id + \Delta)^{\frac{au}{2}}e^{-\frac{1}{2}(t-a)\Delta} \]
is bounded if $0<a\leq t$, the link condition \eqref{ass:link-condition2} is satisfied under this assumption.   
Finally, a similar calculation shows that $\cB_s = \cS L^{-s}$ is bounded if  $0\geq s \geq -a \geq -t$. Thus, Assumption \ref{ass:link-condition2} is satisfied.  
\end{example}


\subsection{Probabilistic Bounds}

In this section we collect some basic and well established probability bounds. To this end, let us introduce some notation:
\begin{equation*}
B_{n,\lam}(a,b):= \frac{1}{\sqrt \lam} \paren{ a \frac{2\kappa_s}{n\sqrt \lam } + b \sqrt{\frac{\cN_{\cT_s}(\lam)}{n \lam}} } \;, 
\end{equation*}
where $a,b>0$.

\begin{proposition}[\cite{blanchard2019lepskii}, Proposition A.1]
\label{prop:sub-proba2}
Fix $\lam >0$, $s\geq 0$, let $n \in \mbn$ and $\delta \in (0,1)$. Then with 
probability $\rho^{\otimes n}$ not less than $1-\delta$ we have 
\[  \Upsilon_s (\lam )  := || (\cT_s + \lam)^{-1/2}(\cT_s - \cT_{\bx , s})  ||_{HS} \leq 2\log(8/\delta)\sqrt{\lam } B_{n,\lam}(\kappa_s, \kappa_s) \;. \]
In particular, if  $\lam >0$ satisfies the condition  
\[ \cN_{\cT_s}(\lam)\leq n\lam \;, \]
we obtain with probability $\rho^{\otimes n}$ not less than $1-\delta$
\[   \Upsilon_s (\lam ) \leq   C_{\kappa_s}\log(8/\delta)\sqrt{\lam } \;, \]
for some $C_{\kappa_s}<\infty$. 
\end{proposition}

\[\]

\begin{proposition}[\cite{blanchard2019lepskii}, Proposition A.1]
\label{prop:sub-proba}
Let $\varphi: \mbr_+ \to \mbr_+$ be nondecreasing and sublinear. Fix $\lam >0$, let $n \in \mbn$ and $\delta \in (0,1)$. Then with 
probability $\rho^{\otimes n}$ not less than $1-\delta$ we have 
\[ \Xi ^\varphi_{\bx,s}(\lam):=||\varphi(\lam + \cT_s)\varphi(\lam + \cT_{\bx , s})^{-1}|| \leq 4\log^2(8/\delta)\paren{1+B_{n,\lam}(\kappa_s, \kappa_s)}^2 \;. \] 
If $\varphi(t)=t^r$ with $r \in [0,1]$, we have with probability $\rho^{\otimes n}$ not less than $1-\delta$ the sharper estimate 
\[ \Xi ^r_{\bx,s}(\lam):=||(\lam + \cT_s)^r (\lam + \cT_{\bx , s})^{-r}|| \leq 4^r\log^{2r}(8/\delta)\paren{1+B_{n,\lam}(\kappa_s, \kappa_s)}^{2r} \;.  \]
In particular, if  $\lam >0$ satisfies the condition  
\[ \cN_{\cT_s}(\lam)\leq n\lam \;, \]
we obtain with probability $\rho^{\otimes n}$ not less than $1-\delta$
\[ \Xi ^\varphi_{\bx,s}(\lam) \leq  C_{\kappa_s } \log^{2}(8/\delta) \;, \]
for some $ C_{\kappa_s , r} < \infty$ and 
\[ \Xi ^r_{\bx,s}(\lam) \leq  C'_{\kappa_s , r} \log^{2r}(8/\delta) \;,\]
for some $ C'_{\kappa_s , r} < \infty$. 
\end{proposition}








\checknbnotes
\checknbdrafts


\end{document}